\declaretheorem[name=Theorem,refname={Theorem,Theorems},Refname={Theorem,Theorems}]{theorem}
\declaretheorem[name=Lemma,refname={Lemma,Lemmas},Refname={Lemma,Lemmas},sibling=theorem]{lemma}
\DeclareMathOperator*{\argmax}{arg max}
\newenvironment{proof}{\emph{Proof:}}{\hfill$\square$}
\newcommand{\one}{\mathbb{I}}
\newcommand{\Reg}{\mathfrak{R}} 
\newcommand{\PP}{\mathbb{P}}
\newcommand{\E}{\mathbb{E}}
\newcommand{\Var}{\mathbb{V}}
\newcommand{\cA}{\mathcal{A}}
\newcommand{\cF}{\mathcal{F}}
\newcommand{\cZ}{\mathcal{Z}}
\newcommand{\cS}{\mathcal{S}}
\newcommand{\cI}{\mathcal{I}}
\newcommand{\cN}{\mathcal{N}}
\newcommand{\cT}{\mathcal{T}}
\newcommand{\cGP}{\mathcal{GP}}
\newcommand{\bk}{\mathbf{k}}
\newcommand{\bK}{\mathbf{K}}
\newcommand{\bI}{\mathbf{I}}
\newcommand{\bA}{\mathbf{A}}
\newcommand{\bB}{\mathbf{B}}
\newcommand{\bZ}{\mathbf{Z}}
\newcommand{\bz}{\mathbf{z}}
\newcommand{\bN}{\mathbb{N}}
\newcommand\nocaption{%
    \renewcommand\p@subfigure{}
    \renewcommand\thesubfigure{\thefigure\alph{subfigure}}
}
\title{Recovering Bandits}
\author{%
  Ciara Pike-Burke\thanks{This work was carried out while CPB was a PhD student at STOR-i, Lancaster University, UK.} \\
  Universitat Pompeu Fabra\\
  Barcelona,  Spain \\
  \texttt{c.pikeburke@gmail.com} \\
   \And
  Steffen Gr\"unew\"alder \\
  Lancaster University \\
  Lancaster, UK \\
  \texttt{s.grunewalder@lancaster.ac.uk} \\
}
\begin{document}

\maketitle

\begin{abstract}

We study the recovering bandits problem, a variant of the stochastic multi-armed bandit problem where the expected reward of each arm varies according to some unknown function of the time since the arm was last played. While being a natural extension of the classical bandit problem that arises in many real-world settings, this variation is accompanied by significant difficulties. In particular, methods need to plan ahead and estimate many more quantities than in the classical bandit setting. In this work, we explore the use of Gaussian processes to tackle the estimation and planing problem. 
We also discuss different regret definitions that let us quantify the performance of the methods. 
To improve computational efficiency of the methods, we provide an optimistic planning approximation.
We complement these discussions with regret bounds and empirical studies.

\end{abstract}

\section{Introduction}
The multi-armed bandit problem \citep{auer2002finite,thompson1933likelihood} is a sequential decision making problem, 
where, in each round $t$, we play an arm $J_t$ and receive a reward $Y_{t,J_t}$ generated from the unknown reward distribution of the arm. The aim is to maximize the total reward over $T$ rounds. Bandit algorithms have become ubiquitous in many settings such as web advertising and product recommendation. Consider, for example, suggesting items to a user on an internet shopping platform. This is modeled as a bandit problem where each product (or group of products) is an arm. 
Over time, the bandit algorithm will learn to suggest only good products.
In particular, once the algorithm learns that a product (eg. a television) has high reward, it will continue to suggest it.
However, if the user buys a television, the benefit of continuing to show it immediately diminishes, but may increase again as the television reaches the end of its lifetime. To improve customer experience (and profit), it would be beneficial for the recommendation algorithm to learn not to recommend the same product again immediately, but to wait an appropriate amount of time until the reward from that product has `recovered'. 
Similarly, in film and TV recommendation, a user may wish to wait before re-watching their favorite film, or conversely, may want to continue watching a series but will lose interest in it if they haven't seen it recently. It would be advantageous for the recommendation algorithm to learn the different reward dynamics and suggest content based on the time since it was last seen. The recovering bandits framework presented here extends the stochastic bandit problem to capture these phenomena.

In the recovering bandits problem, the expected reward of each arm is given by an unknown function of the number of rounds since it was last played. 
In particular, for each arm $j$, there is a function $f_j(z)$ that specifies the expected reward from playing arm $j$ when it has not been played for $z$ rounds. We take a Bayesian approach and assume 
that the $f_j$'s are sampled from a Gaussian process (GP) (see Figure~\ref{fig:recfuncts}). 
Using GPs allows us to capture a wide variety of functions and deal appropriately with uncertainty.
For any round $t$, let $Z_{j,t}$ be the number of rounds since arm $j$ was last played. This changes for both the played arm (it resets to 0) and also for the unplayed arms (it increases by 1) in every round.
Thus, the expected reward of every arm changes in every round, and this change depends on whether it was played.
This problem is therefore related to both restless and rested bandits \citep{whittle1988restless}.

In recovering bandits,
the reward of each arm depends on the entire sequence of past actions. This means that, even if the recovery functions were known, selecting the best sequence of $T$ arms is intractable (since, in particular, an MDP representation would be unacceptably large). 
One alternative is to select the action that maximizes the \emph{instantaneous} reward, 
without considering future decisions. This is still quite a challenge compared to the $K$-armed bandit problem, as instead of just learning the reward of each arm, we must learn recovery functions. In some cases, maximizing the instantaneous reward is not optimal. 
In particular, using knowledge of the reward dynamics, it is often possible to find a sequence of arms whose total reward is greater than that gained by playing the instantaneous greedy arms.
Thus, 
our interest lies in selecting sequences of arms to maximize the reward.

In this work, we present and analyze an Upper Confidence Bound (UCB) \citep{auer2002finite} and Thompson Sampling \citep{thompson1933likelihood} algorithm for recovering bandits. By exploiting properties of Gaussian processes, both of these accurately estimate the recovery functions and uncertainty, and use these to look ahead and select sequences of actions. This leads to strong theoretical and empirical performance. 
The paper proceeds as follows. 
In  Section~\ref{sec:lit} we discuss related work. We formally define our problem in Section~\ref{sec:def} and the regret in Section~\ref{sec:reg}. 
In Section~\ref{sec:rgpucb}, we present our algorithms
and bound their regret. We use  optimistic planning in Section~\ref{sec:opplanning} to improve computational complexity
and show empirical results in Section~\ref{sec:exp} before concluding. 

\section{Related Work} \label{sec:lit}
In the restless bandits problem, the reward distribution of any arm changes at any time, regardless of whether it is played. This problem has been studied by  \citep{whittle1988restless,slivkins2008adapting,garivier2008upper,raj2017taming,besbes2014stochastic} and others.
In the rested bandits problem, the reward distribution of an arm only changes when it is played. 
\citep{levine2017rotting,cortes2017discrepancy,bouneffouf2016multi,heidari2016tight,seznec2018rotting} study rested bandits problems with rewards that vary mainly with the number of plays of an arm. 

In recovering bandits, the rewards depend on the time since the arm was last played.
\citep{immorlica2018recharging} consider
 concave and increasing recovery functions and \citep{yi2017scalable} study recommendation algorithms with known step recovery functions. 
The closest work to ours is \citep{mintz2017nonstationary} where the expected reward of each arm depends on a state (which could be the time since the arm was played) via a parametric function. 
They use maximum likelihood estimation (although there are no guarantees of convergence) in a KL-UCB algorithm \citep{cappe2013kullback}. 
The expected frequentist regret of their algorithm is $O(\sum_j \nicefrac{\log(T)}{\delta_j^2})$ where $\delta_j$ depends on the random number of plays of arm $j$ and the minimum difference in the rewards of any arms at any time (which can be very small).
By the standard worst case analysis, the frequentist problem independent regret is $O^*(T^{2/3} K^{1/3})$, where we use the notation $O^*$ to suppress log factors. 
Our algorithms achieve $O^*(\sqrt{KT})$ Bayesian regret and require less knowledge of the recovery functions. 
\citep{mintz2017nonstationary} also provide an algorithm 
with no theoretical guarantees but improved experimental performance. In Section~\ref{sec:exp}, we show that our algorithms outperform this algorithm experimentally.

In Gaussian process bandits, there is a function, $f$, sampled from a GP and the aim is to minimize the (Bayesian) regret with respect to the maximum of $f$. 
The GP-UCB algorithm of \citep{srinivas2009gaussian} has Bayesian regret $O^*(\sqrt{T\gamma_T})$ where $\gamma_T$ is the maximal information gain (see Section~\ref{sec:rgpucb}).
By \citep{russo2014learning}, Thompson sampling has the same Bayesian regret. 
\citep{bogunovic2016time} consider GP bandits with a slowly drifting reward function 
and \citep{krause2011contextual} study contextual GP bandits.
These contexts and drifts do not depend on previous actions. 

It is important to note that all of the above approaches only look at instantaneous regret whereas in recovering bandits, it is more appropriate to consider lookahead regret (see Section~\ref{sec:reg}). We will also consider Bayesian regret.
Many naive approaches will not perform well in this problem. For example, treating each $(j,z)$ combination as an arm and using UCB \citep{auer2002finite} with $K|\cZ|$ arms leads to regret $O^*(\sqrt{KT|\cZ|})$ (see Appendix~\ref{app:nonparam}). 
Our algorithms exhibit only $\sqrt{\log|\cZ|}$ dependence on $|\cZ|$. Adversarial bandit algorithms will not do well in this setting either since they aim to minimize the regret with respect to the best constant arm, which is clearly suboptimal in recovering bandits.

\section{Problem Definition} \label{sec:def}
We have $K$ independent arms and play for $T$ rounds ($T$ is not known). For each arm $1 \leq j \leq K$ and round $1 \leq t \leq T$, denote by $Z_{j,t}$ the number of rounds since arm $j$ was last played, where $Z_{j,t} \in \mathcal{Z} =\{0, \dots, z_{\max}\}$ for finite $z_{\max} \in \mathbb{N}$. 
 If we play arm $J_t$ at time $t$ then, at time $t+1$,
\begin{align}
	Z_{j,t+1} = \begin{cases} 0 & \text{ if } J_t=j, \\
					\min\{z_{\max}, Z_{j,t}+1\} & \text{ if } J_t \neq j. \end{cases}  \label{eqn:zdef}
\end{align}
Hence, if arm $j$ has not been played for more than $z_{\max}$ steps, $Z_{j,t}$ will stay at $z_{\max}$. The $Z_{j,t}$ are random variables since they depend on our past actions. We will assume that $T \geq K|\cZ|$.

The expected reward for arm $j$ is modeled by an (unknown) recovery function, $f_j$. We assume that the $f_j$'s are sampled independently from a Gaussian processes with mean 0 and known kernel (see Figure~\ref{fig:recfuncts}).
Let ${\bf Z}_t = (Z_{1,t}, \dots, Z_{K,t})$ be the vector of covariates for each arm at time $t$. At round $t$, we observe ${\bf Z}_t$ and use this and past observations to select an arm $J_t$ to play. We then receive a noisy observation $Y_{J_t,t} = f_{J_t}(Z_{J_t,t}) + \epsilon_t$
where $\epsilon_t$ are i.i.d. $\cN(0,\sigma^2)$ random variables and $\sigma$ is known.

\citep{rasmussen2006gaussian} give an introduction to Gaussian Processes (GP). A Gaussian process gives a distribution over functions, when for every finite set $z_1, \dots, z_N$ of covariates, the joint distribution of $f(z_1), \dots, f(z_N)$ is Gaussian.
 A GP is defined by its mean function, $\mu(z) = \E[f(z)]$, and kernel function, $k(z,z') = \E[(f(z)-\mu(z))(f(z')-\mu(z'))]$.
If we place a GP prior on $f$ and observe ${\bf Y}_N = (Y_1, \dots, Y_N)^T$ at ${\bf z}_N = (z_1,\dots, z_N)^T$ where $Y_n = f(z_n) + \epsilon_n$ for $\epsilon_n$ iid $\cN(0,\sigma^2)$ noise, then the posterior distribution after $N$ observations is
 $\cGP(\mu(z; N), k(z,z'; N))$. Here for
${\bf k}_N(z) = (k(z_1,z), \dots, k(z_N,z))^T$ and positive semi-definite kernel matrix ${\bf K}_N = [k(z_i,z_j)]_{i,j=1}^N$, the posterior mean and covariance are,
\begin{align*}
 \mu(z;N) &= {\bf k}_N(z)^T ({\bf K}_N + \sigma^2 {\bf I})^{-1} {\bf y}_N,  \quad  k(z,z; N) = k(z,z') - {\bf k}_N(z)^T ({\bf K}_N + \sigma^2 {\bf I})^{-1}  {\bf k}_N(z'), 
\end{align*}
so $\sigma^2(z; N) = k(z,z; N)$. For $z \in \cZ$, the posterior distribution of $f(z)$ is $\cN(\mu(z; N), \sigma^2(z; N))$.
We consider the posterior distribution of $f_j$ for each arm at every round, when it has been played some (random) number of times. For each arm $j$, denote the posterior mean and variance of $f_j$ at $z$ after $n$ plays of the arm by $\mu_j(z; n)$ and $\sigma_j^2(z; n)$. Let $N_j(t)$ be the (random) number of times arm $j$ has been played up to time $t$.
We denote the posterior mean and variance of arm $j$ at round $t$ by, 
\begin{align*}
 \mu_t(j) &= \mu_j(Z_{j,t}; N_j(t-1)), \qquad  \text{ and} \qquad \sigma_t^2(j) = \sigma^2_j(Z_{j,t}; N_j(t-1)).
 \end{align*}

\begin{figure}
	\centering
	\begin{subfigure}{0.48\textwidth}
		\centering
 		 \includegraphics[height=4cm]{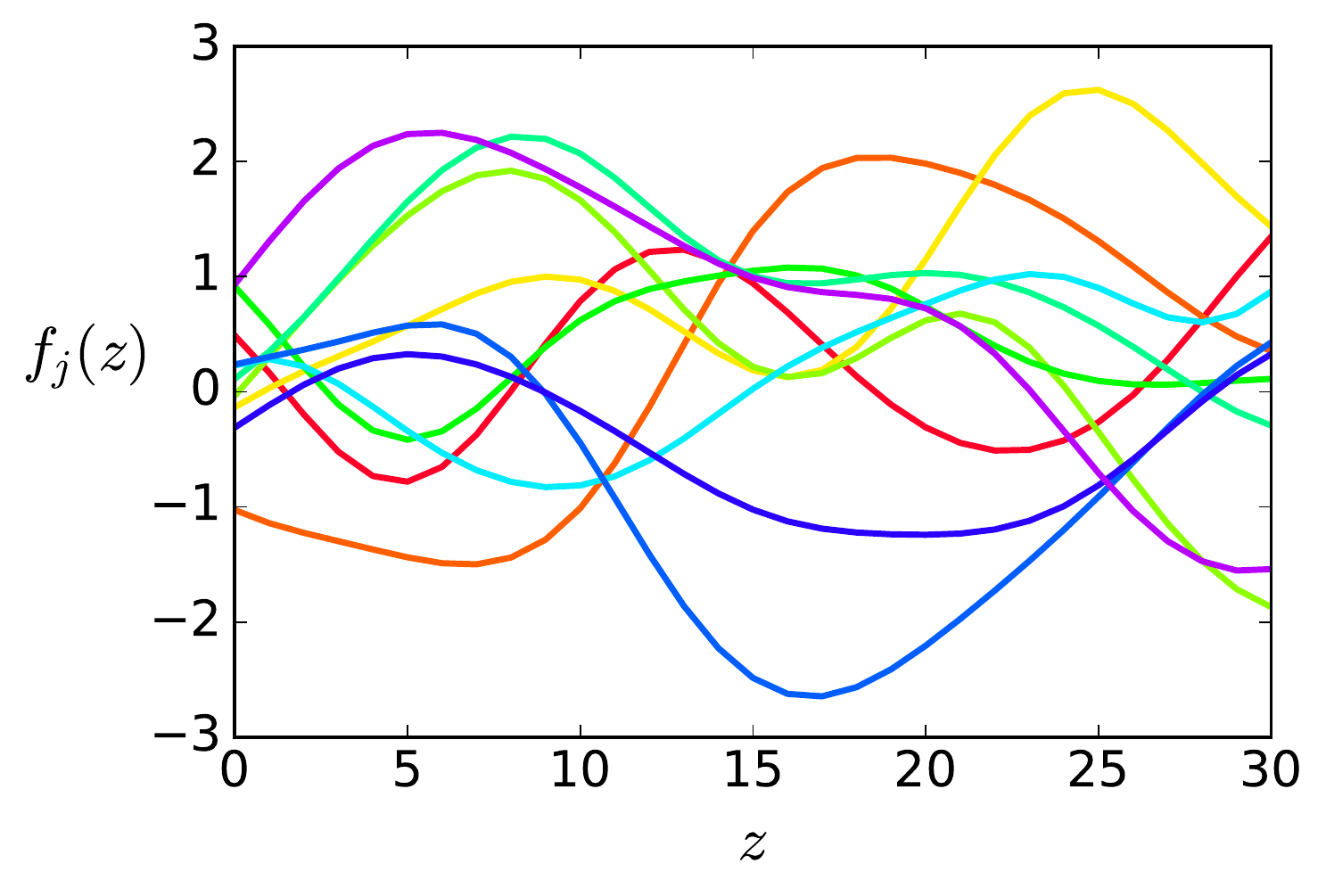}
 		\caption{Examples of the recovery functions.}
 		\label{fig:recfuncts}
	\end{subfigure}
	\quad
	\begin{subfigure}{0.48\textwidth}
		\centering
	 	\includestandalone[height=4cm]{figures/lookahead_tree}
 		\caption{An example of a $d$-step lookahead tree.}
 	 	\label{fig:tree}
	\end{subfigure}
	\caption{Illustration of recovery functions and lookahead trees.}
\end{figure}

\section{Defining the Regret} \label{sec:reg}
The regret is commonly used to measure the performance of an algorithm and is defined as the difference in the cumulative expected reward of an algorithm and an oracle. 
We will use the Bayesian regret, where the expectation is taken over the recovery curves and the actions. 
In recovering bandits, there are various choices for the oracle. We discuss some of these here. 

\paragraph{Full Horizon Regret.} 
One candidate for the oracle is the deterministic policy which knows the recovery functions and $T$, and using this selects the best sequence of $T$ arms.
This policy can be horizon dependent. Anytime algorithms, which are horizon independent, lead to policies that are stationary and do not change over time. In various settings, these stationary deterministic policies achieve the best possible regret
 \cite{puterman2005markov}. In the following, we focus on 
 the stationary deterministic (SD) oracle.
Note that it is computationally intractable to calculate this oracle in all but the easiest problems. This can be seen by formulating the problem as an MDP, with natural state-space 
of size $K^{|\cZ|}$. Techniques such as dynamic programming cannot be used unless $K$ and $|\cZ|$ are very small.

\paragraph{Instantaneous Regret.}
Another candidate for the oracle is the policy which in each round $t$, greedily plays the arm with the highest immediate reward at $\bZ_t$. These $\bZ_t$ depend on the previous actions of the oracle. Consider a policy which plays this oracle up to time $s-1$, then selects a different action at time $s$, and continues to play greedily. The cumulative reward of this policy could be vastly different to that of the oracle since they may have very different $\bZ$ values.
Therefore, defining regret in relation to this oracle may penalize us severely for early mistakes. 
Instead, one can define the regret of a policy $\pi$ with respect to an oracle which selects the best arm \emph{at the $\bZ_t$'s generated by $\pi$}. We call this the \emph{instantaneous regret}. This regret is commonly used in restless bandits and in   \citep{mintz2017nonstationary}.

\paragraph{$d$-step Lookahead Regret.}
A policy with low instantaneous regret may miss out on additional reward by not considering the impact of its actions on future $\bZ_t$'s. Looking ahead and considering the evolution of the $Z_{j,t}$'s can lead to choosing sequences of arms which are collectively better than individual greedy arms. For example, if two arms $j_1, j_2$ have similar $f_j(Z_{j,t})$ but the reward of $j_1$ doubles if we do not play it, while the reward of $j_2$ stays the same, it is better to play $j_2$ then $j_1$.
We will consider oracles which take the $\bZ_t$ generated by our algorithm and select the best sequence of $d \geq 1$ arms. We call this regret the \emph{$d$-step lookahead regret} and will use this throughout the paper.

To define this regret, we use decision trees.
 Nodes are ${\bZ}$ values and edges represent playing arms and updating $\bZ$ (see Figure~\ref{fig:tree}). Each sequence of $d$ arms is a leaf of the tree. Let $\mathcal{L}_d(\bZ)$ be the set of leaves of a $d$-step lookahead tree with root $\bZ$. For any $i \in \mathcal{L}_d(\bZ)$, denote by $M_i({\bZ})$ the expected reward at that leaf, that is the sum of the $f_j$'s along the path to $i$ at the relevant $Z_{j}$'s (see Section~\ref{sec:rgpucb}). 
The $d$-step lookahead
oracle  selects the leaf with highest $M_i(\bZ_t)$ from a given root node $\bZ_t$, denote this value by $M^*(\bZ_t)$. This leaf is the best sequence of $d$ arms from $\bZ_t$.
If we select leaf $I_t$ at time $t$, we
 play the arms to $I_t$ for $d$ steps, so select a leaf every $d$ rounds. The $d$-step lookahead regret is,
\[ \E[\Reg_T^{(d)}] =  \sum_{h=0}^{\lfloor T/d \rfloor} \E \bigg[ M^*(\bZ_{hd+1}) - M_{I_{hd+1}}(\bZ_{hd+1}) \bigg], \]
with expectation over $I_{hd+1}$ and $f_j$.
If $d=T$ or $d=1$, we get the full horizon or instantaneous regret.
We study the single play regret, $\E[\Reg_T^{(d,s)}]$, where arms can only be played once in a $d$-step lookahead, and 
the multiple play regret, $\E[\Reg_T^{(d,m)}]$, which allows multiple plays of an arm in a lookahead.
This regret is related to that in episodic reinforcement learning (ERL) \citep{jaksch2010near,osband2013more,azar2017minimax}. A key difference is that in ERL, the initial state is reset or re-sampled every $d$ steps independent of the actions taken. 
Note that the $d$-step lookahead regret can be calculated for any policy, regardless of whether the policy is designed to look ahead and select sequences of $d$ actions.

For large $d$, the total reward from the optimal $d$-step lookahead policy will be similar to that of the optimal full horizon stationary deterministic policy. Let $V_T(\pi)$ be the total reward of policy $\pi$ up to horizon $T$ and note that the optimal SD policy will be periodic by Lemma~\ref{lem:period} (Appendix~\ref{app:lookahead}). Then,
\begin{restatable}{proposition}{proplookahead}
\label{prop:lookahead}
Let $p^*$ be the period of the optimal SD policy $\pi^*$. For any $l =1, \dots, \lfloor \frac{T-z_{\max}}{p^*} \rfloor$, the optimal $(z_{\max} + lp^*)$-lookahead policy, $\pi_l^*$, satisfies,
$ V_T(\pi_l^*) \geq \big( 1-\frac{(l+1)p^* + z_{\max}}{T+p^*}\big) \frac{lp^*}{lp^* + z_{\max}} V_T(\pi^*)$.
\end{restatable}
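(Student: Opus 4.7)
My plan is to decompose the time horizon into $d$-step lookahead epochs with $d = lp^* + z_{\max}$, and show that in every such epoch the lookahead oracle earns at least $l R^*$, where $R^*$ denotes the reward that $\pi^*$ accumulates over one of its cycles. Combining this per-epoch bound with an essentially matching upper bound of the form $V_T(\pi^*) \leq \lceil T/p^* \rceil R^* \leq (T+p^*)R^*/p^*$ will then deliver the stated inequality after elementary algebra.

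First, I invoke Lemma~\ref{lem:period} to fix the cycle structure of $\pi^*$ and to define $R^*$. The key structural observation is that after $z_{\max}$ arbitrary actions the state $\bZ$ is completely determined by those actions alone: every $Z_{j,t}$ of an arm not played in the last $z_{\max}$ rounds is saturated at $z_{\max}$, so any dependence on the starting state $\bZ^{(h)}$ at the beginning of the epoch is wiped out. Hence, from \emph{any} $\bZ^{(h)}$ one can reach a prescribed cycle state of $\pi^*$ in exactly $z_{\max}$ rounds, simply by replaying the last $z_{\max}$ actions of $\pi^*$'s cycle. With this in hand I construct an explicit $d$-action sequence whose first $z_{\max}$ actions synchronise the state to a cycle state of $\pi^*$, and whose remaining $l p^*$ actions execute $l$ complete cycles of $\pi^*$. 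The second phase yields exactly $lR^*$ by periodicity, and since the $d$-step oracle chooses the best feasible leaf, $M^*(\bZ^{(h)}) \geq l R^*$ (once non-negativity or boundedness is used to control the set-up rewards).

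Writing $H = \lfloor T/d\rfloor \geq (T-d)/d$ for the number of complete epochs and summing the per-epoch bound gives $V_T(\pi_l^*) \geq H \, l R^*$. Pairing with the upper bound on $V_T(\pi^*)$ and using the identity $d + p^* = (l+1)p^* + z_{\max}$ yields
\[ V_T(\pi_l^*) \;\geq\; \frac{(T-d)\, l p^*}{d\,(T+p^*)}\, V_T(\pi^*) \;=\; \left(1 - \frac{(l+1)p^* + z_{\max}}{T+p^*}\right)\frac{l p^*}{l p^* + z_{\max}}\, V_T(\pi^*), \]
as required.

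The main obstacle is the synchronisation construction: rigorously arguing that from any $\bZ^{(h)}$ a cycle state of $\pi^*$ is reachable in at most $z_{\max}$ steps, and that the reward accrued during this set-up phase does not erode the $l R^*$ earned in the cycle phase. The first part is a direct consequence of the saturation of the $Z_{j,t}$'s at $z_{\max}$; the second typically requires either a mild non-negativity assumption on the $f_j$'s or a more careful accounting that exploits the optimality of $\pi^*$ on its cycle. The remaining boundary corrections (the incomplete final epoch of $\pi_l^*$ and the transient of $\pi^*$) are routine and are absorbed into the $(T+p^*)$ denominator.
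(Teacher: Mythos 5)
Your proof is correct and follows essentially the same route as the paper's: decompose the horizon into epochs of length $d=z_{\max}+lp^*$, reach a cycle state of $\pi^*$ within $z_{\max}$ steps, run $l$ full cycles for reward $lv^*$, and compare $\lfloor T/d\rfloor\, l v^*$ against $\lceil T/p^*\rceil v^*$. The only cosmetic difference is your synchronisation step --- replaying the last $z_{\max}$ actions of $\pi^*$ rather than the paper's explicit state-targeted construction --- and you are right that both arguments implicitly require the set-up phase's reward not to be negative, a point the paper's own proof also glosses over.
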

Hence, any algorithm with low $(z_{\max} + lp^*)$-step lookahead regret will also have high total reward. In practice, we may not know the periodicity of $\pi^*$. Moreover, if $p^*$ is too large, then looking $(z_{\max} + lp^*)$ steps ahead may be computationally challenging, and prohibit learning. Hence, we may wish to consider smaller values of $d$. 
One option is to look far enough ahead that we consider a local maximum of each recovery function. For a GP kernel with lengthscale $l$ (e.g. squared exponential or Mat{\'e}rn), this requires looking $2l$ steps ahead \citep{murray2016gaussian,rasmussen2006gaussian}.
This should still give large reward while being computationally more efficient and allowing for learning.

\section{Gaussian Processes for Recovering Bandits} \label{sec:rgpucb}
In Algorithm~\ref{alg:drgpucb} we present a UCB ($d$RGP-UCB) and Thompson Sampling ($d$RGP-TS) algorithm for the $d$-step lookahead recovering bandits problem, for both the single and multiple play case.
Our algorithms use Gaussian processes to model the recovery curves, allowing for efficient estimation and facilitating the lookahead.
For each arm $j$ we place a GP prior on $f_j$ and initialize $Z_{j,1}$ (often this initial value is known, otherwise we set it to 0). 
Every $d$ steps we construct the $d$-step lookahead tree as in Figure~\ref{fig:tree}. At time $t$, 
we select a sequence of arms by choosing a leaf $I_t$ of the tree with root $\bZ_t$. 
For a leaf $i \in \mathcal{L}_d(\bZ_t)$, let $\{J_{t+\ell}\}_{\ell=0}^{d-1}$ and$\{Z_{J_{t+\ell},t+\ell}\}_{\ell=0}^{d-1}$ be the sequences of arms and $z$ values (which are updated using \eqref{eqn:zdef}) on the path to leaf $i$. Then define the total reward at $i$ 
as, 
\[ M_i(\bZ_t) = \sum_{\ell=0}^{d-1} f_{J_{t+\ell}}(Z_{J_{t+\ell},t+\ell}). \]
Since the posterior distribution of $f_j(z)$ is Gaussian, the posterior distribution of the leaves of the lookahead tree will also be Gaussian. In particular, $\forall i \in \mathcal{L}_d(\bZ_t)$, 
$M_i(\bZ_t) \sim \cN(\eta_t(i), \varsigma_t^2(i))$ where,
\begin{align}
 \eta_t(i) &=  \sum_{\ell=0}^{d-1} \mu_{t}(J_{t+\ell}), \quad  \varsigma^2_t(i) = \sum_{\ell,q =0}^{d-1} \text{cov}_{t}(f_{J_{t+\ell}}(Z_{J_{t+\ell}, t+\ell}), f_{J_{t+q}}(Z_{J_{t+q}, t+q})), \text{ and,  } \label{eqn:varm}
 \end{align}
$\text{cov}_{t}(f_{J_{t+\ell}}(Z_{J_{t+\ell}, t+\ell}), f_{J_{t+q}}(Z_{J_{t+q}, t+q}))= \one \{J_{t+\ell} = J_{t+q}\} k_{J_{t+\ell}}(Z_{J_{t+\ell}, t+\ell}, Z_{J_{t+q}, t+q}; N_{J_{t+\ell}}(t))$. 
Hence, using GPs enables us to accurately estimate the reward and uncertainty at the leaves.

\begin{algorithm}[tb]
   \caption{$d$-step lookahead UCB and Thompson Sampling}
   \label{alg:drgpucb}
\begin{algorithmic}
   \STATE {\bfseries Input:} $\alpha_t$ from \eqref{eqn:alphadef} (for UCB). 
   \STATE {\bfseries Initialization:} Define $\cT_d = \{1, d+1, 2d+1 , \dots \}$. For all arms $j \in A$, set $Z_{j,1}= 0$ (optional). 
   \FOR{$t \in \cT_d$}
   	\STATE{If $t\geq T$ {\bf break}. Else, construct the $d$-step lookahead tree. Then,
   			\begin{align*} 
   			\begin{array}{l}
   			\text{If UCB: } 
   			\\ I_t = \argmax \limits_{ i \in \mathcal{L}_d(\bZ_t)} \bigg \{\eta_t(i) + \alpha_t \varsigma_t(i) \bigg \},
   			\end{array}
   			\begin{array}{lll}
        							\text{If TS:} \hspace{-7pt}&  \text{(i)}& \hspace{-10pt} \forall j \in A, \text{ sample } \tilde f_j \text{ from the posterior at } \mathbf{Z}_{j,t}^{(d)}. \\
        							 &\text{(ii)} & \hspace{-10pt} \forall i \in \mathcal{L}_d(\bZ_t), \tilde \eta_t(i) = \sum_{l=0}^{d-1} \tilde f_{J_{t+\ell}}(Z_{J_{t+\ell}, t+\ell}) \\
        							 &\text{(iii)}& \hspace{-5pt}  I_t = \argmax_{i \in \mathcal{L}_d(\bZ_t)} \{\tilde \eta_t(i) \}
       						 \end{array}
			\end{align*}}
   	\FOR{$\ell=0, \dots, d-1$}
   		\STATE{Play $\ell$th arm to $I_t$, $J_\ell$, and get reward $Y_{J_{\ell},t+\ell}$.}
   		\STATE{ Set $Z_{J_{\ell},t+\ell+1} = 0$. For all $j \neq J_{\ell}$, set $Z_{j,t+\ell+1} = \min\{Z_{j,t+\ell}+1, z_{\max}\}$.}
	\ENDFOR
	   \STATE{Update the posterior distributions of the played arms.} 
   \ENDFOR
\end{algorithmic}
\end{algorithm}

For $d$RGP-UCB, we construct upper confidence bounds on each $M_i(\bZ_t)$ using Gaussianity. We then select the leaf $I_t$ with largest upper confidence bound at time $t$. That is,
\begin{align}
I_t &= \argmax_{1 \leq i \leq K^d} \{ \eta_t(i) + \alpha_t \varsigma_t(i) \} \qquad \text{ where } \qquad \alpha_t = \sqrt{2 \log((K|\cZ|)^d (t+d-1)^2)}. \label{eqn:alphadef}
\end{align} 
In $d$RGP-TS, we select a sequence of $d$ arms by sampling the recovery function of each arm $j$ at $\bZ_{j,t}^{(d)} = (Z_{j,t}, \dots, Z_{j,t}+d-1, 0, \dots, d-1)^T$ and then calculating the `reward' of each node using these sampled values. Denote the sampled reward of node $i$ by $\tilde \eta_t(i)$. We choose the leaf $I_t$ with highest $\tilde \eta_t(i)$. 
In both $d$RGP-UCB and $d$RGP-TS, by the lookahead property, we will only play an arm at a large $Z_{j,t}$ value if it has high reward, or high uncertainty there. We play the sequence of $d$ arms indicated by $I_t$ over the next $d$ rounds. 
We then update the posteriors and repeat the process. 

We analyze the regret in the single and multiple play cases separately.
Studying the single play case first allows us to gain more insights about the difficulty of the problem. Indeed, from our analysis we observe that the multiple play case is more difficult since we may loose information from not updating the posterior between plays of the same arm. 
All proofs are in the appendix.
The regret of our algorithms will depend on the GP kernel
through the maximal information gain. For a set $\cS$ of covariates and observations $Y_{\cS} = [f(z) + \epsilon_z]_{z \in \cS}$, we define the \emph{information gain}, $\cI(Y_{\cS} ;f)= H(Y_{\cS}) - H(Y_{\cS}|f)$
where $H(\cdot)$ is the entropy. 
As in \citep{srinivas2009gaussian}, 
we consider the \emph{maximal information gain} from $N$ samples, $\gamma_N$. If $z_t \in \cS$ is played at time $t$, then,
\begin{align}
\cI(Y_{\cS},f) &= \frac{1}{2} \sum_{t=1}^{|\cS|} \log(1+ \sigma^{-2} \sigma^2(z_t; t-1)),  \qquad \text{ and, } \qquad \gamma_N = \max_{\cS \subset \cZ^N: |\cS|=N} \cI(Y_{\cS};f).  \label{eqn:gammadef}
\end{align}
Theorem 5 of \cite{srinivas2009gaussian} gives bounds on $\gamma_T$ for some kernels. We apply these results using the fact that the dimension, $D$, of the input space is $1$. 
For any lengthscale, $\gamma_T = O(\log(T))$ for linear kernels, $\gamma_T = O(\log^2(T))$ for squared exponential kernels, and $\gamma_T = O(T^{2/(2\nu+2)} \log(T))$ for Mat\'ern($\nu$).

\subsection{Single Play Lookahead}
In the single play case, each am can only be played once in the $d$-step lookahead. This simplifies the variance in \eqref{eqn:varm} since the arms are independent. For any leaf $i$ corresponding to playing arms $J_{t}, \dots, J_{t+d-1}$ (at the corresponding $z$ values), $\varsigma_t^2(i) = \sum_{\ell=0}^{d-1} \sigma^2_t(J_{t+\ell})$.
This involves the posterior variances at time $t$. 
However, as we cannot repeat arms, if we play arm $j$ at time $t+\ell$ for $0\leq \ell \leq d-1$, it cannot have been played since time $t$, so its posterior is unchanged. Using this and \eqref{eqn:gammadef}, we relate the variance of $M_{I_t}(\bZ_t)$ to 
the information gain about the $f_j$'s.
We get the following regret bounds.

\begin{restatable}{theorem}{thmregds}
\label{thm:regrgpds}
The $d$-step single play lookahead regret of $d$RGP-UCB satisfies,
\[ \E[\Reg_T^{(d,s)}] \leq O(\sqrt{KT \gamma_T \log(TK|\cZ|)}). \]
\end{restatable}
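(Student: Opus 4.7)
The plan is to adapt the optimism-based analysis of GP-UCB from \citep{srinivas2009gaussian} to the $d$-step lookahead setting. Write $U_t(i) = \eta_t(i) + \alpha_t \varsigma_t(i)$. Since each $f_j$ has a GP prior and the noise is Gaussian, the posterior of $M_i(\bZ_t)$ given the history $\cF_t$ (which includes $\bZ_t$) is $\cN(\eta_t(i), \varsigma_t^2(i))$. Combined with the greedy selection rule $U_t(I_t) \geq U_t(I^*)$, the standard optimism argument yields $M^*(\bZ_t) - M_{I_t}(\bZ_t) \leq 2\alpha_t \varsigma_t(I_t)$ on a ``good'' event where the Gaussian confidence bounds simultaneously hold for both the selected leaf and the optimal leaf.

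To build the good event I would union-bound over all length-$d$ labeled paths $\pi = ((j_1, z_1), \ldots, (j_d, z_d)) \in ([K] \times \cZ)^d$: for any such $\pi$, $M_\pi := \sum_\ell f_{j_\ell}(z_\ell)$ has a Gaussian posterior. A Gaussian tail bound for each $M_\pi$, summed over the $(K|\cZ|)^d$ paths and over the $\lceil T/d \rceil$ decision rounds in $\cT_d$ (using summability of $(t+d-1)^{-2}$), is exactly what justifies the choice $\alpha_t = \sqrt{2\log((K|\cZ|)^d (t+d-1)^2)}$ and makes the total failure probability $O(1)$-summable. Since the optimal leaf is itself such a path, optimism holds on this event.

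Next I would control $\sum_{t\in\cT_d}\alpha_t\varsigma_t(I_t)$. The single-play restriction is essential here: the arms $J_{t+\ell}$ within a lookahead window are distinct and the $f_j$ are independent, so all cross-covariance terms in \eqref{eqn:varm} vanish, $\varsigma_t^2(I_t) = \sum_{\ell=0}^{d-1} \sigma_t^2(J_{t+\ell})$, and each summand equals the one-step-ahead posterior variance at the actual round of play (because $N_{J_{t+\ell}}$ is unchanged between rounds $t$ and $t+\ell-1$). Cauchy-Schwarz then gives $\sum_{t\in\cT_d}\varsigma_t(I_t) \leq \sqrt{T/d}\,\sqrt{\sum_{t\in\cT_d}\varsigma_t^2(I_t)}$, and the inner sum is simply the aggregate of the per-play posterior variances across all $T$ plays. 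The standard inequality $x \leq \log(1+x)/\log(1+\sigma^{-2})$ on the relevant range converts this to $\frac{2}{\log(1+\sigma^{-2})}\sum_j \cI(Y_{\cS_j}; f_j)$; by independence of the $f_j$ each information gain is bounded by $\gamma_{N_j(T)} \leq \gamma_T$, giving $O(K\gamma_T)$.

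Combining these with $\alpha_T = O(\sqrt{d\log(TK|\cZ|)})$, the two $\sqrt{d}$ factors cancel between $\alpha_T$ and $\sqrt{T/d}$, producing the claimed $O(\sqrt{KT\gamma_T\log(TK|\cZ|)})$. The complementary bad-event contribution is handled by a Cauchy-Schwarz using the summability of $\Pr(E_t^c)$ and the finite second moment of $M^*(\bZ_t) - M_{I_t}(\bZ_t)$ under the Gaussian prior, adding only $O(1)$. The main obstacle I anticipate is calibrating the union bound at the right granularity: one must union-bound over the $(K|\cZ|)^d$ labeled paths (not just over the $\leq K^d$ realizable leaves at $\bZ_t$) so that the $\sqrt{d}$-inflation of $\alpha_T$ is precisely the one needed to cancel the $\sqrt{T/d}$ factor from Cauchy-Schwarz, and one must verify that the single-play assumption genuinely kills every cross-covariance term in \eqref{eqn:varm} so that the variance sum reduces cleanly to a per-arm information gain without any extra $\sqrt{d}$ penalty.
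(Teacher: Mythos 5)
Your proposal is correct and follows the same overall skeleton as the paper's proof: a per-epoch optimism bound of order $\alpha_t\varsigma_t(I_t)$, the observation that the single-play restriction kills all cross-covariances in \eqref{eqn:varm} and leaves the posterior of each $J_{t+\ell}$ unchanged between rounds $t$ and $t+\ell$, Cauchy--Schwarz over the $\lfloor T/d\rfloor$ decision epochs, the conversion of $\sum_t\sigma_t^2(J_t)$ into $O(K\gamma_T)$ via the information gain (the paper's Lemma~\ref{lem:boundsigma}), and the cancellation of the $\sqrt{d}$ in $\alpha_T$ against the $\sqrt{T/d}$ from Cauchy--Schwarz. Your union bound over all $(K|\cZ|)^d$ labeled paths is exactly the granularity used in the paper's Lemma~\ref{lem:dreg}. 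The one genuine difference is how the concentration step is executed. You build a high-probability ``good event'' on which both confidence bounds hold, pay a factor $2\alpha_t\varsigma_t(I_t)$, and then control the bad event separately via Cauchy--Schwarz against the second moment of the gap. The paper instead works purely in expectation, Russo--Van-Roy style: it bounds $\E[(M_{I_t^*}-\eta_t(I_t^*)-\alpha_t\varsigma_t(I_t^*))^+\mid\cF_{t-1}]$ directly by integrating the Gaussian tail (Lemma~\ref{lem:concM}), and uses that $\E[\eta_t(I_t)+\alpha_t\varsigma_t(I_t)-M_{I_t}(\bZ_t)\mid\cF_{t-1}]=\alpha_t\varsigma_t(I_t)$ \emph{exactly}, because $\eta_t(I_t)$ is the posterior mean and $I_t$ is $\cF_{t-1}$-measurable. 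This buys a cleaner argument with no factor of $2$, no bad-event bookkeeping, and no need to bound the second moment of $M^*(\bZ_t)-M_{I_t}(\bZ_t)$; your route is equally valid for Bayesian regret but slightly heavier, and you would need to check that the second moment of the max over $(K|\cZ|)^d$ Gaussians stays $O(d\log(K|\cZ|))$ so the bad-event term remains lower order.
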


\begin{restatable}{theorem}{thmregdsts}
\label{thm:regdsts}
The $d$-step single play lookahead regret of $d$RGP-TS satisfies,
\[ \E[\Reg_T^{(d,s)}] \leq O(\sqrt{KT \gamma_T \log(TK|\cZ|)}). \]
\end{restatable}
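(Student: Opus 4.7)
The plan is to reduce the Bayesian regret of $d$RGP-TS to that of $d$RGP-UCB via the Russo--Van Roy coupling. The key observation is that under Thompson Sampling, conditional on the history $\mathcal{H}_t$ at the start of block $t \in \cT_d$, the chosen leaf $I_t$ and the true optimal leaf $I_t^* = \argmax_{i \in \mathcal{L}_d(\bZ_t)} M_i(\bZ_t)$ are identically distributed, since both are drawn from the posterior distribution of the argmax. Taking $U_t(i) = \eta_t(i) + \alpha_t \varsigma_t(i)$ with the same $\alpha_t$ as in \eqref{eqn:alphadef}, which is $\mathcal{H}_t$-measurable, this yields $\E[U_t(I_t) \mid \mathcal{H}_t] = \E[U_t(I_t^*) \mid \mathcal{H}_t]$, so the per-block Bayesian regret decomposes as
\begin{align*}
\E[M_{I_t^*}(\bZ_t) - M_{I_t}(\bZ_t) \mid \mathcal{H}_t]
= \E[M_{I_t^*}(\bZ_t) - U_t(I_t^*) \mid \mathcal{H}_t] + \E[U_t(I_t) - M_{I_t}(\bZ_t) \mid \mathcal{H}_t].
\end{align*}

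The first term is an anti-concentration term: conditional on $\mathcal{H}_t$, each $M_i(\bZ_t) - \eta_t(i)$ is zero-mean Gaussian with variance $\varsigma_t^2(i)$, and a union bound over the at most $(K|\cZ|)^d$ distinct leaves together with the choice $\alpha_t^2 = 2\log((K|\cZ|)^d(t+d-1)^2)$ gives $\E[(M_{I_t^*}(\bZ_t) - U_t(I_t^*))_+] = O((t+d-1)^{-2})$, which is summable over $t \in \cT_d$ and contributes only a constant. For the second term, because the sampled $\tilde f$ driving TS is independent of the true $f_j$ given $\mathcal{H}_t$, we have $\E[M_{I_t}(\bZ_t) \mid \mathcal{H}_t, I_t] = \eta_t(I_t)$, so this term equals $\alpha_t \E[\varsigma_t(I_t) \mid \mathcal{H}_t]$.

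It remains to bound $\sum_{t \in \cT_d} \alpha_t \varsigma_t(I_t)$. Cauchy--Schwarz across the $|\cT_d| \leq T/d$ blocks gives $\sum_{t \in \cT_d} \alpha_t \varsigma_t(I_t) \leq \alpha_T \sqrt{T/d} \cdot \sqrt{\sum_{t \in \cT_d} \varsigma_t^2(I_t)}$. In the single-play case $\varsigma_t^2(I_t) = \sum_{\ell=0}^{d-1} \sigma_t^2(J_{t+\ell})$, and since no arm is replayed within a block, $\sigma_t^2(J_{t+\ell}) = \sigma_{t+\ell}^2(J_{t+\ell})$, so the sum telescopes to $\sum_{s=1}^T \sigma_s^2(J_s)$. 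The standard information-gain argument of \cite{srinivas2009gaussian} applied per arm (using $x \leq (1/\log(1+\sigma^{-2}))\log(1+\sigma^{-2}x)$) then bounds this by $O(K\gamma_T)$. With $\alpha_T = O(\sqrt{d\log(TK|\cZ|)})$, the factors $\sqrt{d}$ and $\sqrt{1/d}$ cancel and we obtain $O(\sqrt{KT\gamma_T \log(TK|\cZ|)})$, matching the bound for $d$RGP-UCB in Theorem~\ref{thm:regrgpds}.

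The main obstacle is the anti-concentration step: there are $(K|\cZ|)^d$ leaves, forcing $\alpha_t$ to grow like $\sqrt{d\log(K|\cZ|)}$, and this blow-up must be compensated exactly by the $\sqrt{1/d}$ savings from only re-selecting a leaf every $d$ rounds. Handling the correlations across overlapping leaves (which in fact tighten the union bound rather than spoil it) and confirming that the $\sqrt{d}$ cancellation holds is where the single-play structure is essential; the multiple-play case will require a more delicate treatment because replayed arms create dependencies between the $\sigma_t^2(J_{t+\ell})$ within a block.
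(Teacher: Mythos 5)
Your proposal is correct and follows essentially the same route as the paper: the Russo--Van Roy decomposition through a posterior-measurable upper confidence function (the paper's Lemma~\ref{lem:dregts}, citing Proposition 1 of \cite{russo2014learning}), the Gaussian tail union bound over the $(K|\cZ|)^d$ leaf/$z$ combinations, and then the identical Cauchy--Schwarz plus single-play telescoping plus information-gain argument used for $d$RGP-UCB in Theorem~\ref{thm:regrgpds}. Your closing remarks about the $\sqrt{d}$ cancellation and the extra difficulty in the multiple-play case also match the paper's discussion.
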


\subsection{Multiple Play Lookahead}
When arms can be played multiple times in the $d$-step lookahead, the problem is more difficult since 
we cannot use feedback from plays within the same lookahead to inform decisions.
It is also harder to relate $\varsigma^2_t(I_t)$ to the information gain about each $f_j$. In particular, $\varsigma^2_t(I_t)$ contains covariance terms and is defined using the posteriors at time $t$. On the other hand, $\gamma_T$ is defined in terms of the posterior variances when each arm is played. These may be different to those at time $t$ if an arm is played multiple times in the lookahead.
However, using the fact that the posterior covariance matrix is positive semi-definite, $2k_j(z_1, z_2; n) \leq \sigma_j^2(z_1; n) + \sigma_j^2(z_2; n)$, so we can bound $\varsigma_t^2(I_t) \leq 3  \sum_{\ell=0}^{d-1} \sigma_t^2(J_{t+\ell})$.
Then, the change in the posterior variance of a repeated arm 
can be bounded by the following lemma. 

\begin{restatable}{lemma}{lemsigmadiff}
\label{lem:sigmadiff}
For any $z \in \cZ$, arm $j$ and $n \in \mathbb{N}, n\geq 1$, let $Z_j^{(n)}$ be the $z$ value at the $n$th play of arm $j$. Then, 
$\sigma_{j}^2(z; n-1) - \sigma_{j}^2(z; n) \leq \sigma^{-2}{\sigma_j^2(Z_j^{(n)}; n-1) }$.
\end{restatable}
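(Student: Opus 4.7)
The plan is to view the transition from $n-1$ to $n$ observations as a single-sample Bayesian update and apply the standard rank-one update formula for the GP posterior variance. Writing $z_n := Z_j^{(n)}$ for the covariate of the $n$-th play of arm $j$, conditioning first on the previous $n-1$ observations and then on the new one at $z_n$, the Schur-complement identity gives
\begin{align*}
\sigma_j^2(z; n) = \sigma_j^2(z; n-1) - \frac{k_j(z, z_n; n-1)^2}{\sigma_j^2(z_n; n-1) + \sigma^2},
\end{align*}
where $k_j(\cdot, \cdot; n-1)$ denotes the posterior covariance kernel after $n-1$ observations. The lemma is therefore equivalent to upper bounding the fraction on the right-hand side by $\sigma^{-2}\sigma_j^2(z_n; n-1)$.

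From here the proof is a two-step bound. First, replacing the denominator by its trivial lower bound $\sigma^2$ yields
\begin{align*}
\sigma_j^2(z; n-1) - \sigma_j^2(z; n) \leq \sigma^{-2}\, k_j(z, z_n; n-1)^2.
\end{align*}
Second, since the posterior kernel $k_j(\cdot, \cdot; n-1)$ is itself positive semi-definite, Cauchy–Schwarz applied to it gives $k_j(z, z_n; n-1)^2 \leq \sigma_j^2(z; n-1)\, \sigma_j^2(z_n; n-1)$. Combined with the standard GP-bandit normalisation $k_j(z,z) \leq 1$, together with monotonicity of the posterior variance in the number of observations (so that $\sigma_j^2(z; n-1) \leq k_j(z,z) \leq 1$), this kills the extra factor and delivers the stated inequality.

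The proof is essentially a three-line calculation, so the only real subtlety is tracking where the $\sigma^{-2}$ scaling comes from. The one point requiring care — invoking the normalisation $k_j(z,z) \leq 1$ to absorb the leftover $\sigma_j^2(z; n-1)$ from Cauchy–Schwarz — is a standard convention inherited from \citep{srinivas2009gaussian} and consistent with how $\gamma_N$ is defined in \eqref{eqn:gammadef}. If the paper were to work with unnormalised kernels, the statement would carry a hidden factor $\sup_z k_j(z,z)$, but the structure of the argument would be unchanged.
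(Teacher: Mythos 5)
Your proposal is correct and follows essentially the same route as the paper: the exact rank-one (Schur-complement) update for the posterior variance, Cauchy--Schwarz on the positive semi-definite posterior kernel, the normalisation $\sigma_j^2(z;n-1)\leq k_j(z,z)\leq 1$, and lower-bounding the denominator by $\sigma^2$. The only cosmetic differences are that the paper derives the rank-one identity explicitly via block matrix inversion rather than citing it, and applies the denominator bound after (rather than before) Cauchy--Schwarz.
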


We get the following regret bounds for $d$RGP-UCB and $d$RGP-TS.  Due to not updating the posterior between repeated plays of an arm, they increase by a factor of $\sqrt{d}$ compared to the single play case. 
Thus, although by Proposition~\ref{prop:lookahead} larger $d$ leads to higher reward, it makes the learning problem harder.

\begin{restatable}{theorem}{thmregdm}
\label{thm:regdm}
The $d$-step multiple play lookahead regret of $d$RGP-UCB satisfies,
\[ \E[\Reg_T^{(d,m)}] \leq O\left(\sqrt{KT\gamma_T \log((K|\cZ|)^d T)}\right). \]
\end{restatable}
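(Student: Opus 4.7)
The plan is to follow the standard GP-UCB regret analysis, adapted to the $d$-step lookahead setting and to the extra bookkeeping required when an arm may be played several times within a single block. First, by Gaussianity of $M_i(\bZ_t)$ with parameters $\eta_t(i),\varsigma_t^2(i)$ from \eqref{eqn:varm}, a Gaussian tail bound together with a union bound over the at most $(K|\cZ|)^d$ distinct length-$d$ arm-and-$z$ sequences (and over decision times $t\in\cT_d$) shows that the choice $\alpha_t$ in \eqref{eqn:alphadef} yields a good event on which $|M_i(\bZ_t)-\eta_t(i)|\leq\alpha_t\varsigma_t(i)$ simultaneously for every $t\in\cT_d$ and every $i\in\mathcal{L}_d(\bZ_t)$, while its complement contributes only a lower-order $O(1)$ term to the Bayesian regret.

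On the good event, the standard UCB comparison (using the optimism of the chosen leaf $I_t$) gives $M^*(\bZ_t)-M_{I_t}(\bZ_t)\leq 2\alpha_t\varsigma_t(I_t)$. Summing over the $\lfloor T/d\rfloor$ decision rounds and applying Cauchy--Schwarz together with Jensen's inequality to move the expectation inside the square root,
\[\E[\Reg_T^{(d,m)}]\leq 2\alpha_T\sqrt{\tfrac{T}{d}\,\E\Bigl[\textstyle\sum_{t\in\cT_d}\varsigma_t^2(I_t)\Bigr]}+O(1).\]

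The core step is bounding $\sum_{t\in\cT_d}\varsigma_t^2(I_t)$ in terms of $\gamma_T$. Unlike in the single-play case, $\varsigma_t^2(I_t)$ contains cross covariance terms, and every variance in it is read off the posterior as it stands at the \emph{start} of the block, so it can be stale. The PSD bound $2k_j(z_1,z_2;n)\leq\sigma_j^2(z_1;n)+\sigma_j^2(z_2;n)$ noted before Lemma~\ref{lem:sigmadiff} gives $\varsigma_t^2(I_t)\leq 3\sum_{\ell=0}^{d-1}\sigma_t^2(J_{t+\ell})$ and removes the covariances. For the staleness, if arm $j=J_{t+\ell}$ is played for the $(r+1)$th time in its block at time $t+\ell$, telescoping Lemma~\ref{lem:sigmadiff} from $N_j(t-1)$ to $N_j(t-1)+r$ plays gives
\[\sigma_t^2(J_{t+\ell})\leq\sigma_{t+\ell}^2(J_{t+\ell})+\sigma^{-2}\sum_{k=1}^{r}\sigma_j^2\bigl(Z_j^{(N_j(t-1)+k)};N_j(t-1)+k-1\bigr).\]
Since each earlier in-block play appears as a correction for at most $d-1$ later in-block plays, summing over all $(t,\ell)$ collapses everything onto the sequence of play-time variances: $\sum_{t\in\cT_d}\sum_{\ell=0}^{d-1}\sigma_t^2(J_{t+\ell})\leq\bigl(1+\sigma^{-2}(d-1)\bigr)\sum_{s=1}^T\sigma_s^2(J_s)$.

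The play-time sum is then handled exactly as in \citep{srinivas2009gaussian}: splitting across arms and applying \eqref{eqn:gammadef} gives $\sum_{s=1}^T\sigma_s^2(J_s)\leq\frac{2}{\log(1+\sigma^{-2})}\sum_{j=1}^K\gamma_{N_j(T)}\leq\frac{2K\gamma_T}{\log(1+\sigma^{-2})}$ by monotonicity of $\gamma_N$. With $\alpha_T^2=O(\log((K|\cZ|)^dT))$, the factor of $d$ from the staleness correction exactly cancels the $1/d$ from Cauchy--Schwarz, producing the claimed $O(\sqrt{KT\gamma_T\log((K|\cZ|)^dT)})$. The main obstacle I anticipate is the third paragraph: turning $\varsigma_t^2(I_t)$---a quadratic form in correlated Gaussians whose component variances are frozen at the decision time---into a clean, telescoping sum of per-play variances so that the standard information-gain machinery can be applied arm by arm without paying more than the stated $\sqrt{d}$ factor.
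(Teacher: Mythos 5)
Your proposal is correct and follows essentially the same route as the paper: the same union-bound/optimism step yielding a per-block regret of order $\alpha_t\varsigma_t(I_t)$ (the paper phrases it via the expected positive part of the Gaussian tail rather than a good event, which is equivalent), the same PSD bound $\varsigma_t^2(I_t)\leq 3\sum_{\ell}\sigma_t^2(J_{t+\ell})$, the same telescoping of Lemma~\ref{lem:sigmadiff} to convert stale start-of-block variances into play-time variances at a cost of a factor $O(d/\sigma^2)$ (the paper's Lemma~\ref{lem:varbound}), and the same information-gain and Cauchy--Schwarz assembly with the $d$ cancelling against the $T/d$ blocks.
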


\begin{restatable}{theorem}{thmregtsdm}
\label{thm:regtsdm}
The $d$-step multiple play lookahead regret of $d$RGP-TS satisfies,
\[ \E[\Reg_T^{(d,m)}] \leq O\left(\sqrt{KT\gamma_T \log((K|\cZ|)^d T)}\right). \]
\end{restatable}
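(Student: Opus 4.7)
The plan is to use the Russo--Van Roy \citep{russo2014learning} reduction from Thompson sampling to UCB, then invoke the variance analysis already developed for $d$RGP-UCB in the multi-play case (Theorem~\ref{thm:regdm}). Let $H_t$ denote the history at the start of block $t\in\cT_d$, and let $I_t^{*}=\argmax_{i\in\mathcal{L}_d(\bZ_t)}M_i(\bZ_t)$. Because $d$RGP-TS draws $\tilde f_j$ from the posterior and then sets $I_t=\argmax_i\tilde\eta_t(i)$, the conditional distributions of $I_t$ and $I_t^{*}$ given $H_t$ are identical. Hence, for the $H_t$-measurable function $U_t(i)=\eta_t(i)+\alpha_t\varsigma_t(i)$ with $\alpha_t$ as in \eqref{eqn:alphadef}, one has $\E[U_t(I_t^{*})\mid H_t]=\E[U_t(I_t)\mid H_t]$, and the per-block Bayesian regret decomposes as
\[
\E[M^{*}(\bZ_t)-M_{I_t}(\bZ_t)] = \E[M_{I_t^{*}}(\bZ_t)-U_t(I_t^{*})] + \E[U_t(I_t)-M_{I_t}(\bZ_t)].
\]

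Conditional on $H_t$, each $M_i(\bZ_t)$ is $\cN(\eta_t(i),\varsigma_t^2(i))$. A Gaussian tail bound combined with a union bound over the at most $(K|\cZ|)^d$ possible leaf labelings gives $\Prob{M_i(\bZ_t)>U_t(i)\mid H_t}\leq(t+d-1)^{-2}$ for every leaf, and the standard Mills-ratio computation of the expected overshoot (carried out in the proof of Theorem~\ref{thm:regdm}) shows that $\sum_{t\in\cT_d}\E[(M_{I_t^{*}}(\bZ_t)-U_t(I_t^{*}))^+]$ is of lower order. For the second term, the tower property yields $\E[M_{I_t}(\bZ_t)\mid H_t,I_t]=\eta_t(I_t)$, so $\E[U_t(I_t)-M_{I_t}(\bZ_t)]=\alpha_t\,\E[\varsigma_t(I_t)]$. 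Thus, up to lower-order terms, the Bayesian regret is governed by $\sum_{t\in\cT_d}\alpha_t\,\E[\varsigma_t(I_t)]$, which is exactly the quantity bounded in the UCB multi-play proof.

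To close the argument I would replay that bound: Cauchy--Schwarz over $|\cT_d|\leq T/d$ blocks, the PSD covariance inequality $\varsigma_t^2(I_t)\leq 3\sum_{\ell=0}^{d-1}\sigma_t^2(J_{t+\ell})$ from the remark preceding Lemma~\ref{lem:sigmadiff}, and iterated use of Lemma~\ref{lem:sigmadiff} to trade each stale variance $\sigma_t^2(J_{t+\ell})$ (which uses $N_{J_{t+\ell}}(t-1)$) for the true posterior variance at the time of the actual play plus correction terms, together yield $\sum_{t\in\cT_d}\varsigma_t^2(I_t)\leq O(d)\sum_{s=1}^{T}\sigma_{j_s}^{2}(Z_{j_s,s};N_{j_s}(s-1))$, where $j_s$ is the arm played at step $s$. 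Splitting this sum by arm and applying the information gain relation \eqref{eqn:gammadef} of \citep{srinivas2009gaussian} bounds it by $O(K\gamma_T)$, so $\sum_{t\in\cT_d}\varsigma_t(I_t)\leq O(\sqrt{KT\gamma_T})$. Multiplying by $\alpha_T=O(\sqrt{\log((K|\cZ|)^d T)})$ produces the advertised bound. The main obstacle is the careful telescoping when iterating Lemma~\ref{lem:sigmadiff}: one must verify that within-block repetitions inflate the variance sum by only the factor $d$ (which Cauchy--Schwarz then cancels against the $\sqrt{1/d}$ coming from $|\cT_d|\leq T/d$) and not $d^2$, relying on the fact that each correction term from the lemma is itself a true-time posterior variance at a later play and therefore contributes only once per repetition.
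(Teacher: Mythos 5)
Your proposal is correct and follows essentially the same route as the paper: the Russo--Van Roy identity reduces the Thompson sampling regret to the UCB confidence-width decomposition (the paper's Lemma~\ref{lem:dregts}), after which the multiple-play variance bound via the PSD covariance inequality and iterated application of Lemma~\ref{lem:sigmadiff} (packaged in the paper as Lemma~\ref{lem:varbound}), Cauchy--Schwarz over the $T/d$ blocks, and the information-gain bound of Lemma~\ref{lem:boundsigma} give the stated rate exactly as in Theorem~\ref{thm:regdm}. Your observation that the factor $d$ from the stale variances cancels against the $1/d$ from the number of blocks is precisely the accounting the paper performs.
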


\subsection{Instantaneous Algorithm}
If we set $d=1$ in Algorithm~\ref{alg:drgpucb}, we obtain algorithms for minimizing the instantaneous regret. In this case, $\cT=\{1,\dots,T\}$ and there are $K$ leaves of the $1$-step lookahead tree, so each $M_i(\bZ_t)$ corresponds to one arm. One arm is selected and played each time step
so $\eta_t(i) = \mu_t(j)$, $\varsigma_t^2(i) = \sigma_t^2(j)$ for some $j$. 
 For the UCB, we define $\alpha_t$ as in \eqref{eqn:alphadef} with $d=1$.
We get the following regret,

\begin{restatable}{corollary}{corregrgpucb}
\label{cor:regrgpucb}
The instantaneous regret of $1$RGP-UCB  and $1$RGP-TS up to horizon $T$ satisfy
\[ \E[\Reg_T^{(1)}] \leq O(\sqrt{K T \gamma_T \log(TK |\cZ|)}). \]
\end{restatable}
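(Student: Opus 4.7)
The plan is to observe that the corollary is the $d=1$ specialization of the four preceding regret theorems. When $d=1$, an arm cannot be played more than once within a single lookahead, so the single-play and multiple-play cases coincide; thus the corollary follows immediately from Theorem~\ref{thm:regrgpds} (for UCB) or Theorem~\ref{thm:regdsts} (for TS). Substituting $d=1$ into the stated bounds yields exactly $O(\sqrt{KT \gamma_T \log(TK|\cZ|)})$; the same substitution in Theorems~\ref{thm:regdm} and \ref{thm:regtsdm} collapses $(K|\cZ|)^d$ to $K|\cZ|$ and produces the identical rate, so either family of theorems supplies a proof.

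First I would verify that Algorithm~\ref{alg:drgpucb} with $d=1$ reduces to the per-round decision rule claimed in the text preceding the corollary: $\cT_1 = \{1,\dots,T\}$, each leaf of the 1-step lookahead tree is a single arm $j$, so $\eta_t(i) = \mu_t(j)$ and $\varsigma_t^2(i) = \sigma_t^2(j)$. With $d=1$, \eqref{eqn:alphadef} reads $\alpha_t = \sqrt{2\log(K|\cZ| t^2)}$, which is the Gaussian tail width needed to guarantee simultaneously valid upper confidence bounds at every $(j, z)$ pair and every round via a union bound over $K|\cZ|t^2$ events.

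For $1$RGP-UCB, the argument then mirrors the proof of Theorem~\ref{thm:regrgpds}: on the high-probability event that $|f_j(Z_{j,t}) - \mu_t(j)| \le \alpha_t \sigma_t(j)$ holds for all $j, Z_{j,t}, t$, the per-round regret is bounded by $2\alpha_t \sigma_t(J_t)$ by the UCB selection rule; Cauchy--Schwarz yields $\sum_t 2\alpha_t \sigma_t(J_t) \le 2\alpha_T \sqrt{T \sum_t \sigma_t^2(J_t)}$; and the elementary inequality $x \le C\log(1+x)$ for bounded $x$ converts the variance sum, split over arms, into the information gain, giving $\sum_t \sigma_t^2(J_t) \le O(\sigma^2 \sum_j \gamma_{N_j(T)}) \le O(\sigma^2 K \gamma_T)$ by monotonicity of $\gamma_N$. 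For $1$RGP-TS, the argument parallels that of Theorem~\ref{thm:regdsts}, using that, conditional on the history, the posterior of the best arm matches that of the sampled arm, so the UCB analysis transfers.

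Since no new ingredients are required beyond this direct substitution, there is no genuine obstacle. The only minor check is that the $d=1$ lookahead regret, as defined in Section~\ref{sec:reg}, coincides with the instantaneous regret $\E[\Reg_T^{(1)}]$ appearing in the corollary; this is immediate because each lookahead covers exactly one round, so the sum $\sum_{h=0}^{\lfloor T/1 \rfloor}$ in the definition of $\E[\Reg_T^{(d)}]$ becomes the standard round-by-round instantaneous regret.
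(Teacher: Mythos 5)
Your proposal is correct and matches the paper's own (implicit) argument: the corollary is obtained exactly by setting $d=1$ in Theorems~\ref{thm:regrgpds}--\ref{thm:regtsdm}, where the single- and multiple-play cases coincide and the $1$-step lookahead regret is by definition the instantaneous regret. The only cosmetic difference is in your sketch of the underlying theorem proof: you phrase the confidence step as conditioning on a high-probability event, whereas the paper (Lemma~\ref{lem:dreg}) bounds the Bayesian regret by integrating the Gaussian tail directly, which sidesteps having to control the regret on the complement of the good event; both routes lead to the same $\alpha_t\sigma_t(J_t)$ term plus a summable remainder.
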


The instantaneous regret of both algorithms is $O^*(\sqrt{KT\gamma_T})$.
Hence, we reduced the dependency on $|\cZ|$ from $\sqrt{|\cZ|}$ to $\sqrt{\log|\cZ|}$ compared to a naive application of UCB (see Appendix~\ref{app:nonparam}). 
The single play lookahead regret is of the same order as this instantaneous regret. This shows that, in the single play case, since we still update the posterior after every play of an arm, we do not loose any information by looking ahead.

\section{Improving Computational Efficiency via Optimistic Planning} \label{sec:opplanning}
For large values of $K$ and $d$, Algorithm~\ref{alg:drgpucb} may not be computationally efficient since it searches $K^d$ leaves.
We can improve this by optimistic planning \citep{hren2008optimistic,munos2014bandits}. This was developed by \cite{hren2008optimistic} for deterministic MDPs with discount factors and rewards in $[0,1]$. We adapt this to undiscounted rewards in $[\min_{j,z} \tilde f_j(z), \max_{j,z} \tilde f_j(z)]$. 
We focus on this in the multiple play Thompson sampling algorithm.

As in Algorithm~\ref{alg:drgpucb}, at time $t$, we sample $\tilde f_j(z)$ from the posterior of $f_j$ at $\bZ_{j,t}^{(d)} = (Z_{j,t}, \dots, Z_{j,t}+d, 0, \dots d)^T$ for all arms $j$. 
Then, instead of searching the entire tree to find the leaf with largest total $\tilde f_j(z)$, 
we use optimistic planning (OP) to iteratively build the tree.
We start from an initial tree of one node, $i_0=\bZ_t$. At step $n$ of the OP procedure, let $\mathcal{T}_n$ be the expanded tree and let $\mathcal{S}_n$ be the nodes not in $\mathcal{T}_n$ but whose parents are in $\mathcal{T}_n$. We select a node in $\mathcal{S}_n$ and move it to $\mathcal{T}_n$, adding its children to $\mathcal{S}_n$. If we select a node $i_n$ of depth $d$, we stop and output node $i_n$. 
Otherwise we continue until 
 $n=N$ for a predefined budget $N$. Let $d_N$ be the maximal depth of nodes in $\cT_N$. We output the node at depth $d_N$ with  largest upper bound on the value of its continuation (i.e. with largest $b_N(i)$ in \eqref{eqn:bdef}).

Nodes are selected using upper bounds on the total value of a continuation of the path to the node. For node $i \in \cS_n \cup \cT_n$, let $u(i)$ be the sum of the $\tilde f_j(z)$'s on the path to $i$, and $l(i)$ the depth of $i$. The value, $v(i)$, of node $i$ is the reward to $i$, $u(i)$, plus the maximal reward of 
a path from $i$ to depth $d$. We upper bound $v(i)$ by,
\begin{align}
b_n(i) = u(i) + \Psi(\mathbf{z}(i), d- l(i)) \qquad \text{ for } i \in \mathcal{S}_n \cup \mathcal{T}_n. \label{eqn:bdef}  
\end{align}
where $\mathbf{z}(i)$ is the vector of $z_j$'s at node $i$, and the function $\Psi(\mathbf{z}(i),d-l(i))$ is an upper bound on the maximal reward from node $i$ to a leaf. 
Let $g_j(z,l) = \max\{\tilde f_j(z), \dots, \tilde f_j(z+l), \tilde f_j(0), \dots, \tilde f_j(l)\}$ be the maximal reward that can be gained from playing arm $j$ in the next $1 \leq l \leq d$ steps from $z \in \bZ_{j,t}^{(d)}$. Then, $\Psi(\mathbf{z}(i),d-l(i)) =(d- l(i)) \max_{1 \leq j \leq K} g_j(z_j(i), d-l(i))$, 
and $\Psi(\mathbf{z}(i), 0) =0$ for any $\mathbf{z}(i)$.

We can often bound the error from this procedure. Let $v^*= \max_{i\in\mathcal{L}_d(\bZ)} v(i)$ be the value of the maximal node. 
A node $i$ is $\epsilon$-optimal if $v^*-v(i) \leq \epsilon$, and let $p_l(\epsilon)$ be the proportion of $\epsilon$-optimal nodes at depth $l$. Let $\Delta =\max_{j,z} \tilde f_j(z)- \min\{ \min_{j,z} \tilde f_j(z), 0\}$ and define $\Psi^*(l) = \max_{\mathbf{z} \in \cZ} \Psi(\mathbf{z},l)$ for any $l=0,1,\dots, d$. 
We get the following bound (whose proof is in Appendix~\ref{app:opplan}).

\begin{restatable}{proposition}{lemopplanning}
\label{lem:opplanning}
For the optimistic planning procedure with budget $N$, if the procedure stops at step $n<N$ because a node $i_n$ of depth $d$ is selected, then $v^*-v(i_n)=0$. Otherwise, if there 
exist $\lambda \in (\frac{1}{K},1]$ and $1\leq d_0\leq d$ such that $\forall l\geq d_0$, $p_l((d-l) \Delta) \leq \lambda^l$, then for $N>n_0=\frac{K^{d_0+1}-1}{K-1}$,
\begin{align}
v^*- v(i_N)\leq\bigg(d- \frac{\log(N-n_0)}{\log(\lambda K)} - \frac{\log(\lambda K -1)}{\log(\lambda K)} +1 \bigg) \Delta . \label{eqn:opplan}
\end{align}
\end{restatable}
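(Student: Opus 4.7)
The plan is to adapt the optimistic planning analysis of \citep{hren2008optimistic} to the undiscounted setting with rewards in $[\min_{j,z}\tilde f_j(z), \max_{j,z}\tilde f_j(z)]$. Two observations drive the whole proof. First, by construction $\Psi(\mathbf{z}(i), d-l(i))$ upper bounds the maximal continuation reward from $i$, so $b_n(i) \geq v(i)$ for every $i \in \cS_n \cup \cT_n$. Second, $\Psi(\mathbf{z}(i), d-l(i)) \leq (d-l(i))\max_{j,z}\tilde f_j(z)$ while any single continuation from $i$ earns at least $(d-l(i))\min_{j,z}\tilde f_j(z)$, so the definition of $\Delta$ (with its built-in $\min\{\cdot, 0\}$) yields $b_n(i) - v(i) \leq (d-l(i))\Delta$ uniformly, regardless of the sign of the minimum reward.

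For the first part, suppose at step $n<N$ the algorithm selects some $i_n \in \cS_n$ with $l(i_n) = d$. The path from the root to a leaf achieving $v^*$ must cross from $\cT_n$ into $\cS_n$ at some node $i^\star$ on the optimal path. Then $b_n(i^\star) \geq v(i^\star) = v^*$, and the selection rule gives $b_n(i_n) \geq b_n(i^\star) \geq v^*$. But $l(i_n) = d$ forces $\Psi(\mathbf{z}(i_n), 0) = 0$, so $b_n(i_n) = u(i_n) = v(i_n)$. Hence $v(i_n) \geq v^*$, and since $v^*$ is maximal, $v^* - v(i_n) = 0$.

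For the second part, applying the same optimistic argument at the moment any node $i$ is moved into $\cT_N$ gives $b(i) \geq v^*$ at that step, and combined with the gap bound this yields $v(i) \geq v^* - (d-l(i))\Delta$; that is, every expanded node at depth $l$ is $(d-l)\Delta$-optimal. By hypothesis the number of such nodes at depth $l$ is at most $p_l((d-l)\Delta)\,K^l \leq (\lambda K)^l$ for $l \geq d_0$, and trivially at most $K^l$ for $l < d_0$. Summing over depths up to the maximal depth $d_N$ reached in $\cT_N$,
\begin{align*}
N \leq \sum_{l=0}^{d_0-1} K^l + \sum_{l=d_0}^{d_N}(\lambda K)^l \leq n_0 + \sum_{l=d_0+1}^{d_N}(\lambda K)^l \leq n_0 + \frac{(\lambda K)^{d_N+1}}{\lambda K - 1},
\end{align*}
where the middle step uses $\sum_{l=0}^{d_0-1}K^l + (\lambda K)^{d_0} \leq \sum_{l=0}^{d_0} K^l = n_0$ since $\lambda \leq 1$. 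Rearranging and taking logarithms in base $\lambda K > 1$ (which is positive because $\lambda > 1/K$) gives $d_N \geq \frac{\log(N-n_0)}{\log(\lambda K)} + \frac{\log(\lambda K - 1)}{\log(\lambda K)} - 1$, and substituting into $v^* - v(i_N) \leq (d - d_N)\Delta$ yields \eqref{eqn:opplan}.

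I expect the main obstacles to be bookkeeping rather than conceptual: cleanly identifying the off-$\cT_n$ node $i^\star$ on the optimal path (which underpins both parts of the argument via optimism), and absorbing the low-depth terms and the boundary $l = d_0$ term into $n_0 = \frac{K^{d_0+1}-1}{K-1}$ so the final expression takes exactly the form stated. Everything else is combinatorial counting once the $(d-l)\Delta$-optimality of expanded nodes is established.
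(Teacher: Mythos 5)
Your proposal is correct and follows essentially the same route as the paper's proof: optimism at expansion time shows every node placed in $\cT_N$ at depth $l$ is $(d-l)\Delta$-optimal, the hypothesis on $p_l$ turns this into a count of at most $(\lambda K)^l$ expanded nodes per depth, and the geometric-series rearrangement lower-bounds $d_N$ exactly as in the paper. The only difference is cosmetic: the paper handles possibly-negative sampled rewards by shifting all $\tilde f_j$ by $\delta=-\min\{\min_{j,z}\tilde f_j(z),0\}$ and working with primed quantities, whereas you bound the gap $b_n(i)-v(i)\leq (d-l(i))\Delta$ directly, which is a slightly cleaner way to reach the same inequalities.
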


Hence, if we stop the procedure at $n<N$, the node $i_n$ of depth $d$ we return will be optimal.
In many cases, especially when the proportion of $\epsilon$-optimal nodes, $\lambda$, is small, this will occur. 
Otherwise, the 
error will depend on $\lambda$, and the budget, $N$. By \eqref{eqn:opplan}, for $N \approx (\lambda K)^d$, the error will be near zero.

\section{Experimental Results} \label{sec:exp}

\begin{figure*}[t]
  \centering
    \begin{subfigure}{0.48\textwidth}
    \includegraphics[width=\textwidth]{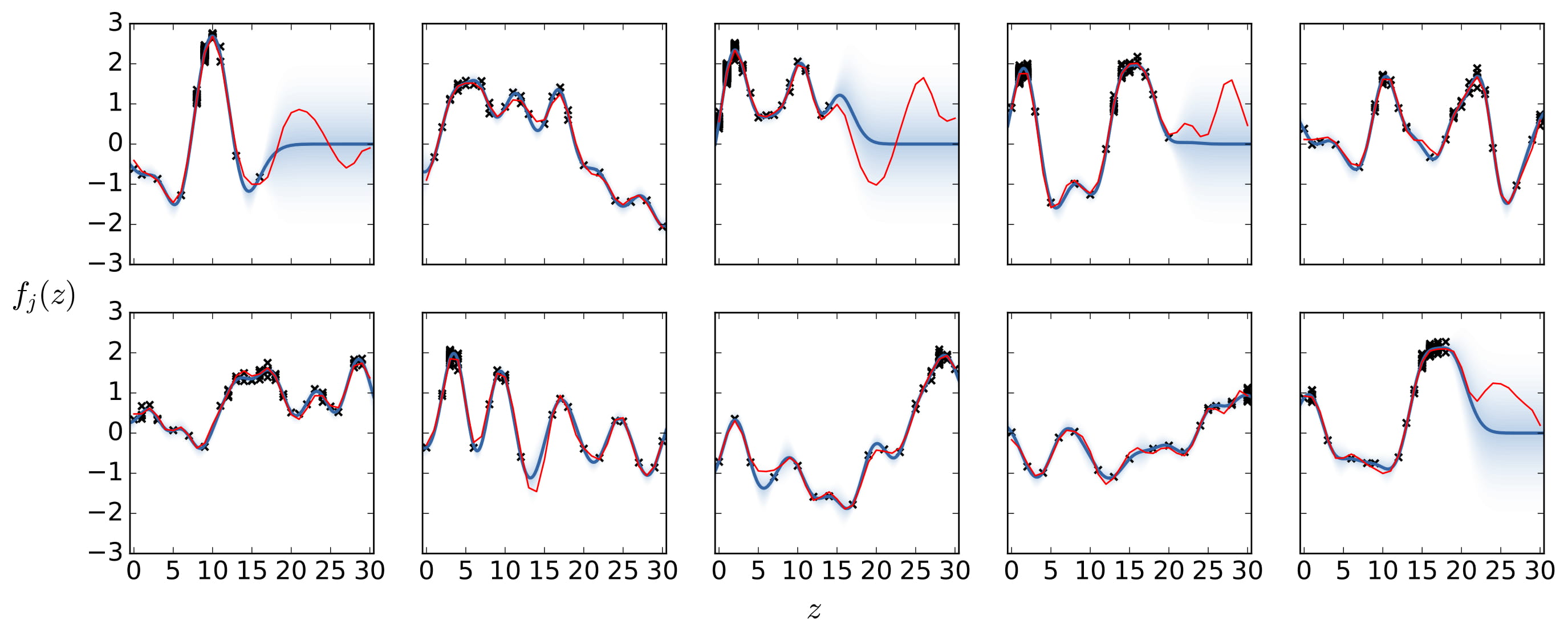}
    \caption{Instantaneous: $d=1$}
    \end{subfigure}
    \quad
        \begin{subfigure}{0.48\textwidth}
    \includegraphics[width=\textwidth]{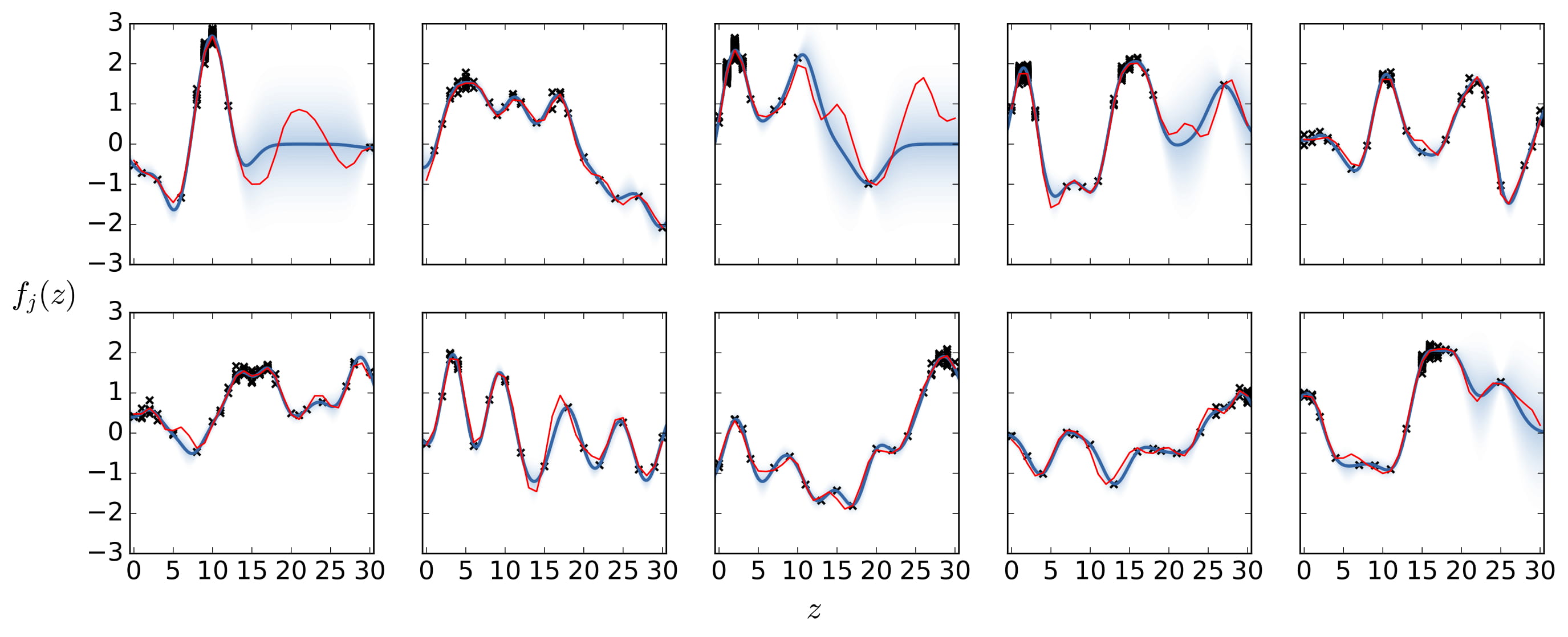}
        \caption{Lookahead: $d=3$}
    \end{subfigure}
    \caption{The posterior mean (blue) of RGP-UCB with density shaded in blue for a squared exponential kernel ($l=2$). The true recovery curve is in red and the crosses are the observed samples.}
      \label{fig:post}
\end{figure*}

\begin{figure*}
  \centering
    \begin{subfigure}{0.32\textwidth}
    \centering
    \includegraphics[width=\textwidth]{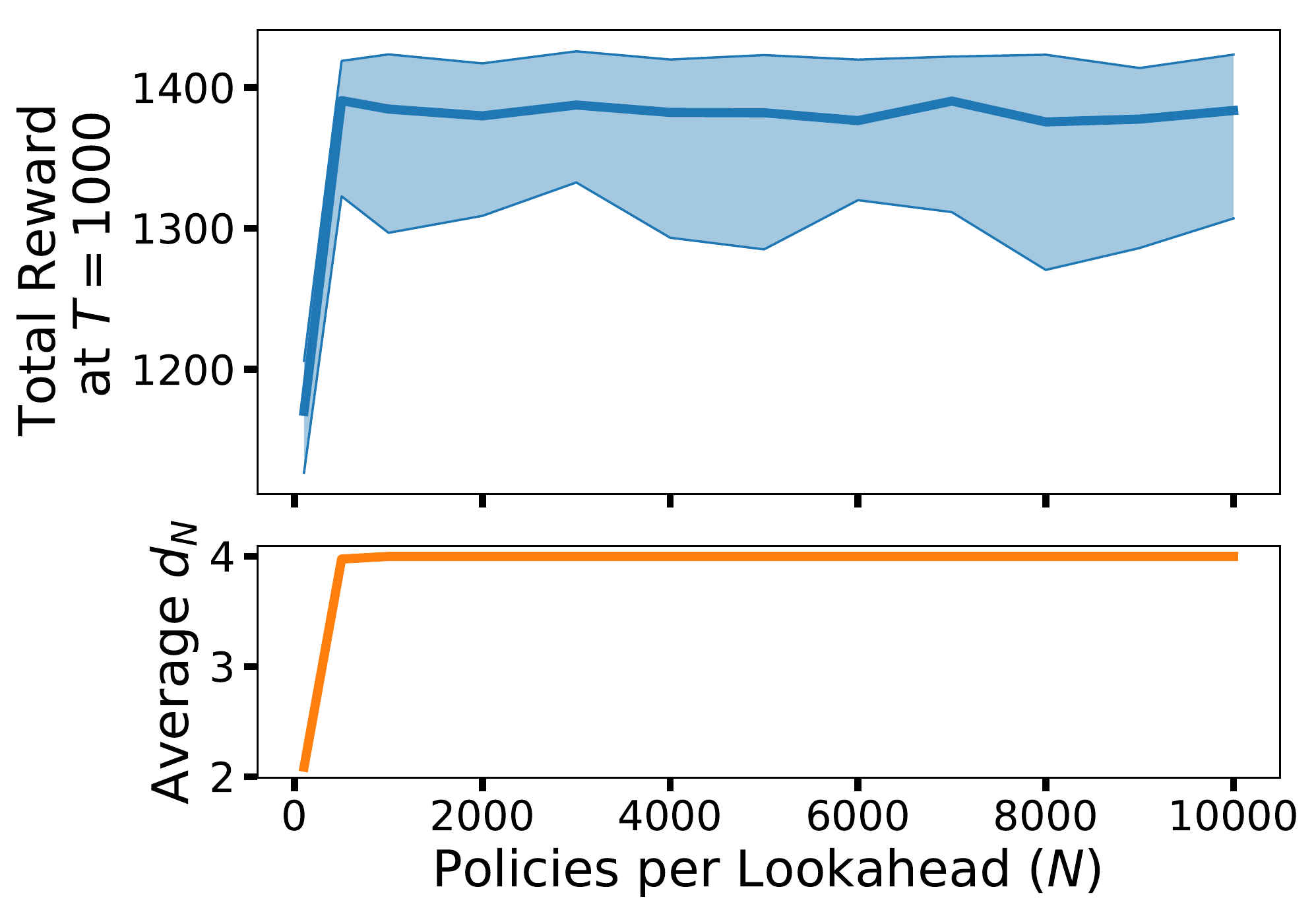}
    \caption{$d=4$, $K=10$} \label{fig:oppland4k10}
    \end{subfigure}
        \begin{subfigure}{0.32\textwidth}
        \centering
    \includegraphics[width=\textwidth]{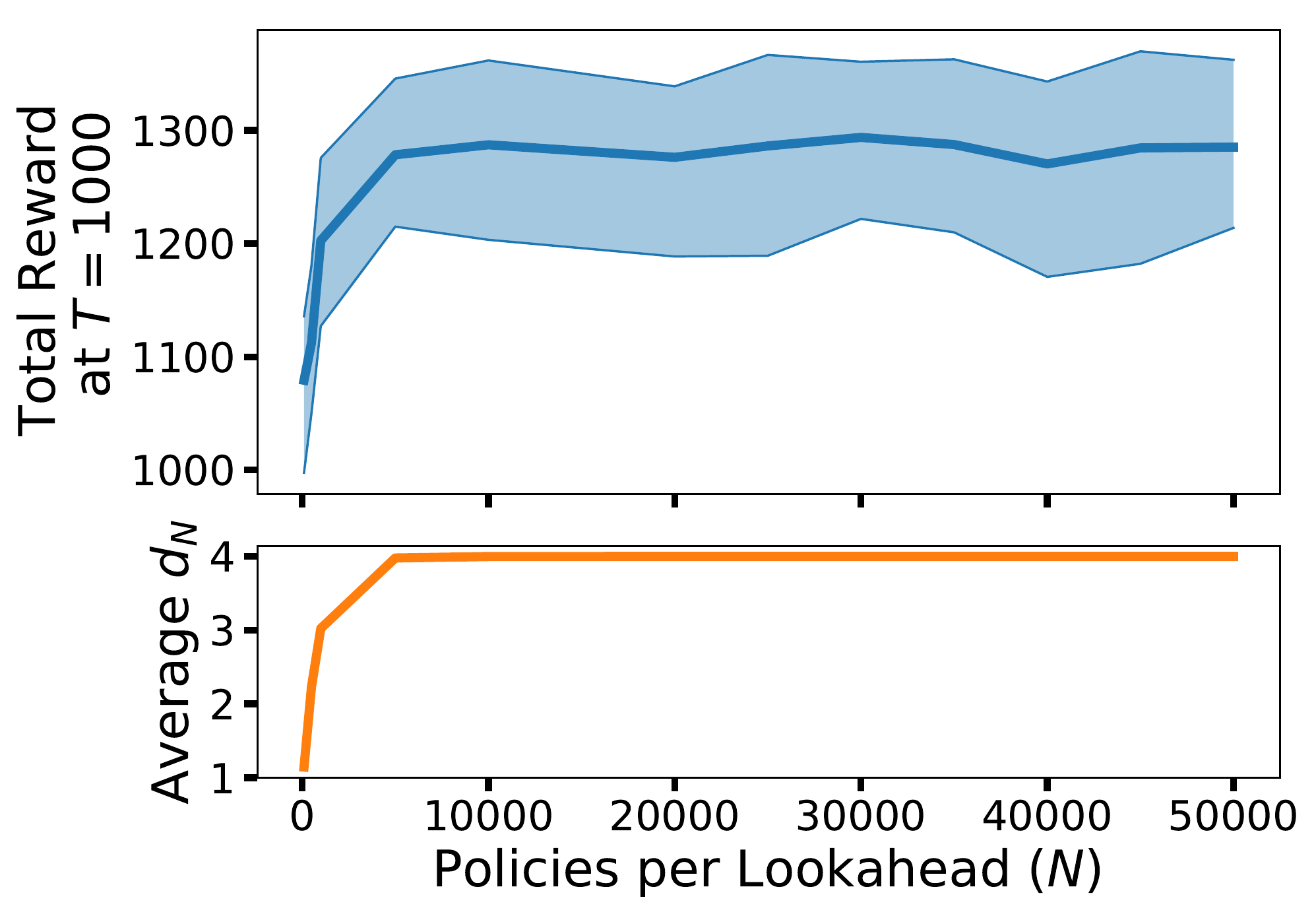}
        \caption{$d=4$, $K=30$} \label{fig:oppland4k30}
    \end{subfigure}
       \begin{subfigure}{0.32\textwidth}
       \centering
    \includegraphics[width=\textwidth]{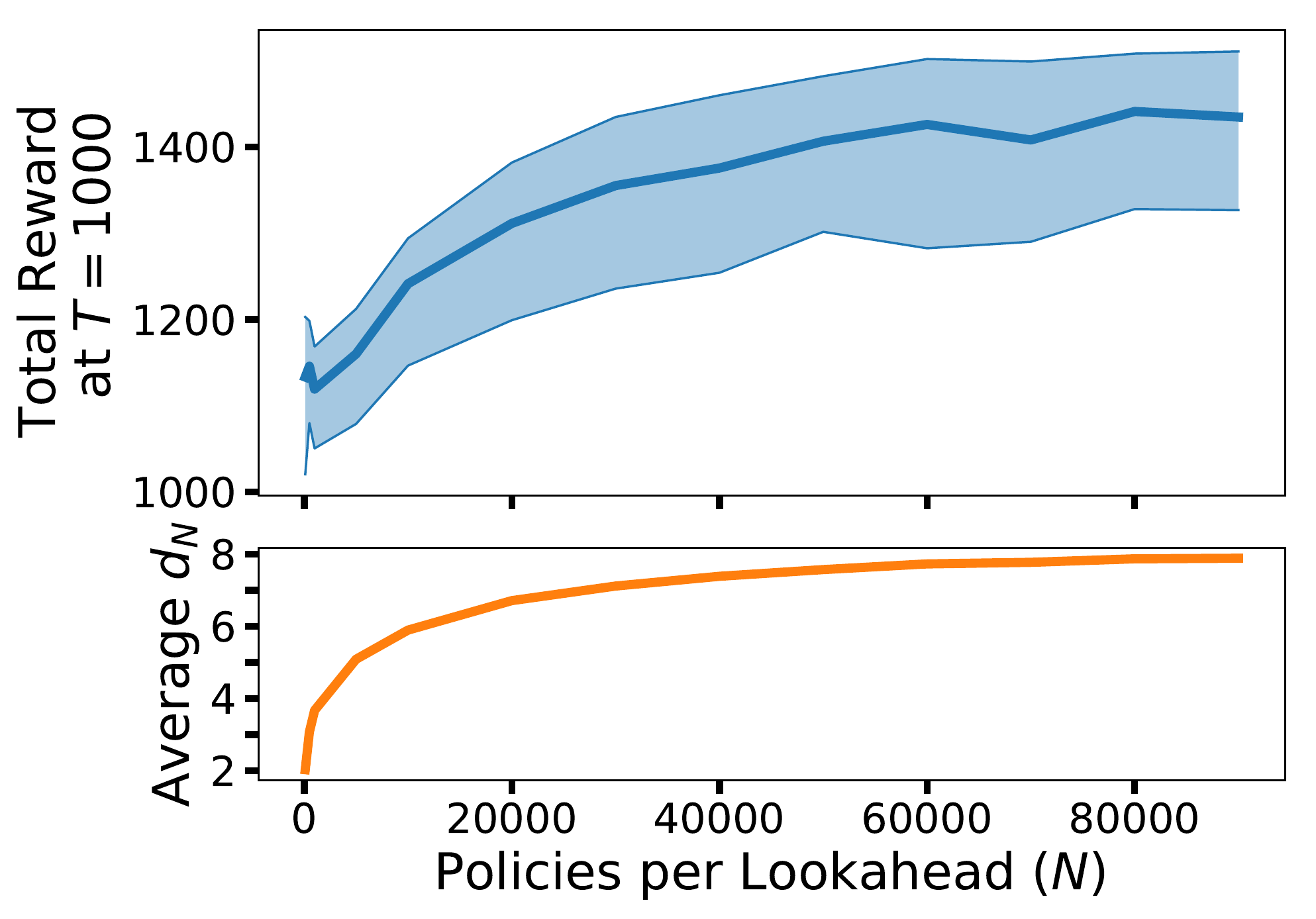}
        \caption{$d=8$, $K=10$} \label{fig:oppland10k10}
    \end{subfigure}
    \caption{The total reward and final depth of the lookahead tree, $d_N$, as the budget, $N$, increases.}
      \label{fig:opplan}
\end{figure*}

We tested our algorithms in experiments with $z_{\max}=30$, noise standard deviation $\sigma = 0.1$, and horizon $T=1000$. 
We used GPy \citep{gpy2014} to fit the GPs.
We first aimed to check that our algorithms were playing arms at good $z$ values (i.e. play arm $j$ when $f_j(z)$ is high). We used $K=10$ and sampled the recovery functions from a squared exponential kernel and ran the algorithms once. 
Figure~\ref{fig:post} shows that, for lengthscale $l=2$, $1$RGP-UCB and $3$RGP-UCB accurately estimate the recovery functions and learn to play each arm in the regions of $\cZ$ where the reward is high. Although, as expected, $3$RGP-UCB has more samples on the peaks, it is reassuring that the instantaneous algorithm
also selects good $z$'s. The same is true for $d$RGP-TS and different values of $d$ and $l$ (see Appendix~\ref{app:tsplot}). 

Next, we tested the performance of using optimistic planning (OP) in $d$RGP-TS. We averaged all results over 100 replications and used a squared exponential kernel with $l=4$. In the first case, $K=10$ and $d=4$, so 
direct tree search may have been possible. Figure~\ref{fig:oppland4k10} shows that, when $N$, the budget of policies the OP procedure can evaluate per lookahead, increases above $500$, the total reward plateaus
and the average depth of the returned policy is $4$.
By Proposition~\ref{lem:opplanning}, this means that we found an optimal leaf of the lookahead tree while evaluating significantly fewer policies. 
We then increased the number of arms to $K=30$. Here, searching the whole lookahead tree would be computationally challenging. Figure~\ref{fig:oppland4k30} shows that we found the optimal policy after about 5,000 policies (since here $d_N=d$), which is less than $0.1\%$ of the total number of policies. In Figure~\ref{fig:oppland10k10}, we increased the depth of the lookahead to $d=8$. 
In this case, we had to search more policies to find optimal leaves. However, this
was still less than $0.1\%$ of the total number of policies. From Figure~\ref{fig:oppland10k10}, we also see that when $d_N < d$, increasing $N$ leads to higher total reward.

Lastly, we compared our algorithms to RogueUCB-Tuned \citep{mintz2017nonstationary} and UCB-Z, the basic UCB algorithm with $K|Z|$ arms (see Appendix~\ref{app:nonparam}), in two parametric settings. Details of the implementation of RogueUCB-Tuned are given in Appendix~\ref{app:rogueucb}.
As in \citep{mintz2017nonstationary}, we only considered $d=1$. We used squared exponential kernels in $1$RGP-UCB and $1$RGP-TS with lengthscale $l=5$ (results for other lengthscales are in Appendix~\ref{app:thetas}).
The recovery functions were logistic, $f(z) = \theta_0(1+ \exp\{-\theta_1(z-\theta_2)\})^{-1}$ which increases in $z$,
and modified gamma, $f(z) = \theta_0C\exp\{-\theta_1z\}z^{\theta_2}$ (with normalizer $C$), which
 increases until a point and then decreases. The values of $\theta$ were sampled uniformly and are in Appendix~\ref{app:thetas}. We averaged the results over 500 replications. The cumulative regret (and confidence regions) is shown in Figure~\ref{fig:regloggam} and the cumulative reward (and confidence bounds) in Table~\ref{tab:rew_loggamma}. Our algorithms achieve lower regret and higher reward than RogueUCB-Tuned.
UCB-Z does badly since the time required to play each (arm,$z$) pair during initialization is large.

\subsection{Practical Considerations}
There are several issues to consider when applying our algorithms in a practical recommendation scenario. The first is the choice of $z_{\max}$. Throughout, we assumed that this is known and constant across arms. Our work can be easily extended to the case where there is a different value, $z_{\max,j}$, for each arm $j$, by defining $z_{\max} = \max_j z_{\max,j}$ and extending $f_j$ to $z_{\max}$ by setting $f_j(z)=f_j(z_{\max,j})$ for all $z=z_{\max,j}+1, \dots, z_{\max}$. A similar approach can be used if we only know an upper bound on $z_{\max}$. 
Additionally, in practice, the recovery curves may not be sampled from Gaussian processes, or the kernels may not be known.
As demonstrated experimentally, our algorithms can still perform well in this case. Indeed,
kernels can be chosen to (approximately) represent a wide variety of recovery curves, ranging from uncorrelated rewards to constant functions
In practice, we can also use adaptive algorithms for selecting a kernel function out of a large class of kernel functions (see e.g. Chapter 5 of [24] for details).

\begin{table*}[t]
\caption{Total reward at $T=1000$ for single step experiments with parametric functions}
\label{tab:rew_loggamma}
\begin{center}
\begin{tabular}{c c c c c}
\toprule
Setting & 1RGP-UCB ($l=5$) & 1RGP-TS ($l=5$) & RogueUCB-Tuned & UCB-Z \\
\midrule
Logistic & 461.7  (454.3,468.9) &  462.6  (455.7,469.3)  & 446.2  (438.2,453.5) & 242.6 (229.6,256.0) \\ 
\rule{0pt}{2ex}    Gamma & 145.6  (139.6, 151.7)  & 156.5  (149.6,163.0) & 132.7 (111.0,144.5) & 116.8 (108.4,125.5)   \\ 
 \bottomrule
\end{tabular}
\end{center}
\end{table*}

\vspace{-10pt}
\begin{figure}[t]
    \centering
    \begin{subfigure}{0.48\columnwidth}
    	\centering
        \includegraphics[width=.7\textwidth]{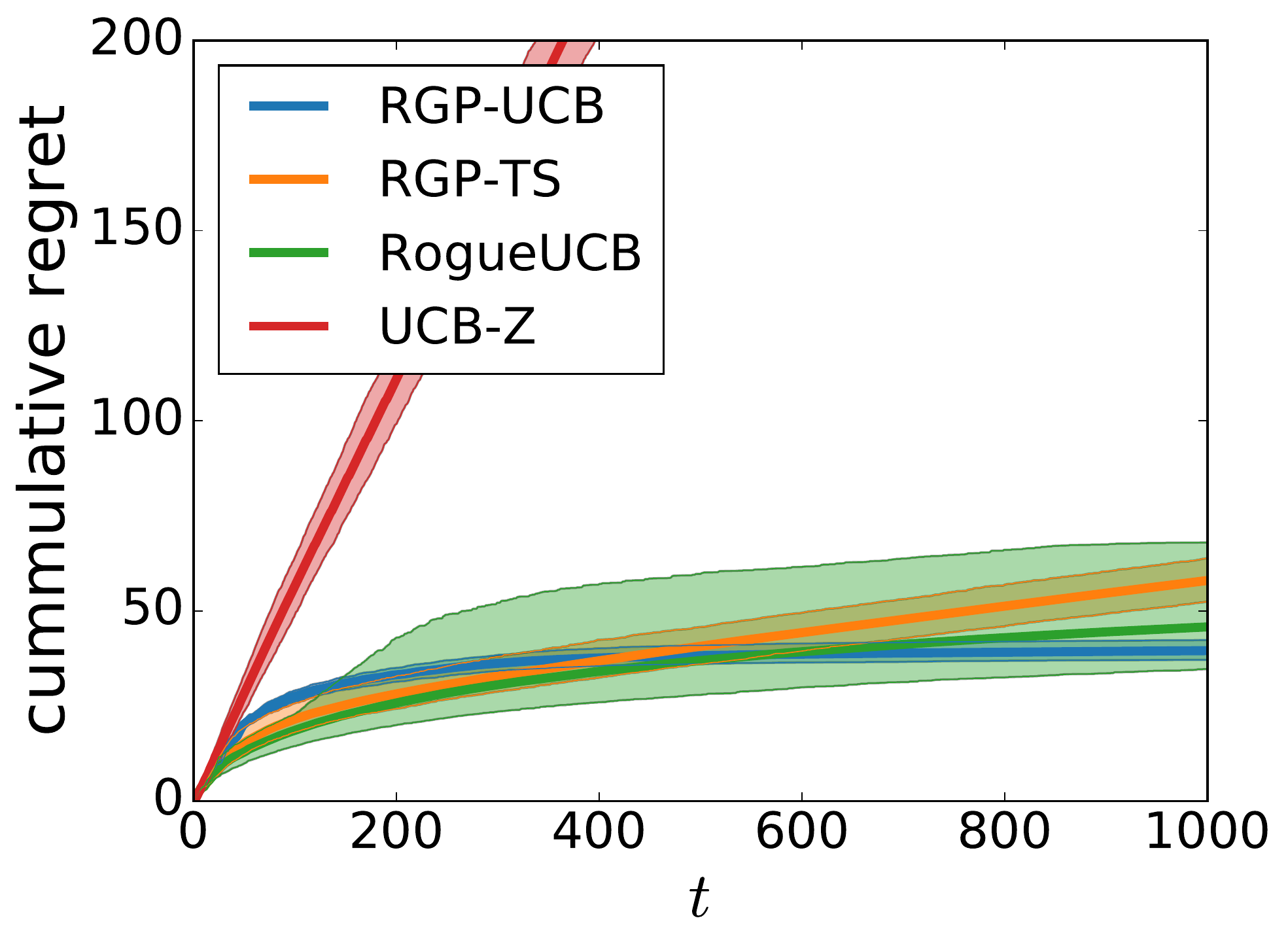}
        \caption{Logistic setup}
        \label{fig:reg_logistic}
    \end{subfigure}
    \begin{subfigure}{0.48\columnwidth}
    	\centering
       	\includegraphics[width=.7\textwidth]{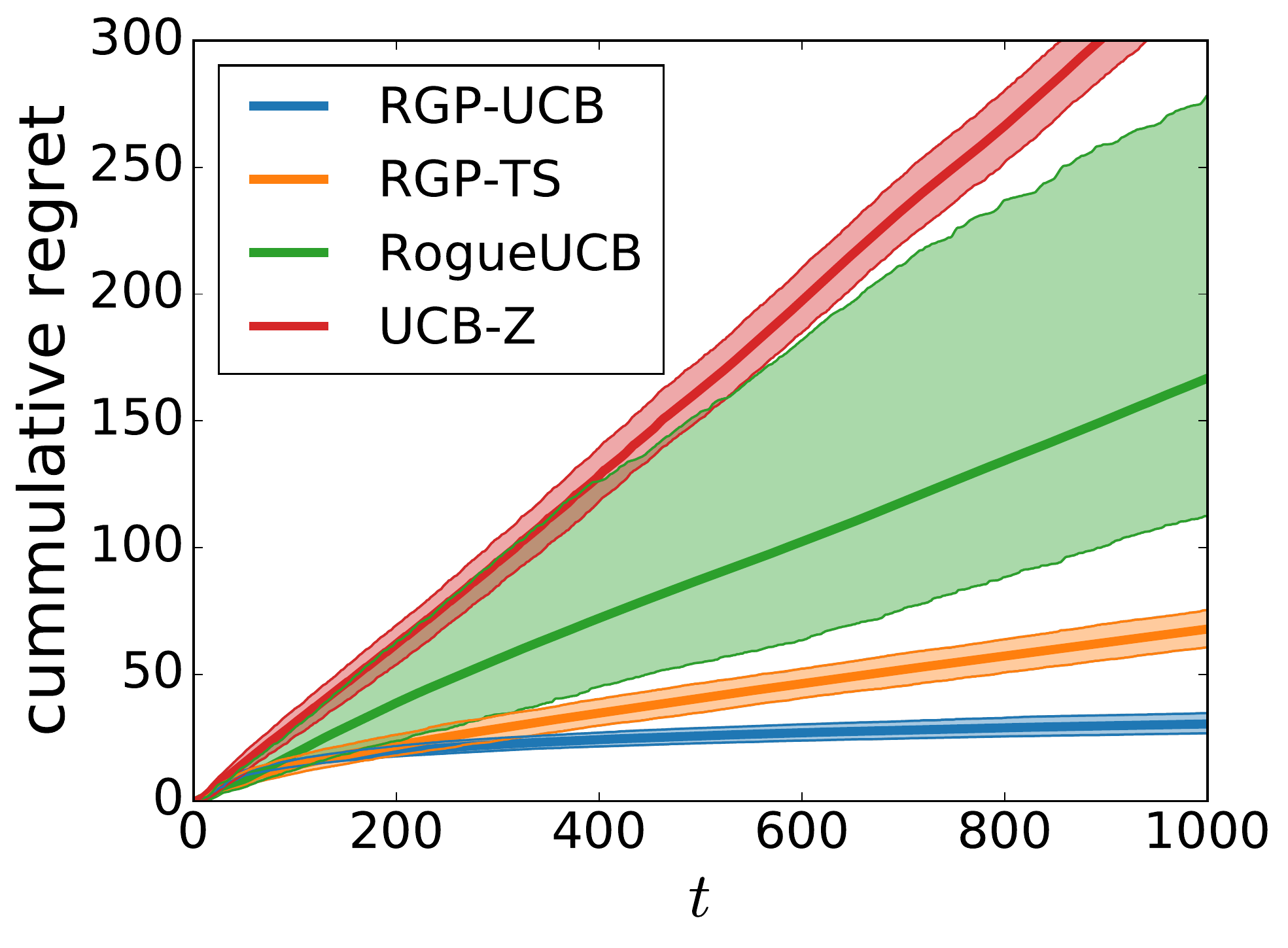}
        \caption{Gamma setup}
        \label{fig:reg_gamma}
    \end{subfigure}
    \caption{Cumulative instantaneous regret with parametric recovery functions.} \label{fig:regloggam}
\end{figure}

\section{Conclusion}
In this work, we used Gaussian processes to model the recovering bandits problem and incorporated this into UCB and Thompson sampling algorithms. These algorithms use properties of GPs to look ahead and find good sequences of arms. They achieve $d$-step lookahead Bayesian regret of $O^*(\sqrt{KdT})$ for linear and squared exponential kernels, and perform well experimentally. We also improved computational efficiency of the algorithm using optimistic planning. Future work 
includes 
considering the frequentist setting, analyzing online methods for choosing $z_{\max}$ and the kernel, and investigating the use of GPs in other non-stationary bandit settings.

\subsubsection*{Acknowledgments}
CPB would like to thank the EPSRC funded EP/L015692/1 STOR-i centre for doctoral training and Sparx. We would also like to thank the reviewers for their helpful comments.

{\small
\bibliographystyle{abbrv}
\bibliography{recband_refs}

\begin{thebibliography}{10}

\bibitem{abeille2017linear}
M.~Abeille and A.~Lazaric.
\newblock Linear thompson sampling revisited.
\newblock {\em International Conference on Artificial Intelligence and
  Statistics}, 2017.

\bibitem{auer2002finite}
P.~Auer, N.~Cesa-Bianchi, and P.~Fischer.
\newblock Finite-time analysis of the multiarmed bandit problem.
\newblock {\em Machine learning}, 47(2-3):235--256, 2002.

\bibitem{azar2017minimax}
M.~G. Azar, I.~Osband, and R.~Munos.
\newblock Minimax regret bounds for reinforcement learning.
\newblock In {\em Proceedings of the 34th International Conference on Machine
  Learning-Volume 70}, pages 263--272, 2017.

\bibitem{besbes2014stochastic}
O.~Besbes, Y.~Gur, and A.~Zeevi.
\newblock Stochastic multi-armed-bandit problem with non-stationary rewards.
\newblock In {\em Advances in neural information processing systems}, pages
  199--207, 2014.

\bibitem{bogunovic2016time}
I.~Bogunovic, J.~Scarlett, and V.~Cevher.
\newblock Time-varying gaussian process bandit optimization.
\newblock In {\em International Conference on Artificial Intelligence and
  Statistics}, pages 314--323, 2016.

\bibitem{bouneffouf2016multi}
D.~Bouneffouf and R.~Feraud.
\newblock Multi-armed bandit problem with known trend.
\newblock {\em Neurocomputing}, 205:16--21, 2016.

\bibitem{cappe2013kullback}
O.~Capp{\'e}, A.~Garivier, O.-A. Maillard, R.~Munos, G.~Stoltz, et~al.
\newblock Kullback--leibler upper confidence bounds for optimal sequential
  allocation.
\newblock {\em The Annals of Statistics}, 41(3):1516--1541, 2013.

\bibitem{cortes2017discrepancy}
C.~Cortes, G.~DeSalvo, V.~Kuznetsov, M.~Mohri, and S.~Yang.
\newblock Discrepancy-based algorithms for non-stationary rested bandits.
\newblock {\em arXiv preprint arXiv:1710.10657}, 2017.

\bibitem{devroye2001combinatorial}
L.~Devroye and G.~Lugosi.
\newblock {\em Combinatorial methods in density estimation}.
\newblock Springer, 2001.

\bibitem{garivier2008upper}
A.~Garivier and E.~Moulines.
\newblock On upper-confidence bound policies for switching bandit problems.
\newblock In {\em International Conference on Algorithmic Learning Theory},
  pages 174--188, 2011.

\bibitem{gpy2014}
{GPy}.
\newblock {GPy}: A gaussian process framework in python.
\newblock \url{http://github.com/SheffieldML/GPy}, 2012--.

\bibitem{heidari2016tight}
H.~Heidari, M.~Kearns, and A.~Roth.
\newblock Tight policy regret bounds for improving and decaying bandits.
\newblock In {\em International Joint Conference on Artificial Intelligence},
  pages 1562--1570, 2016.

\bibitem{hren2008optimistic}
J.-F. Hren and R.~Munos.
\newblock Optimistic planning of deterministic systems.
\newblock In {\em European Workshop on Reinforcement Learning}, pages 151--164.
  Springer, 2008.

\bibitem{immorlica2018recharging}
N.~Immorlica and R.~D. Kleinberg.
\newblock Recharging bandits.
\newblock In {\em IEEE 59th Annual Symposium on Foundations of Computer
  Science}, 2018.

\bibitem{jaksch2010near}
T.~Jaksch, R.~Ortner, and P.~Auer.
\newblock Near-optimal regret bounds for reinforcement learning.
\newblock {\em Journal of Machine Learning Research}, 11(Apr):1563--1600, 2010.

\bibitem{krause2011contextual}
A.~Krause and C.~S. Ong.
\newblock Contextual gaussian process bandit optimization.
\newblock In {\em Advances in Neural Information Processing Systems}, pages
  2447--2455, 2011.

\bibitem{levine2017rotting}
N.~Levine, K.~Crammer, and S.~Mannor.
\newblock Rotting bandits.
\newblock In {\em Advances in Neural Information Processing Systems}, pages
  3077--3086, 2017.

\bibitem{mintz2017nonstationary}
Y.~Mintz, A.~Aswani, P.~Kaminsky, E.~Flowers, and Y.~Fukuoka.
\newblock Non-stationary bandits with habituation and recovery dynamics.
\newblock {\em arXiv preprint arXiv:1707.08423}, 2017.

\bibitem{munos2014bandits}
R.~Munos et~al.
\newblock From bandits to monte-carlo tree search: The optimistic principle
  applied to optimization and planning.
\newblock {\em Foundations and Trends{\textregistered} in Machine Learning},
  7(1):1--129, 2014.

\bibitem{murray2016gaussian}
I.~Murray.
\newblock Gaussian processes and kernels.
\newblock
  \url{https://www.inf.ed.ac.uk/teaching/courses/mlpr/2016/notes/w7c_gaussian_process_kernels.pdf},
  2016.

\bibitem{osband2013more}
I.~Osband, D.~Russo, and B.~Van~Roy.
\newblock (more) efficient reinforcement learning via posterior sampling.
\newblock In {\em Advances in Neural Information Processing Systems}, pages
  3003--3011, 2013.

\bibitem{puterman2005markov}
M.~L. Puterman.
\newblock {\em Markov decision processes: discrete stochastic dynamic
  programming}.
\newblock John Wiley \& Sons, 2005.

\bibitem{raj2017taming}
V.~Raj and S.~Kalyani.
\newblock Taming non-stationary bandits: A bayesian approach.
\newblock {\em arXiv preprint arXiv:1707.09727}, 2017.

\bibitem{rasmussen2006gaussian}
C.~E. Rasmussen and C.~K. Williams.
\newblock Gaussian processes for machine learning.
\newblock MIT press, 2006.

\bibitem{russo2014learning}
D.~Russo and B.~Van~Roy.
\newblock Learning to optimize via posterior sampling.
\newblock {\em Mathematics of Operations Research}, 39(4):1221--1243, 2014.

\bibitem{seznec2018rotting}
J.~Seznec, A.~Locatelli, A.~Carpentier, A.~Lazaric, and M.~Valko.
\newblock Rotting bandits are no harder than stochastic ones.
\newblock In {\em The 22nd International Conference on Artificial Intelligence
  and Statistics}, pages 2564--2572, 2019.

\bibitem{slivkins2008adapting}
A.~Slivkins and E.~Upfal.
\newblock Adapting to a changing environment: the brownian restless bandits.
\newblock In {\em COLT}, pages 343--354, 2008.

\bibitem{srinivas2009gaussian}
N.~Srinivas, A.~Krause, S.~Kakade, and M.~Seeger.
\newblock Gaussian process optimization in the bandit setting: no regret and
  experimental design.
\newblock In {\em Proceedings of the 27th International Conference on
  International Conference on Machine Learning}, pages 1015--1022, 2010.

\bibitem{thompson1933likelihood}
W.~R. Thompson.
\newblock On the likelihood that one unknown probability exceeds another in
  view of the evidence of two samples.
\newblock {\em Biometrika}, 25(3/4):285--294, 1933.

\bibitem{whittle1988restless}
P.~Whittle.
\newblock Restless bandits: Activity allocation in a changing world.
\newblock {\em Journal of applied probability}, 25(A):287--298, 1988.

\bibitem{yi2017scalable}
J.~Yi, C.-J. Hsieh, K.~R. Varshney, L.~Zhang, and Y.~Li.
\newblock Scalable demand-aware recommendation.
\newblock In {\em Advances in Neural Information Processing Systems}, pages
  2409--2418, 2017.

\end{thebibliography}
}

\clearpage
\onecolumn
\appendix

{\center \huge Supplementary Material for Recovering Bandits}

\section{Preliminaries} \label{app:gen}
Define the filtration $\{\cF_t\}_{t=0}^\infty$ as $\cF_0=\emptyset$ and 
\begin{align}
\cF_t = \sigma(J_1, \dots, J_t, Y_1, \dots, Y_t, \bZ_1, \dots, \bZ_t) \label{eqn:Fdef}
\end{align}
 where $\bZ_t=[Z_{1,t}, \dots, Z_{K,t}]$. It is important to note that $\mu_t(j), \sigma_t(j), J_t$ and $\bZ_t$ are  $\cF_{t-1}$ measurable.

Recall that in both $d$RGP-UCB and $d$RGP-TS, we select a sequence of arms to play at time $t$ by building a $d$-step lookahead tree with root $\bZ_t$ and selecting the leaf node $i$ with highest upper confidence bound on $M_i$, the cumulative reward from playing all arms in that sequence,
\[ M_i(\bZ_t)= \sum_{\ell=0}^{d-1} f_{J_{t+\ell}}(Z_{J_{t+\ell},t+\ell}) \]
where $\{J_{t+\ell}\}_{\ell=0}^{d-1}$ are the sequence of arms played on the path to leaf $i$ and $\{Z_{J_{t+\ell},t+\ell}\}_{\ell=0}^{d-1}$ the corresponding $z$ values. Denote the posterior mean and variance of $M_i(\bZ_t)$ at time $t$ as $\eta_t(i)$ and $\varsigma_t(i)$, then, conditional on the history $\cF_{t-1}$, $M_i(\bZ_t)\sim \cN (\eta_t(i) , \varsigma^2_t(i))$. When each arm can be played multiple times, there are interaction terms in the variance of the $M_i(\bZ_t)$'s and thus we suffer some additional cost for not updating after every play. For each leaf node $i$, we can calculate
\[ \varsigma_t^2(i) = \sum_{\ell=0}^{d-1} \sigma_t^2(J_{t+\ell}) + \sum_{\ell \neq q; \ell,q=0}^{d-1} \text{cov}_{t}(f_{J_{t+\ell}}(Z_{J_{t+\ell}, t+\ell}), f_{J_{t+q}}(Z_{J_{t+q}, t+q})) \]
where $\text{cov}_{t}(f_{J_{t+\ell}}(Z_{J_{t+\ell}, t+\ell}), f_{J_{t+q}}(Z_{J_{t+q},t+q}))$ is 0 if $J_{t+\ell} \neq J_{t+q}$ and $k_{J_{t+\ell}}(Z_{J_{t+\ell}, t+\ell}, Z_{J_{t+q}t+q}; N_{J_{t+\ell}}(t-1))$ for $J_{t+\ell} = J_{t+q}$. Note that throughout, we assume that the variances and covariances are calculated at the $Z_{j,t}$'s where the arms are played, ie. $ \sigma_t^2(J_{t+\ell}) = \sigma^2_{J_{t+\ell}} (Z_{J_{t+\ell},t+\ell}; N_{J_{t+\ell}}(t-1))$.

Before providing the proofs of the regret bounds, we need the following lemmas,

\begin{lemma} \label{lem:boundsigma}
 \[  \sum_{t=1}^T \sum_{j=1}^K \sigma_t^2(J_t) \one\{J_t=j\} \leq C_1 K\gamma_T. \]
 where $C_1 = 1/\log(1+ \sigma^{-2})$.
\end{lemma}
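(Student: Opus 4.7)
The plan is to follow the classical Srinivas--Krause--Kakade--Seeger route, reduced to a per-arm analysis via independence of the $K$ Gaussian processes.

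First I would rewrite the left-hand side in a form suited to the maximal-information-gain bound. Since the arms are modeled by independent GPs and an observation of arm $J_t$ only updates the posterior of $f_{J_t}$, the indicator $\one\{J_t=j\}$ lets us split the double sum as
\[
\sum_{t=1}^T \sum_{j=1}^K \sigma_t^2(J_t)\,\one\{J_t=j\}
= \sum_{j=1}^K \sum_{t:\,J_t=j} \sigma_j^2\bigl(Z_{j,t};\,N_j(t-1)\bigr).
\]
Re-indexing the inner sum by the play counter $n=1,\dots,N_j(T)$ and letting $Z_j^{(n)}$ denote the covariate at the $n$-th play of arm $j$, this becomes $\sum_{j=1}^K \sum_{n=1}^{N_j(T)} \sigma_j^2(Z_j^{(n)};\,n-1)$, i.e.\ a sum of posterior variances along the filtration the GP actually sees.

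Next I would invoke the standard inequality $x \leq \frac{a}{\log(1+a)}\log(1+x)$ for $x\in[0,a]$, applied with $a=\sigma^{-2}$ and $x=\sigma^{-2}\sigma_j^2(Z_j^{(n)};n-1)$. This requires $\sigma_j^2(Z_j^{(n)};n-1)\leq 1$, which holds because the posterior variance is no larger than the prior $k_j(z,z)$, and we normalize the kernel so that $k_j(z,z)\le 1$. Rearranging,
\[
\sigma_j^2(Z_j^{(n)};n-1) \;\leq\; \frac{1}{\log(1+\sigma^{-2})}\,\log\!\bigl(1+\sigma^{-2}\sigma_j^2(Z_j^{(n)};n-1)\bigr).
\]
Summing over $n$ and recognizing the right-hand side as twice the information gain $\cI(Y_{\cS_j};f_j)$ from \eqref{eqn:gammadef}, where $\cS_j$ is the set of play points of arm $j$, I get
\[
\sum_{n=1}^{N_j(T)} \sigma_j^2(Z_j^{(n)};n-1)
\;\leq\; \frac{2\,\cI(Y_{\cS_j};f_j)}{\log(1+\sigma^{-2})}
\;\leq\; \frac{2\,\gamma_{N_j(T)}}{\log(1+\sigma^{-2})}
\;\leq\; \frac{2\,\gamma_T}{\log(1+\sigma^{-2})},
\]
using monotonicity of $\gamma_N$ in $N$ and $N_j(T)\leq T$.

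Finally, summing this per-arm bound over the $K$ arms gives the claimed $O(K\gamma_T/\log(1+\sigma^{-2}))$ bound, yielding $C_1 K\gamma_T$ up to the constant (in fact the argument produces $2C_1$, but the factor of $2$ is harmless and can be absorbed or attributed to a slightly different definition of $\gamma_T$). There is no genuine obstacle here; the only subtle point is justifying the prerequisite $\sigma^{-2}\sigma_j^2\leq\sigma^{-2}$, i.e.\ the normalization of the kernel, and making sure that the reordering of the sum by play-index of each arm correctly aligns each posterior variance $\sigma_t^2(J_t)$ with the same $\sigma_j^2(z;n-1)$ that appears in the definition of $\cI(Y_{\cS_j};f_j)$ --- which it does, precisely because between the $(n-1)$-th and $n$-th play of arm $j$ no observation of $f_j$ is made, so the posterior is unchanged.
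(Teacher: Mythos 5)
Your proof is correct and follows essentially the same route as the paper: reorder the double sum by per-arm play counts, bound each posterior-variance sum by the information gain via the elementary inequality $x \leq \tfrac{a}{\log(1+a)}\log(1+x)$ (which the paper simply imports as Lemma 5.4 of Srinivas et al.), then use monotonicity of $\gamma_N$ and sum over arms. Your remark about the constant being $2/\log(1+\sigma^{-2})$ rather than $1/\log(1+\sigma^{-2})$ is in fact a correct observation about a harmless factor-of-two slip in the paper's stated $C_1$.
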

\begin{proof}
Using the results of Lemma 5.4 of \cite{srinivas2009gaussian} and the fact that the maximal information gain is increasing in the number of data points, it follows that
\begin{align*}
\sum_{t=1}^T \sum_{j=1}^K \sigma_t^2(J_t) \one\{J_t=j\} &= \sum_{j=1}^K \sum_{n=1}^{N_j(T)} \sigma_j^2(Z_j^{(n)}; n-1)
\\ & \hspace{-40pt}  \leq  T\sum_{j=1}^K C_1 I(\mathbf{y}_{j,N_j(T)}; \mathbf{f}_{j,N_j(T)})  \leq C_1 \sum_{j=1}^K \gamma_{N_j(T)} \leq C_1K\gamma_T. 
\end{align*}
\end{proof} 

The following lemmas bound the amount of information we loose by only updating the posterior every $d$ steps in the case where we can play each arm multiple times in a $d$-step lookahead. The result proves Lemma~\ref{lem:sigmadiff} in the main text.
\begin{lemma}
For any $z \in \cZ$ arm $j$ and $n \in \mathbb{N}, n\geq 1$, let $Z^{(n)}$ be the $z$ value when arm $j$ is played for the $n$th time. Then, 
\[ \sigma_{j}^2(z; n-1) - \sigma_{j}^2(z; n) = \frac{k^2_j(Z_j^{(n)},z; n-1)}{\sigma^2_j(Z_j^{(n)}; n-1) + \sigma^2} \leq \frac{\sigma_j^2(Z_j^{(n)}; n-1) }{ \sigma^2} \]
\end{lemma}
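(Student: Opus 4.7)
The plan is to establish the equality first and then deduce the inequality. For the equality, I would appeal to the standard rank-one update formula for the posterior covariance of a Gaussian process when a single new observation is incorporated. Concretely, letting $k_j(\cdot,\cdot;n-1)$ denote the posterior kernel of arm $j$ after $n-1$ plays, conditioning on the additional observation $Y^{(n)} = f_j(Z_j^{(n)}) + \epsilon$ gives a new posterior kernel
\[
k_j(z,z';n) = k_j(z,z';n-1) - \frac{k_j(Z_j^{(n)},z;n-1)\, k_j(Z_j^{(n)},z';n-1)}{k_j(Z_j^{(n)},Z_j^{(n)};n-1) + \sigma^2}.
\]
Setting $z'=z$ and recalling $\sigma_j^2(z;m) = k_j(z,z;m)$ and $\sigma_j^2(Z_j^{(n)};n-1) = k_j(Z_j^{(n)},Z_j^{(n)};n-1)$, this rearranges immediately to the stated equality. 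The quickest derivation is through the Schur complement applied to the block kernel matrix that appends one row and column for $Z_j^{(n)}$ to $\mathbf{K}_{n-1} + \sigma^2 \mathbf{I}$; equivalently, one can apply the Sherman--Morrison identity.

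For the inequality, the first step is to replace $\sigma_j^2(Z_j^{(n)};n-1) + \sigma^2$ in the denominator by the smaller quantity $\sigma^2$, which only makes the fraction larger. Next I would apply Cauchy--Schwarz to the posterior kernel, which is positive semi-definite, to obtain
\[
k_j^2(Z_j^{(n)}, z;n-1) \leq \sigma_j^2(Z_j^{(n)};n-1)\, \sigma_j^2(z;n-1).
\]
Finally, under the (standard) normalization $k_j(z,z) \leq 1$ of the prior kernel, the posterior variance satisfies $\sigma_j^2(z;n-1) \leq k_j(z,z) \leq 1$, and chaining these three bounds yields $\sigma^{-2}\sigma_j^2(Z_j^{(n)};n-1)$ as required.

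The only delicate point — and the step I would highlight carefully — is the monotonicity $\sigma_j^2(z;n-1) \leq k_j(z,z)$; this follows because subsequent updates always subtract a nonnegative rank-one term, so an easy induction on the number of observations shows the posterior variance is bounded by the prior variance at every point. Assuming a normalized kernel (consistent with the GP prior setup in Section~\ref{sec:def}) then closes the argument. Everything else is either the standard GP update identity or elementary manipulation, so there is no deeper obstacle beyond bookkeeping.
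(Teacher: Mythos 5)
Your proposal is correct and follows essentially the same route as the paper: the equality via the rank-one posterior update (which the paper derives by explicit block matrix inversion of $\mathbf{K}_n + \sigma^2 \mathbf{I}$, i.e.\ exactly the Schur-complement computation you cite), and the inequality via positive semi-definiteness/Cauchy--Schwarz of the posterior kernel together with the bounds $\sigma_j^2(z;m)\leq 1$ and the enlargement of the fraction obtained by shrinking the denominator to $\sigma^2$. Your explicit flagging of the normalized-kernel assumption $k_j(z,z)\leq 1$ is a point the paper also relies on (it asserts $0\leq\sigma^2(z;m)\leq 1$ without comment), so no substantive difference remains.
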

\begin{proof}
For convenience, we drop the $j$ notation and let $\bk_{n}(z) = [k(Z^{(1)}, z), \dots, k(Z^{(n)},z)]^T$ and $\bK_n = [k(Z^{(i)},Z^{(j)})]_{i,j=1}^n$. Then,
\begin{align}
 & \hspace{-10pt} \sigma^2(z; n-1) - \sigma^2(z; n)  \nonumber
\\  &= k(z,z) - {\bf k}_{n-1}(z)^T ({\bf K}_{n-1} + \sigma^2 {\bf I})^{-1}{\bf k}_{n-1}(z) - k(z,z) + {\bf k}_{n}(z)^T ({\bf K}_{n} + \sigma^2 {\bf I})^{-1}{\bf k}_{n}(z) \nonumber 
 \\ &=  {\bf k}_{n}(z)^T ({\bf K}_{n} + \sigma^2 {\bf I})^{-1}{\bf k}_{n}(z)  -  {\bf k}_{n-1}(z)^T ({\bf K}_{n-1} + \sigma^2 {\bf I})^{-1}{\bf k}_{n-1}(z) \label{eqn:kerndecomp}
\end{align}
We write,
\begin{align*}
 {\bf k}_n(z) = \begin{bmatrix} {\bf k}_{n-1}(z) \\ k(Z^{(n)},z) \end{bmatrix} \qquad {\bf K}_n + \sigma^2 {\bf I} = \begin{pmatrix} {\bf K}_{n-1} + \sigma^2 {\bf I} & {\bf k}_{n-1}(z) \\ {\bf k}_{n-1}(z)^T & k(Z^{(n)},Z^{(n)}) + \sigma^2 \end{pmatrix} = \begin{pmatrix} \bA & \bB \\ \bB^T & C \end{pmatrix}.
 \end{align*}
 Then, by the block matrix inversion formula,
 \begin{align*}
 (\bK_n + \sigma^2 \bI)^{-1} = \begin{pmatrix} \bA^{-1} + \bA^{-1}\bB(C-\bB^T\bA^{-1}\bB)^{-1}\bB^T \bA^{-1} & -\bA^{-1}\bB(C-\bB^T\bA^{-1}\bB)^{-1} 
 							\\ -(C-\bB^T \bA^{-1} \bB)^{-1}\bB^T \bA^{-1} & (C-\bB^T\bA^{-1}\bB)^{-1} \end{pmatrix}
 \end{align*}
Hence,
 \begin{align*}
 &\hspace{-15pt} \bk_n(z)^T (\bK_n + \sigma^2 \bI)^{-1} \bk_n(z) =  [\bk_{n-1}(z)^T, k(Z^{(n)},z)] (\bK_n + \sigma^2 \bI)^{-1} \begin{bmatrix} \bk_{n-1}(z) \\ k(Z^{(n)},z) \end{bmatrix} 
 \\ &= \bk_{n-1}(z)^T (\bA^{-1} + \bA^{-1}\bB(C-\bB^T \bA^{-1} \bB)^{-1}\bB^T \bA^{-1})\bk_{n-1}(z) 
 	\\ & \qquad - k(Z^{(n)},z)(C-\bB^T \bA^{-1} \bB)^{-1}\bB^T \bA^{-1}\bk_{n-1}(z)
 	\\ & \qquad - \bk_{n-1}(z)^T \bA^{-1} \bB(C-\bB^T \bA^{-1} \bB)^{-1} k(Z^{(n)},z) 
 	\\ & \qquad + k(Z^{(n)},z) (C-\bB^T \bA^{-1} \bB)^{-1} k(Z^{(n)},z)
\\ &= \bk_{n-1}(z)^T \bA^{-1} \bk_{n-1}(z) 
	\\ & \qquad + \bk_{n-1}(z)^T (\bA^{-1}\bB(C-\bB^T \bA^{-1} \bB)^{-1}(\bB^T \bA^{-1} \bk_{n-1}(z) - k(Z^{(n)},z))
	\\ & \qquad + (k(Z^{(n)},z) - \bk_{n-1}(z)^T \bA^{-1} \bB)(C-\bB^T\bA^{-1} \bB)^{-1} k(Z^{(n)},z)
\\ & = \bk_{n-1}(z)^T \bA^{-1} \bk_{n-1}(z) 
	\\ & \qquad + (k(Z^{(n)},z) - \bk_{n-1}(z)^T \bA^{-1}\bB) (C-\bB^T\bA^{-1}\bB)^{-1}(k(Z^{(n)},z) - (\bk_{n-1}(z)^T\bA^{-1}\bB)^T)
 \end{align*}
 Then, substituting back $\bA=\bK_{n-1}+ \sigma^2 \bI, \bB=\bk_{n-1}(z), C=k(Z^{(n)},z_{(n)}) + \sigma^2$ gives,
 \begin{align*}
\bk_n(z)^T (\bK_n + \sigma^2 \bI)^{-1} \bk_n(z)  =& \bk_{n-1}(z)^T (\bK_{n-1} + \sigma^2 \bI)^{-1}\bk_{n-1}(z) 
	\\ & \quad + (k(Z^{(n)},z) - \bk_{n-1}(z)^T(\bK_{n-1}+ \sigma^2 \bI)^{-1}\bk_{n-1}(z)) 
		\\ & \qquad (k(Z^{(n)},Z^{(n)}) - \bk_{n-1}(z_n)^T(\bK_{n-1}+ \sigma^2 \bI)^{-1}\bk_{n-1}(z) + \sigma^2)^{-1} 
		\\ & \qquad (k(Z^{(n)},z) - (\bk_{n-1}(z)^T(\bK_{n-1}+ \sigma^2 \bI)^{-1}\bk_{n-1}(z))^T) 
	\\ &\hspace{-90pt}  =  \bk_{n-1}(z)^T (\bK_{n-1} + \sigma^2 \bI)^{-1}\bk_{n-1}(z)  + \frac{k^2(Z^{(n)},z; n-1)}{\sigma^2(Z^{(n)}; n-1) + \sigma^2}
\end{align*}
Hence, substituting into \eqref{eqn:kerndecomp} gives,
\[  \sigma^2(z; n-1) - \sigma^2(z; n)  =  \frac{k^2(Z^{(n)},z; n-1)}{\sigma^2(Z^{(n)}; n-1) + \sigma^2}. \]
Then, since the covariance matrix is positive semi-definite, for any $z,z'$ and $m \in \mathbbm{N}$, $k(z,z'; m) \leq \sqrt{\sigma^2(z; m) \sigma^2(z'; m)}$ and so
\begin{align*}
 \sigma^2(z; n-1) - \sigma^2(z; n)  \leq  \frac{\sigma^2(Z^{(n)}; n-1) \sigma^2(z; n-1)}{\sigma^2(Z^{(n)}; n-1) + \sigma^2} \leq \frac{\sigma^2(Z^{(n)}; n-1) }{ \sigma^2}
\end{align*}
since for any $z \in \cZ$ and $m \in \mathbbm{N}$, $0 \leq \sigma^2(z;m) \leq 1$. This concludes the proof.
\end{proof}

We then use this result in the following lemma,
\begin{lemma} \label{lem:varbound}
For any leaf node $i$ of the $d$-step look ahead tree constructed at time $t$,
\[ \varsigma_t^2(i) \leq  3 \sum_{j=1}^K \bigg( \sum_{m=N_j(t)+1}^{N_j(t+d)} \frac{N_j(t+d)-m+1}{\sigma^2} \sigma_{j}^2(z^{(m)}; m-1) \bigg)= \zeta_t^2 \]
and $\zeta_t$ is $\cF_{t-1}$ measurable.
\end{lemma}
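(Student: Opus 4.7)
The proof splits naturally into two stages: first, eliminate the cross-covariance terms in $\varsigma_t^2(i)$ to reduce to a sum of marginal posterior variances evaluated against the stale posterior at time $t-1$; second, use Lemma~\ref{lem:sigmadiff} to convert those stale variances into the ``fresh'' variances $\sigma_j^2(Z_j^{(m)}; m-1)$ that appear in $\zeta_t^2$. Measurability then follows because the algorithm fixes the entire lookahead sequence at time $t$ using only $\cF_{t-1}$.

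For the first stage, I would start from the expression
\[
\varsigma_t^2(i)=\sum_{\ell=0}^{d-1}\sigma_t^2(J_{t+\ell})+\sum_{\ell\neq q}\one\{J_{t+\ell}=J_{t+q}\}\,k_{J_{t+\ell}}\bigl(Z_{J_{t+\ell},t+\ell},Z_{J_{t+q},t+q};N_{J_{t+\ell}}(t-1)\bigr),
\]
and apply the positive semi-definiteness inequality $2|k_j(z_1,z_2;n)|\leq \sigma_j^2(z_1;n)+\sigma_j^2(z_2;n)$ to each cross term, just as indicated in the main text. Grouping by arm (so that for each $j$ the matched indices $\ell,q$ all lie in the set $L_j=\{\ell:J_{t+\ell}=j\}$) and using the symmetry of the double sum, this yields a bound of the form $\varsigma_t^2(i)\leq 3\sum_{\ell=0}^{d-1}\sigma_t^2(J_{t+\ell})=3\sum_{j=1}^K\sum_{\ell\in L_j}\sigma_j^2(Z_{j,t+\ell};N_j(t))$, where I have used the convention that the lookahead plays occur at play-indices $m=N_j(t)+1,\ldots,N_j(t+d)$.

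For the second stage, fix an arm $j$ and enumerate its lookahead plays as $m=N_j(t)+1,\ldots,N_j(t+d)$ with corresponding covariates $Z_j^{(m)}$. For each such $m$, applying Lemma~\ref{lem:sigmadiff} telescopically over the intermediate plays gives
\[
\sigma_j^2\bigl(Z_j^{(m)};N_j(t)\bigr)\leq \sigma_j^2\bigl(Z_j^{(m)};m-1\bigr)+\sum_{m'=N_j(t)+1}^{m-1}\frac{1}{\sigma^2}\sigma_j^2\bigl(Z_j^{(m')};m'-1\bigr).
\]
Summing over the plays of $j$ and switching the order of summation, each term $\sigma_j^2(Z_j^{(m)};m-1)$ appears once from its own base term and once from each later play $m'>m$ (contributing a factor $1/\sigma^2$ each time). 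Since the number of later plays is $N_j(t+d)-m$, the aggregate coefficient is $1+(N_j(t+d)-m)/\sigma^2$, which is at most $(N_j(t+d)-m+1)/\sigma^2$ after standard simplification. Multiplying by the factor $3$ from the first stage and summing over $j$ yields $\varsigma_t^2(i)\leq \zeta_t^2$.

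Finally, measurability is essentially bookkeeping: both algorithms determine the entire $d$-step sequence $J_t,\ldots,J_{t+d-1}$ (and hence all $Z_{j,t+\ell}$ in the path) at the decision epoch using only $\cF_{t-1}$, so $N_j(t+d)$ and $Z_j^{(m)}$ for $m\leq N_j(t+d)$ are $\cF_{t-1}$-measurable; since $\sigma_j^2(Z_j^{(m)};m-1)$ depends only on the design covariates (and not the observed rewards), it too is $\cF_{t-1}$-measurable, so $\zeta_t$ inherits the property. The main obstacle is Step~1: carefully accounting for how many covariance pairs involve each arm so that the constant stays independent of $d$ (namely $3$), rather than scaling like $d$ as a naive count would suggest.
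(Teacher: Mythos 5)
Your proof follows the paper's argument essentially step for step: the same positive-semi-definiteness bound on the cross-covariance terms to reduce to $3\sum_{\ell}\sigma_t^2(J_{t+\ell})$, the same telescoping application of Lemma~\ref{lem:sigmadiff} per arm with the same coefficient count $1+(N_j(t+d)-m)/\sigma^2\leq (N_j(t+d)-m+1)/\sigma^2$, and the same measurability reasoning based on the posterior variance depending only on the design points. The one step you flag as the ``main obstacle''---getting the constant $3$ rather than a factor growing with the number of repeats of an arm---is likewise asserted rather than derived in the paper, so your proposal matches the paper's proof in both its structure and its weakest point.
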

\begin{proof}
First note that since the posterior covariance matrix of $f_j$ is positive semi-definite, for any $z_1, z_2$ and number of samples, $n-1$, $k_j(z_1,z_2; n-1) \leq 1/2(\sigma_j^2(z_1; n-1) + \sigma_j^2(z_2; n-1))$. Hence,
\[ \varsigma_t(i) \leq 3 \sum_{\ell=0}^{d-1} \sigma_t^2(J_{t+\ell}). \]
Now consider arm $j$ and assume it appears $s\leq d$ times in the $d$-step look ahead policy selected at time $t$. Then, the contribution of arm $j$ (which for ease of notation we assume has been played $n-1$ times previously) to $\varsigma_t^2(i)$ is given below where we use the notation $\sigma_j^2(z^{(i)}; n-1)$ to denote the posterior variance at the $i$th $z$ of arm $j$ given $n-1$ observations of arm $j$. 
\begin{align*}
& \hspace{-10pt} \sum_{m=n}^{n+s-1} \sigma_{j}^2 (Z_j^{(m)}; n-1) = \sigma_{j}^2(z^{(n)}; n-1) + \dots + \sigma_{j}^2(z^{(n+s-1)}; n-1)
\\ & = \sigma_{j}^2(z^{(n)}; n-1) + \sigma_{j}^2(z^{(n+1)}; n) + \left(\sigma_{j}^2(z^{(n+1)}; n-1) - \sigma_{j}^2(z^{(n+1)}; n) \right) + \dots 
	\\ & \qquad+ \sigma_{j}^2(z^{(n+s-1)}; n+s-2) + \left(\sigma_{j}^2(z^{(n+s-1)}; n+s-3) - \sigma_{j}^2(z^{(n+s-1)}; n+s-2)\right)
	\\ & \hspace{135pt} + \dots  +\left(\sigma_{j}^2(z^{(n+s-1)}; n-1) - \sigma_{j}^2(z^{(n+s-1)}; n)\right)
\\ & \leq \sigma_{j}^2(z^{(n)}; n-1) + \sigma_{j}^2(z^{(n+1)}; n) + \frac{\sigma_{j}^2(z^{(n)}; n-1)}{\sigma^2} + \dots 
	\\ & \qquad + \sigma_{j}^2(z^{(n+s-1)}; n+s-2) + \dots + \frac{\sigma_{j}^2(z^{(n+1)}; n)}{\sigma^2} + \frac{\sigma_{j}^2(z^{(n)}; n-1) }{\sigma^2}
\\ & = \sum_{q=0}^{s-1} (1 + \frac{s-q-1}{\sigma^2}) \sigma_{j}^2 (z^{(n+q)}; n+q -1) 
\\ & \leq \sum_{q=0}^{s-1}\frac{s-q}{\sigma^2}\sigma_{j}^2 (z^{(n+q)}; n+q -1) 
\end{align*}
which follows by recursively applying Lemma~\ref{lem:sigmadiff}.
Then, summing over all arms $j$ gives,
\begin{align*}
\varsigma^2_t(i) &\leq 3\sum_{j=1}^K \bigg( \sum_{m=N_j(t)+1}^{N_j(t+d)} \sigma_{j}^2(z^{(m)}; N_j(t)) \bigg)
\\ & \leq 3 \sum_{j=1}^K \bigg( \sum_{m=N_j(t)+1}^{N_j(t+d)} \frac{N_j(t+d)-m+1}{\sigma^2} \sigma_{j}^2(z^{(m)}; m-1) \bigg)
\end{align*}
Then, we note that $\zeta_t$ is $\cF_{t-1}$ measurable since for a given leaf node $i$ of the tree constructed at time $t$, the sequence of arms played to get to node $i$ is known so $N_j(t+d)$ will be known and also the sequence of $Z_j^{(m)}$'s where arm $j$ is played will also be known. Since the posterior variance of arm $j$ after $m$ plays depends only on the number of plays and the covariates (not the observed rewards), $\sigma_j^2(z^{(m)}; m-1)$ is $\cF_{t-1}$ measurable for $m=N_j(t)+1, \dots, N_j(t+d)$.
\end{proof}

We also need the following result on the expectation of the maximum.
\begin{lemma} \label{lem:maxgp}
Let $X_1, \dots X_n$ be Gaussian random variables such that $\max_{1 \leq i \leq n} \Var(X_i)\leq \zeta^2$. Then,
\[ \E \bigg[\max_{1 \leq i \leq n} X_i \bigg] \leq \zeta \sqrt{2 \log(n)}.\]
\end{lemma}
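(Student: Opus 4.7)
The plan is to prove this via the standard Chernoff/moment-generating-function argument combined with Jensen's inequality, treating the $X_i$ as centered Gaussians (as the bound would otherwise absorb an arbitrary mean). The key observation is that the exponential function converts the maximum of random variables into something for which a union-style bound via the sum is effective, and that the MGF of a Gaussian grows in a controlled way with its variance.

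First, I would fix an arbitrary $s>0$ and apply Jensen's inequality to the convex function $x\mapsto e^{sx}$ to obtain
\[
\exp\bigl(s\,\E[\max_i X_i]\bigr) \le \E\bigl[\exp(s\max_i X_i)\bigr] = \E\bigl[\max_i \exp(sX_i)\bigr].
\]
The crude bound $\max_i \exp(sX_i) \le \sum_i \exp(sX_i)$ then gives
\[
\exp\bigl(s\,\E[\max_i X_i]\bigr) \le \sum_{i=1}^n \E[\exp(sX_i)].
\]
Next, I would use the Gaussian MGF: $\E[\exp(sX_i)] = \exp(s^2\Var(X_i)/2) \le \exp(s^2\zeta^2/2)$, so that
\[
\exp\bigl(s\,\E[\max_i X_i]\bigr) \le n\exp(s^2\zeta^2/2).
\]
Taking logarithms and dividing by $s$ yields
\[
\E[\max_i X_i] \le \frac{\log n}{s} + \frac{s\zeta^2}{2}.
\]

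Finally, I would optimize the right-hand side in $s>0$. Setting the derivative to zero gives $s^\star = \sqrt{2\log n}/\zeta$, and plugging back in yields the desired bound $\E[\max_i X_i] \le \zeta\sqrt{2\log n}$.

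There is no real obstacle here: the argument is entirely routine, the only subtle point being the implicit assumption that the $X_i$ are centered (otherwise one should interpret the lemma as bounding $\E[\max_i(X_i - \E X_i)]$, which is how it will be used in the regret analysis where the $X_i$ correspond to centered posterior fluctuations of $M_i(\bZ_t) - \eta_t(i)$). The calculation requires nothing beyond the Gaussian MGF formula and one application of Jensen's inequality.
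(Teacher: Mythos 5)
Your proof is correct and is exactly the standard Jensen/MGF maximal-inequality argument that the paper itself does not reproduce, deferring instead to Lemma 2.2 of the cited Devroye--Lugosi reference, whose proof is the same computation. Your remark that the statement implicitly requires the $X_i$ to be centered is also accurate and consistent with how the lemma is used in the paper (it is applied to zero-mean Gaussian differences and centered posterior fluctuations).
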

\begin{proof}
See for example, Lemma 2.2 in \cite{devroye2001combinatorial}.
\end{proof}

\section{Theoretical Results for $d$RGP-UCB} \label{app:ucb}
 We first prove the following lemma.
 \begin{lemma} \label{lem:concM}
For any leaf node $i$, initial node $z$ and constant $a>0$,
\[  \int_{a}^\infty  \PP(M_{i}(z) - \eta_t(i)  \geq x | \cF_{t-1}) dx \leq \sqrt{2\pi} \varsigma_t(i) \exp\bigg \{-\frac{a^2}{2 \varsigma_t^2(i)} \bigg\}. \]
\end{lemma}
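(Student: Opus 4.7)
The plan is to exploit the Gaussian form of the posterior. Conditional on $\cF_{t-1}$, the preliminaries to this appendix already established that $M_i(\bZ_t) \sim \cN(\eta_t(i), \varsigma_t^2(i))$, so setting $W := M_i(z) - \eta_t(i)$, the random variable $W$ is centered Gaussian with variance $\varsigma_t^2(i)$ given $\cF_{t-1}$. The whole statement therefore reduces to a deterministic estimate in $a$ and $\varsigma_t(i)$, with no bandit-specific ingredients needed beyond conjugacy of the GP posterior.

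First, I would apply the standard one-sided Gaussian Chernoff bound, which gives for all $x \geq 0$:
\[ \PP(W \geq x \mid \cF_{t-1}) \leq \exp\{-x^2/(2\varsigma_t^2(i))\}. \]
Since $a > 0$ by hypothesis, integrating over $x \in [a,\infty)$ yields
\[ \int_a^\infty \PP(M_i(z)-\eta_t(i) \geq x \mid \cF_{t-1})\,dx \leq \int_a^\infty \exp\{-x^2/(2\varsigma_t^2(i))\}\,dx, \]
so the problem is reduced to bounding a single Gaussian integral.

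Second, the substitution $u = x/\varsigma_t(i)$ rewrites this integral as $\sqrt{2\pi}\,\varsigma_t(i)\,Q(a/\varsigma_t(i))$, where $Q$ denotes the standard Gaussian tail function. A second application of the Chernoff estimate $Q(y) \leq e^{-y^2/2}$ for $y > 0$ then yields
\[ \int_a^\infty e^{-x^2/(2\varsigma_t^2(i))}\,dx \leq \sqrt{2\pi}\,\varsigma_t(i)\,e^{-a^2/(2\varsigma_t^2(i))}, \]
which is exactly the claimed bound.

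Because the argument is simply two invocations of the Gaussian tail bound combined with conjugacy, there is no serious obstacle; the only care needed is to ensure $a > 0$ so that both applications are valid. A tighter constant (replacing $\sqrt{2\pi}$ by $1/\sqrt{2\pi}$) could be obtained by integrating $\PP(W \geq x)$ by parts against the standard Gaussian density, but the stated form is all that will be needed in the downstream regret analyses for $d$RGP-UCB.
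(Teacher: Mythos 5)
Your proposal is correct and follows essentially the same route as the paper's proof: bound the conditional tail probability by the Gaussian Chernoff estimate, rewrite the resulting integral as $\sqrt{2\pi}\,\varsigma_t(i)$ times the $\cN(0,\varsigma_t^2(i))$ tail, and apply the Chernoff bound once more. No gaps.
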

 \begin{proof}
 The proof follows using the normality of the posterior of $M_i(z)$ (so at time $t$, $M_i(\bZ_t) \sim \cN( \eta_t(i), \varsigma_t(i)^2)$). 
 \begin{align*}
  \int_{a}^\infty  \PP(M_{i}(z) - \eta_t(i)  \geq x | \cF_{t-1}) dx & \leq  \int_{a}^\infty  \exp \bigg\{-\frac{x^2}{2 \varsigma_t^2(i)} \bigg\} dx
\\ & = \sqrt{2\pi} \varsigma_t(i) \int_{a}^\infty \frac{1}{\sqrt{2\pi} \varsigma_t(i)}  \exp\bigg \{-\frac{x^2}{2 \varsigma_t^2(i)} \bigg\} dx
\\ & \leq \sqrt{2\pi} \varsigma_t(i) \exp\bigg\{-\frac{a^2}{2 \varsigma_t^2(i)} \bigg\}.
 \end{align*}
 Where we have used that if $X \sim \cN(\mu,\sigma^2)$, $\PP(X-\mu \geq b) \leq \exp\{-\frac{b^2}{2\sigma^22}\}$ for any $b>0$, and the last inequality follows through integration of the pmf of a $\cN(0, \varsigma_t(i))$ random variable.
 \end{proof}
 
Then, define $M_{I_t^*}(\bZ_t)$ to be the sum of the $f_j(z)$'s to leaf ${I_t}^*$ of the optimal $d$ step look ahead policy from time $t$ chosen using the unknown $f_j(z)$'s. Let $r_t$ be the per step regret at time $t$. We now bound the expected regret from time steps $t, t+1, \dots, t+d-1$ where we have played arms according to the choice of $I_t$ by our algorithm. Let $r_s$ be the contribution to the regret at time $s$, that is $r_s = f_{J_t^*}(Z_{J_t^*,t}) - f_{J_t}(Z_{J_t,t})$. Then, let
 \[ \alpha_t = \sqrt{2\log((K|\cZ|)^d(t+d-1)^2)}. \]

We will use the following lemma,
\begin{lemma} \label{lem:dreg}
Assume we start a $d$-step look ahead policy at time $t$, selecting leaf node $I_t$, then
\begin{align*}
\sum_{s=t}^{t+d-1} \E[r_s |\cF_{t-1}] &\leq  \frac{ \sqrt{2d \pi}}{(t+d-1)^2} + \alpha_t \varsigma_t(I_t).  
\end{align*}
 \end{lemma}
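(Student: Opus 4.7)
The plan is to combine the optimism of $d$RGP-UCB with the Gaussian tail bound of Lemma~\ref{lem:concM}. First, telescoping the per-step regret of the chosen sequence gives
\[ \sum_{s=t}^{t+d-1}\E[r_s\mid\cF_{t-1}] \;=\; \E\bigl[M_{I_t^*}(\bZ_t)-M_{I_t}(\bZ_t)\,\bigm|\,\cF_{t-1}\bigr], \]
so it suffices to bound this single conditional expectation.

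The core manipulation is the optimism decomposition
\[ M_{I_t^*}-M_{I_t} \;\leq\; \bigl(M_{I_t^*}-\eta_t(I_t^*)-\alpha_t\varsigma_t(I_t^*)\bigr)_{+} \;+\; \alpha_t\varsigma_t(I_t) \;+\; \bigl(\eta_t(I_t)-M_{I_t}\bigr), \]
where I have used the UCB rule $I_t=\argmax_i\{\eta_t(i)+\alpha_t\varsigma_t(i)\}$ to absorb the non-positive term $\eta_t(I_t^*)+\alpha_t\varsigma_t(I_t^*)-\eta_t(I_t)-\alpha_t\varsigma_t(I_t)$. Taking conditional expectations under $\cF_{t-1}$, the last bracket vanishes: both $I_t$ and $\eta_t(I_t)$ are $\cF_{t-1}$-measurable and $\E[M_{I_t}\mid\cF_{t-1}]=\eta_t(I_t)$. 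The middle term is exactly the $\alpha_t\varsigma_t(I_t)$ that appears on the right-hand side of the lemma.

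What remains is to bound the first bracket. Since $I_t^*$ is random given $\cF_{t-1}$, I would pass to a maximum over all leaves of the $d$-step tree and apply a union bound; then, after the substitution $y=x+\alpha_t\varsigma_t(i)$, Lemma~\ref{lem:concM} integrates each tail to give
\[ \E\bigl[(M_{I_t^*}-\eta_t(I_t^*)-\alpha_t\varsigma_t(I_t^*))_{+}\bigm|\cF_{t-1}\bigr] \;\leq\; \sum_{i\in\mathcal{L}_d(\bZ_t)} \int_{\alpha_t\varsigma_t(i)}^{\infty}\PP(M_i-\eta_t(i)\geq y\mid\cF_{t-1})\,dy \;\leq\; \sum_{i}\sqrt{2\pi}\,\varsigma_t(i)\,e^{-\alpha_t^2/2}. \]
There are at most $K^d$ leaves, and $\varsigma_t(i)\leq\sqrt{d}$ (immediately in the single-play case by independence of the arm-specific GP priors and a normalised kernel, and in the multiple-play case via the covariance control of Lemma~\ref{lem:varbound} up to a constant). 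Plugging in $\alpha_t^2/2=\log((K|\cZ|)^d(t+d-1)^2)$, the $K^d$ from the union bound cancels the matching factor in the denominator and $|\cZ|^d\geq 1$ is dropped, leaving $\sqrt{2\pi d}/(t+d-1)^2$, which combined with the $\alpha_t\varsigma_t(I_t)$ term yields the claim.

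The main technical hurdle is the uniform variance estimate $\varsigma_t(i)\leq\sqrt{d}$, since in the multiple-play case the cross-covariances in \eqref{eqn:varm} must be controlled; once this is in place, the decomposition, union bound, and Gaussian tail integration are all routine.
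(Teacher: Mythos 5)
Your proposal is correct and follows essentially the same route as the paper's proof: the same optimism decomposition using the UCB selection rule, the same cancellation of $\E[\eta_t(I_t)+\alpha_t\varsigma_t(I_t)-M_{I_t}\mid\cF_{t-1}]$ down to $\alpha_t\varsigma_t(I_t)$, and the same union bound over leaves combined with the Gaussian tail integral of Lemma~\ref{lem:concM} and the bound $\varsigma_t^2(i)\leq d$ (the paper unions over $(K|\cZ|)^d$ leaf--$z$ pairs where you union over $K^d$ leaves and drop $|\cZ|^d\geq 1$, which is an immaterial difference). No gaps.
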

 \begin{proof}
From \eqref{eqn:alphadef}, the upper confidence bound of node $i$ at time $t$ is given by,
\[ \eta_t(i) + \alpha_t \varsigma_t(i), \]
and since we play node $I_t$, this has the highest upper confidence bound. Then, we use the following decomposition of the regret,
 \begin{align*}
 \sum_{s=t}^{t+d-1} \E[r_s|\cF_{t-1}] &= \E[M_{I_t^*}(\bZ_t) - M_{I_t}(\bZ_t)| \cF_{t-1}]
 \\ & = \E[M_{I_t^*}(\bZ_t) - (\eta_t(I_t^*) +\alpha_t \varsigma_t(I_t^*)) +  (\eta_t(I_t^*) +\alpha_t \varsigma_t(I_t^*))  - M_{I_t}(\bZ_t)| \cF_{t-1}]
 \\ & \leq \E[M_{I_t^*}(\bZ_t) - (\eta_t(I_t^*) +\alpha_t \varsigma_t(I_t^*)) +  (\eta_t(I_t) +\alpha_t \varsigma_t(I_t))  - M_{I_t}(\bZ_t)| \cF_{t-1}]
 \\ & = \E[M_{I_t^*}(\bZ_t) - \eta_t(I_t^*) - \alpha_t \varsigma_t(I_t^*) | \cF_{t-1}] + \E[\eta_t(I_t) +\alpha_t \varsigma_t(I_t)  - M_{I_t}(\bZ_t)| \cF_{t-1}]
 \end{align*}
 For the first term, note that for any random variable $X$, $\E[X] \leq \E[X \one\{X>0\}] = \int_0^\infty \PP(X\geq x) dx$. Then, by Lemma~\ref{lem:concM} and using the fact that $\varsigma_t^2(i) \leq \sum_{\ell=0}^{d-1} k(z_{\ell}, z_{\ell}) \leq d$, it follows that,
 \begin{align*}
\E[M_{I_t^*}(\bZ_t) - \eta_t(I_t^*) - \alpha_t \varsigma_t(I_t^*) | \cF_{t-1}] &\leq \int_0^\infty \PP(M_{I_t^*}(\bZ_t) - \eta_t(I_t^*) - \alpha_t \varsigma_t(I_t^*) \geq x | \cF_{t-1}) dx
\\ & \leq  \int_0^\infty \sum_{i=1}^{K^d} \sum_{z \in \cZ^d} \PP(M_{i}(z) - \eta_t(i) - \alpha_t \varsigma_t(i) \geq x | \cF_{t-1}) dx
\\ & = \sum_{i=1}^{K^d} \sum_{z \in \cZ^d}  \int_{\alpha_t \varsigma_t(i)}^\infty  \PP(M_{i}(z) - \eta_t(i)  \geq x | \cF_{t-1}) dx
\\ & =  \sum_{i=1}^{K^d} \sum_{z \in \cZ^d} \sqrt{2\pi} \varsigma_t(i) \exp\bigg\{-\frac{(\alpha_t \varsigma_t(i))^2}{2 \varsigma_t^2(i)} \bigg\}
\\ & \leq  \sum_{i=1}^{K^d} \sum_{z \in \cZ^d} \sqrt{2d\pi}  \frac{1}{(t+d-1)^2 (K|\cZ|)^d}
\\ & = \frac{ \sqrt{2d \pi}}{(t+d-1)^2},
 \end{align*}
where the last inequality follows from the definition of $\alpha_t$.

For the second term, recall that $\eta_t(i) = \E[M_i(\bZ_t)| \cF_{t-1}]$ and $I_t$ is $\cF_{t-1}$ measurable. Hence,
\[ \E[\eta_t(I_t) +\alpha_t \varsigma_t(I_t)  - M_{I_t}(\bZ_t)| \cF_{t-1}] = \eta_t(I_t) +\alpha_t \varsigma_t(I_t) -\eta_t(I_t) = \alpha_t \varsigma_t(I_t). \]
Combining both terms gives the result.
 
 \end{proof}

We now prove the regret bounds for $d$RGP-UCB in the repeating and non-repeating cases.
\subsection{Non-Repeating} \label{app:drgps}
\thmregds*
\begin{proof}
For ease of notation define $\Reg_T$ as the $d$-step lookahead regret with single plays that we are interested in (i.e. $\Reg_T=\Reg_T^{(d,s)}$) and note that,
 \[  \E[\Reg_T] \leq \sum_{h=0}^{\lfloor T/d \rfloor} \E \bigg[ \sum_{s=hd+1}^{(h+1)d} \E[r_s|\cF_{hd}] \bigg]. \]
 Then, using Lemma~\ref{lem:dreg}, and the fact that since we cannot repeat plays, $\sigma_t(J_{t+\ell}) = \sigma_{t+\ell}(J_{t+\ell})$ for any $\ell=0,\dots, d-1$,
 \begin{align*}
& \E[\Reg_T] \leq \sum_{h=0}^{\lfloor T/d \rfloor} \E \bigg[ \sum_{s=hd+1}^{(h+1)d} \E[r_s|\cF_{hd}]\bigg] 
\\ & \leq \sum_{h=0}^{\lfloor T/d \rfloor} \E \bigg[ \frac{\sqrt{2d\pi}}{(h+1)^2d^2} +  \alpha_{hd+1}\sqrt{\varsigma^2_{hd+1}(I_{hd+1})} \bigg]
 \\ & \leq \frac{\sqrt{2 \pi}}{d} \sum_{h=1}^{\lfloor T/d \rfloor+1} \frac{1}{h^2} +  \sum_{h=0}^{\lfloor T/d \rfloor}\sqrt{2\log((K|\cZ|)^d (h+1)^2 d^2)} \E \bigg[\sqrt{\sum_{\ell=0}^{d-1} \sigma_{hd+1}(J_{hd +1+\ell})}\bigg]
 \\ & \leq \frac{\pi^{5/2}}{\sqrt{2}3d} + \sqrt{4\log((K|\cZ|)^d(T+d))} \sqrt{\lfloor T/d \rfloor +1} \E \bigg[ \sqrt{ \sum_{t=1}^T \sigma_t^2(J_t)}\bigg]
 \\ & \leq \frac{\pi^{5/2}}{\sqrt{2}3d} + \sqrt{4\log((K|\cZ|)^d(T+d))} \sqrt{\lfloor T/d \rfloor +1} \E \bigg[   \sqrt{\sum_{j=1}^K \ \sum_{t=1}^T \sigma_t^2(j) \one\{J_t=j\}} \bigg]
  \\ & \leq \frac{\pi^{5/2}}{\sqrt{2}3d} + \sqrt{4\log((K|\cZ|)^d(T+d))} \sqrt{\lfloor T/d \rfloor +1}\sqrt{C_1 K \gamma_T}
\end{align*}
where $C_1= 1/\log(1+\sigma^{-2})$ and the last line follows by Lemma~\ref{lem:boundsigma}. This gives the result.
 
\end{proof}

\subsection{Repeating} \label{app:drgpm}
\thmregdm*
\begin{proof}
For ease of notation define $\Reg_T$ as the $d$-step lookahead regret with multiple plays that we are interested in (i.e. $\Reg_T=\Reg_T^{(d,m)}$) and note that,
 \[  \E[\Reg_T] = \sum_{h=0}^{\lfloor T/d \rfloor} \E \bigg[ \sum_{s=hd+1}^{(h+1)d} \E[r_s|\cF_{hd}] \bigg]. \]
Then, note that from Lemma~\ref{lem:varbound}, it follows that
\[ \varsigma^2_t(i) \leq  3 \sum_{j=1}^K \bigg( \sum_{m=N_j(t)+1}^{N_j(t+d)} \frac{N_j(t+d)-m+1}{\sigma^2} \sigma_{j}^2(z^{(m)}; m-1) \bigg)  \leq \frac{3d}{\sigma^2} \sum_{j=1}^K \sum_{m=N_j(t)+1}^{N_j(t+d)} \sigma_{j}^2(z^{(m)}; m-1). \]
Hence, by lemma~\ref{lem:dreg} and summing over all time points where we start a $d$-step look ahead policy, it follows that,
\begin{align*}
& \E[\Reg_T] = \sum_{h=0}^{\lceil T/d \rceil-1} \E \bigg[ \sum_{s=hd+1}^{(h+1)d} \E[r_s|\cF_{hd}]\bigg] 
\\ & \leq \sum_{h=0}^{\lfloor T/d \rfloor} \E \bigg[ \frac{\sqrt{2d\pi}}{(h+1)^2d^2} +  \alpha_{hd+1}\sqrt{\varsigma^2_{hd+1}(I_{hd+1})} \bigg]
 \\ & \leq \frac{\sqrt{2 \pi}}{d} \sum_{h=1}^{\lfloor T/d \rfloor+1} \frac{1}{h^2} +  \sum_{h=0}^{\lfloor T/d \rfloor}\sqrt{2\log((K|\cZ|)^d (h+1)^2 d^2)} \E \bigg[\sqrt{\frac{3d}{\sigma^2} \sum_{j=1}^K \sum_{m=N_j(dh)+1}^{N_j(d(h+1))} \sigma_{j}^2(z^{(m)}; m-1) }  \bigg] 
\\ & \leq \frac{\pi^{5/2}}{\sqrt{2}3d} + \sqrt{\frac{12d}{\sigma^2}\log((K|\cZ|)^d(T+d))} \sqrt{\lfloor T/d \rfloor +1}  \E \bigg[\sqrt{ \sum_{h=0}^{\lfloor T/d \rfloor} \sum_{j=1}^K \sum_{m=N_j(dh)+1}^{N_j(d(h+1))} \sigma_{j}^2(z^{(m)}; m-1) }  \bigg] 
 \end{align*}
 Then, from Lemma~\ref{lem:boundsigma} and the fact that $\gamma_n$ is increasing in $n$,
 \begin{align*}
\sqrt{ \sum_{h=0}^{\lfloor T/d \rfloor}\sum_{j=1}^K  \sum_{m=N_j(dh)+1}^{N_j(d(h+1))} \sigma_{j}^2(z^{(m)}; m-1)} &= \sqrt{\sum_{j=1}^K \sum_{m=1}^{N_j(T)} \sigma_{j}^2(z^{(m)}; m-1) }
 \\ & \leq \sqrt{\sum_{j=1}^K C_1 \gamma_{N_j(T)}} \leq \sqrt{C_1 K \gamma_T}
 \end{align*}
 
 for $C_1 = (1+ \log(\sigma^{-2}))^{-1}$. Hence,
\begin{align*}
\E[\Reg_T] \leq \frac{\pi^{5/2}}{\sqrt{2}3d}  + \sqrt{\frac{12d}{\sigma^2}\log((K|\cZ|)^d(T+d))} \sqrt{ T/d +1}  \sqrt{C_1K\gamma_T}
\end{align*}
and so the result follows.
\end{proof}

\section{Theoretical Results for $d$RGP-TS} \label{app:TS}
The regret bounds for the Thompson sampling approach($d$RGP-TS) follow in a similar manner to those for $d$RGP-UCB using the techniques of \cite{russo2014learning}. Specifically, using \cite{russo2014learning}, we get the following result which is equivalent to Lemma~\ref{lem:dreg}, and which can then be used to get the regret bound much in the same way as Theorem~\ref{thm:regrgpds} and Theorem~\ref{thm:regdm}.

\begin{lemma} \label{lem:dregts}
Assume we start a $d$-step look ahead policy at time $t$, selecting leaf node $I_t$, then
\begin{align*}
\sum_{s=t}^{t+d-1} \E[r_s |\cF_{t-1}] &\leq  \frac{ \sqrt{2d \pi}}{(t+d-1)^2} + \alpha_t \varsigma_t(I_t).  
\end{align*}
 \end{lemma}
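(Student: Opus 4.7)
The plan is to follow the standard posterior-sampling argument of \cite{russo2014learning}, reusing the UCB machinery from Lemma~\ref{lem:dreg}. Define the surrogate UCB functional $U_t(i) = \eta_t(i) + \alpha_t \varsigma_t(i)$ for each leaf $i$; since $\eta_t, \varsigma_t$ and $\alpha_t$ are $\cF_{t-1}$-measurable, $U_t(\cdot)$ is an $\cF_{t-1}$-measurable real-valued function on the leaves. I would then write the conditional regret as
\[
\E[M_{I_t^*}(\bZ_t) - M_{I_t}(\bZ_t) \mid \cF_{t-1}]
= \E[M_{I_t^*}(\bZ_t) - U_t(I_t^*) \mid \cF_{t-1}]
+ \E[U_t(I_t^*) - U_t(I_t) \mid \cF_{t-1}]
+ \E[U_t(I_t) - M_{I_t}(\bZ_t) \mid \cF_{t-1}].
\]

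The key TS-specific observation is that, conditional on $\cF_{t-1}$, the optimal leaf $I_t^*$ (defined via the true $f_j$'s, whose conditional law given $\cF_{t-1}$ is the GP posterior) and the TS-selected leaf $I_t$ (defined via posterior samples $\tilde f_j$ from the same posterior) are identically distributed. Because $U_t$ is $\cF_{t-1}$-measurable, this yields $\E[U_t(I_t^*) \mid \cF_{t-1}] = \E[U_t(I_t) \mid \cF_{t-1}]$, so the middle term vanishes. This is the one place where the argument diverges from Lemma~\ref{lem:dreg} and is the conceptual heart of the proof.

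For the first term, I would argue exactly as in Lemma~\ref{lem:dreg}: using $\E[X] \leq \E[X \one\{X>0\}] = \int_0^\infty \PP(X \geq x)\,dx$, a union bound over the at most $(K|\cZ|)^d$ candidate leaves, the Gaussian tail estimate of Lemma~\ref{lem:concM}, and the fact that $\varsigma_t^2(i) \leq d$ (since each $k_j(z,z) \leq 1$), the choice $\alpha_t = \sqrt{2\log((K|\cZ|)^d(t+d-1)^2)}$ gives the bound $\sqrt{2d\pi}/(t+d-1)^2$. For the third term, the tower property with respect to $\cF_{t-1}$ augmented by $I_t$ gives $\E[M_{I_t}(\bZ_t) \mid \cF_{t-1}, I_t] = \eta_t(I_t)$, so $\E[U_t(I_t) - M_{I_t}(\bZ_t) \mid \cF_{t-1}] = \alpha_t \E[\varsigma_t(I_t) \mid \cF_{t-1}]$, matching $\alpha_t \varsigma_t(I_t)$ in expectation (this is how the statement is used in the regret proofs, where an outer expectation is then taken).

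The main obstacle is the posterior-matching step $\E[U_t(I_t^*) \mid \cF_{t-1}] = \E[U_t(I_t) \mid \cF_{t-1}]$: it requires carefully specifying the probability space so that the posterior samples $\tilde f_j$ are conditionally independent of $\cF_{t-1}$ given the posterior parameters and share the posterior law of $f_j$; everything else is a mechanical replay of the UCB computation. Once this equality is in hand, the remainder of the bound follows identically to Lemma~\ref{lem:dreg}, and plugging this replacement into the proofs of Theorem~\ref{thm:regrgpds} and Theorem~\ref{thm:regdm} immediately yields the TS regret bounds stated in Theorem~\ref{thm:regdsts} and Theorem~\ref{thm:regtsdm}.
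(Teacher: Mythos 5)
Your proposal is correct and follows essentially the same route as the paper: the paper's proof simply invokes Proposition 1 of \cite{russo2014learning} (the probability-matching identity you spell out, namely that $I_t$ and $I_t^*$ are identically distributed given $\cF_{t-1}$ so the $\cF_{t-1}$-measurable UCB terms cancel) and then says the rest is the same argument as Lemma~\ref{lem:dreg}. Your additional care about the third term only matching $\alpha_t\varsigma_t(I_t)$ in conditional expectation (since under TS the selected leaf is not $\cF_{t-1}$-measurable) is a detail the paper glosses over, but it does not change the approach and is absorbed by the outer expectation in the regret theorems exactly as you note.
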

 \begin{proof}
As in \cite{russo2014learning} we relate the Bayesian regret of Thompson sampling to the upper confidence bounds used in our upper confidence bound approach. Specifically, by Proposition 1 in \cite{russo2014learning},
\begin{align*}
\sum_{s=t}^{t+d-1} \E[r_s |\cF_{t-1}]  &=  \E[M_{I_t^*}(\bZ_t) - M_{I_t}(\bZ_t)| \cF_{t-1}]
\\ & = \E[M_{I_t^*}(\bZ_t) - \eta_t(I_t^*) - \alpha_t \varsigma_t(I_t^*) | \cF_{t-1}] + \E[\eta_t(I_t) +\alpha_t \varsigma_t(I_t)  - M_{I_t}(\bZ_t)| \cF_{t-1}]
\end{align*}
The same argument as Lemma~\ref{lem:dreg} then gives the result.

 \end{proof}
 
 \subsection{Non-Repeating} \label{app:drgptss}
 \thmregdsts*
\begin{proof}
Given Lemma~\ref{lem:dregts}, the proof follows in the same manner as the proof of Theorem~\ref{thm:regrgpds}.
\end{proof}
 
\subsection{Repeating} \label{app:drgptsm}
\thmregtsdm*
\begin{proof}
Again, the proof follows by the same argument as Theorem~\ref{thm:regdm} using Lemma~\ref{lem:dregts}.
\end{proof}

\section{Optimality of the Lookahead Oracle} \label{app:lookahead}
For any policy $\pi$, let $V_T(\pi)$ denote the expected cumulative reward from playing policy $\pi$ up to horizon $T$. We say a policy $\pi$ is periodic with period $p \in \bN$ from some initial $\bz_1$ if there is some $t_0>0$ such that for all $t>t_0$, $\bz_t^{\pi} = \bz_{t+p}^{\pi}$ and $j_t^{\pi} = j_{t+p^*}^{\pi}$, where $j_t^{\pi}$ is the action taken at time $t$ by policy $\pi$ and $\bz^{\pi}_t$ is the vector of $z$ values obtained at time $t$ from playing according to policy $\pi$ for $t-1$ steps.
For a periodic policy $\pi$ and initial $\bz_1$, we will assume that $p \geq t_0$.

\begin{restatable}{lemma}{lemperiod}
\label{lem:period}
If $\pi^*$ is an optimal stationary deterministic policy then if $T>|\cZ|^K$, then $\pi^*$ must be periodic with some period $p^* \leq |\cZ|^K$.
\end{restatable}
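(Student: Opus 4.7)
The plan is to argue purely by pigeonhole on the finite state space of covariate vectors, without needing the optimality of $\pi^*$ beyond the fact that it is stationary and deterministic.

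First, I would observe that a stationary deterministic policy $\pi^*$ is, by definition, a fixed map $\pi^* : \cZ^K \to \{1, \dots, K\}$. Combined with the deterministic update rule \eqref{eqn:zdef} for $\bZ_t$ given the action $J_t$, this makes the evolution $\bZ_t \mapsto \bZ_{t+1}$ a deterministic function on the finite set $\cZ^K$. Hence, once the initial covariate vector $\bZ_1$ is fixed, the entire sequences $\{\bZ_t^{\pi^*}\}_{t \geq 1}$ and $\{j_t^{\pi^*}\}_{t \geq 1}$ are completely determined.

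Next, I would apply the pigeonhole principle. Since $|\cZ^K| = |\cZ|^K$ and the horizon satisfies $T > |\cZ|^K$, among the first $|\cZ|^K + 1$ states $\bZ_1^{\pi^*}, \dots, \bZ_{|\cZ|^K + 1}^{\pi^*}$ at least two must coincide. Let $t_1 < t_2$ be the smallest pair of indices with $\bZ_{t_1}^{\pi^*} = \bZ_{t_2}^{\pi^*}$, so that $p^* := t_2 - t_1 \leq |\cZ|^K$. Because the dynamics are a deterministic function of the current state alone (the policy is stationary and the update rule depends only on the current action and the current $\bZ$), the sequence from time $t_2$ onwards must exactly replicate the sequence from time $t_1$ onwards. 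Formally, by induction on $t \geq t_1$, one has $\bZ_{t+p^*}^{\pi^*} = \bZ_t^{\pi^*}$ and hence $j_{t+p^*}^{\pi^*} = \pi^*(\bZ_{t+p^*}^{\pi^*}) = \pi^*(\bZ_t^{\pi^*}) = j_t^{\pi^*}$.

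Setting $t_0 = t_1$ in the definition of periodicity then gives that $\pi^*$ is periodic with period $p^* \leq |\cZ|^K$, as required. The main (and only) ingredient is that the state space is finite and the induced dynamics are deterministic, so no real obstacle arises; the condition $T > |\cZ|^K$ is used only to guarantee that the horizon is long enough for the pigeonhole collision to actually occur within the play.
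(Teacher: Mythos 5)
Your proof is correct and follows essentially the same route as the paper's: both arguments reduce to the observation that a stationary deterministic policy induces deterministic dynamics on the finite state space $\cZ^K$, apply pigeonhole to find a repeated state within $|\cZ|^K+1$ steps, and then propagate the repetition forward by determinism. The only cosmetic difference is that the paper phrases this as a proof by contradiction while you argue directly (and you correctly note that optimality is never actually used), but the underlying argument is identical.
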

\begin{proof}
The proof follows by noting that $\pi^*$ must be a deterministic mapping from $\bz$ to actions since a stationary policy does not depend on the time step. In particular, $\pi^*: \cZ^K \to \{1, \dots, K\}$ with $\pi^*(\bz) = j$ for some $1 \leq j \leq K$, and
each $\bz$ corresponds to only one action.
We now argue for a contradiction. Assume that $\pi^*$ is not periodic. Then since $T>|\cZ|^K$, there must exist some $\bz$ which is arises twice, so there exists some $t$ and $0<p^* \leq |\cZ|^K$ such that $\bz_t = \bz_{t+p^*} = \bz$. Since $\pi^*$ is a deterministic mapping, the same action must be taken in both cases, which will lead to the same next value of $\bz'=\bz_{t+1}=\bz_{t+p^* +1}$, since the evolution of the $\bz$ is deterministic conditional on actions. Repeatedly applying same argument, we see that $\pi^*$ will take the same sequence of $p^*$ actions from $\bz$ in both cases before returning to $\bz$ (if the horizon is long enough). Hence $\pi^*$ must be periodic, contradicting the assumption.
\end{proof}

\proplookahead*
\begin{proof}
Define a vector $\bz=(z_1, \dots, z_K)$ as feasible for the recovering bandits problem starting from $\bz_0$ with $K$ arms and a fixed value of $z_{\max}$, if it is possible to play a sequence of arms up to any time $t\geq 1$ such that $\bz_t=\bz$.
We begin by observing that it is possible to get from any feasible $\bz=(z_1, \dots, z_K)$ to any other feasible $\bz' = (z_1', \dots, z_K')$ in at most $z_{\max}$ steps. For this, we need the following properties of $\bz$ that are consequences of the update procedure in equation~\eqref{eqn:zdef}. Equation~\eqref{eqn:zdef} guarantees that there must be exactly one element of $\bz$ equal to 0, and if$z_i, z_j \neq z_{\max}$, then $z_i \neq z_j$ for $i \neq j$. For the target vector $\bz'$, let $n$ be the number of elements with value $z_{\max}$. The remaining $K-n$ entries must all be unique and one must be 0, denote the index of this $i_0$. In the following $z_{\max}$ steps, we play each arm corresponding to $z_i \neq z_{\max}$ at step $z_{\max} -z_i$ and play $i_0$ in the intervening steps, and at step $z_{\max}$. It is clear to see that this procedure will go from $\bz$ to $\bz'$ in $z_{\max}$ steps.

 Let $v^*$ be the reward achieved in $p^*$ steps of the optimal policy $\pi^*$. By the above argument, from any initial state of the lookahead $\bZ_t$, it is possible to get to any other (feasible) $\bz$ in at most $z_{\max}$ steps. In particular, it is possible to get to one of the elements $\bz^{(1)}, \dots, \bz^{(p^*)}$ of the optimal periodic policy in $z_{\max}$ steps. Hence, the policy that chooses the quickest route to the optimal periodic policy and then plays that policy for $lp^*$ steps is a valid $(z_{\max} + lp^*)$-lookahead policy. This policy will achieve reward of at least $lv^*$ over this period. Consequently, the optimal $(z_{\max} + lp^*)$-lookahead policy, $\pi^*_l$ will achieve reward of at least $lv^*$ every $(z_{\max} + lp^*)$ steps. We select a lookahead policy every $(z_{\max} + lp^*)$ steps, therefore the total reward of $\pi^*_l$ must be at least 
 $\lfloor \frac{T}{lp^*+z_{\max}} \rfloor lv^*$. The total reward of $\pi^*$ is less than $\lceil \frac{T}{p^*} \rceil v^*$. Therefore,
 \[ \frac{V_T(\pi^*_l)}{V_T(\pi^*)} \geq \frac{\lfloor\frac{T}{lp^*+z_{\max}}\rfloor lv^*}{\lceil \frac{T}{p^*} \rceil v^*} \geq \frac{\frac{T}{lp^*+z_{\max}}-1 }{\frac{T}{p^*} +1}l=\bigg( 1-\frac{(l+1)p^* + z_{\max}}{T+p^*}\bigg) \frac{lp^*}{lp^*+z_{\max}}. \]
 This gives the result.
\end{proof}

\section{Theoretical Guarantees on Optimistic Planning Procedure} \label{app:opplan}
\lemopplanning*
\begin{proof}
Since our $\tilde f_j(z)$'s are samples from a Gaussian posterior, they can be negative. Hence it will be convenient to work with a transformation that guarantees positivity. To this end, let $\delta = - \min_{j,z} \tilde f_j(z)$ if $\min_{j,z} \tilde f_j(z)<0$ and $\delta=0$ if $\min_{j,z} \tilde f_j(z) \geq 0$ and for any arm $j$ and covariate $z$, define,
\[ \tilde f'_j(z) = \tilde f_j(z) + \delta \geq 0.  \]
Then we define the corresponding $v$, $b$ and $u$ values of any node $i \in \mathcal{S}_n$ at step $n$ and $\Psi$ functions as,
\begin{align*}
 v'(i) = v(i) + d \delta \qquad b_n'(i) = b_n(i) + d \delta \qquad  u'(i) = u(i) + l(i) \delta 
 \\  \Psi'(\mathbf{z}(i), d-l(i)) =  \Psi(\mathbf{z}(i), d-l(i)) + (d-l(i)) \delta \qquad \Psi'^*(l) = \Psi^*(l) + l\delta, 
 \end{align*}
where $l(i)$ is the depth of node $i$. Note that node $i^*$ maximizing $v(i)$ will also maximize $v'(i)$ and that if at step $n$ we select a node maximizing $b_n(i)$ this will also be the node maximizing $b'_n(i)$ and so $v(i_1) \geq v(i_2) \iff v'(i_1) \geq v'(i_2)$ and $b(i_1) \geq b(i_2) \iff b'(i_1) \geq b'(i_2)$ for all nodes $i_1,i_2$. Furthermore, it holds that $v'(i) \geq u'(i)$ and that $b'(i)$ is an upper bound on $v'(i)$ for all nodes $i$ and in particular $b'(i) = u'(i) + \Psi'(\mathbf{z}(i), d-l(i)) $. 

We begin with the case where the algorithm is stopped after some number $n$ of nodes have been expanded because the selected node is of depth $d$. Let $i^*_1, \dots, i^*_d$ be the nodes on the path to the optimal node $i^*$ and let $j$ be the maximal depth of this path in $\mathcal{T}_n \cup \mathcal{S}_n$. If $i_n$ is the node at depth $d$ selected to be expanded at time $n$, then,
\[ 0 \leq v^*-v(i_n) = v'(i^*_j) - v'(i_n) \leq b'(i_j^*) - v'(i_n) \leq b'(i_n) - v'(i_n) = \Psi'(\mathbf{z}(i_n), d-d) = 0, \]
since we select node $i_n$ at time $n$ so it must have the largest $b_n(i)$ and $b'_n(i)$ value.
This proves the first statement.

For the other case, define the set 
\[ \Gamma = \bigcup_{l=0}^d \{ \text{ node } i \text{ of depth } l \text{ such that } v^*-v(i) \leq \Psi'^*(d-l) \}, \]
and note that if $v^*-v(i) \leq \Psi'^*(d-l) $ then also $v'^*-v'(i) \leq \Psi'^*(d-l) $.
As in \cite{hren2008optimistic}, we will show that all nodes expanded by our algorithm are in $\Gamma$. For this, let node $i$ of depth $l$ be chosen to be expanded at time $n$. This means it has the largest $b_n(i)$ (and $b_n'(i)$) value of all nodes in $\mathcal{S}_n$.
We also now need to define the $b$ value of a node in $\cT_n$ as $b_n(i) = \max_{j \in C(i)} b_n(j)$ where $C(i)$ is the set of all children of node $i$, and we define $b'_n(i)$ correspondingly. This definition together with the previous remark means that for any $j \in \mathcal{T}_n$, $b'_n(i) \geq b'_n(j)$. Then for some $1 \leq j \leq d$, $i^*_j \in \mathcal{T}_n$, so it follows that $b'_n(i^*_j) \leq b'_n(i_n)$. But, the best value of any continuation of a path to the optimal node is simply $v^*$ and so by definition of the $b$ values $b'_n(i_j^*) \geq v'(i_j^*) = v'^*$. Hence, since $v'(i) \geq u'(i)$ and $\Psi'(\mathbf{z}(i),d-l) \leq \Psi'^*(d-l)$,,
\[ v'(i) \geq u'(i) = b'_n(i) - \Psi'(\mathbf{z}(i),d-l) \geq b'_n(i_j^*) -  \Psi'(\mathbf{z}(i),d-l)  \geq v'^* -  \Psi'(\mathbf{z}(i),d-l)  \geq v'^* - \Psi'^*(d-l), \]
it follows that $i \in \Gamma$. 
Then, we bound from below the maximal depth at which a node is chosen to be expanded. Let $n_0$ be the number of policies in $\Gamma$ up to depth $d_0$ and let $d_N$ be the maximal depth of any node expanded before the algorithm is stopped at time $N$. By the assumption in the proposition, the proportion of $(d-l)\Delta$-optimal nodes at depth $l$ is bounded by $\lambda^l$. Then, $\Psi'^*(d-l) = \Psi(d-l) + (d-l)\delta  \leq (d-l) \max_{j,z} \tilde f_j(z) - (d-l) \min_{j,z} \tilde f_j(z) = (d-l)\Delta$ by definition of $\Psi$ and so $p_l(\Psi'^*(d-l)) \leq p_l((d-l)\Delta) \leq \lambda^l$. Hence,
\[ N \leq n_0 + \sum_{l=d_0}^{d_N} \lambda^l K^l = n_0 + \sum_{l=d_0}^{d_N} A^l \leq n_0 + A^{d_0+1} \frac{A^{d_N-d_0}-1}{A-1} \]
for $A=\lambda K>1$. Rearranging gives,
\begin{align*}
d_N &\geq d_0 + \log_A \bigg( \frac{(N-n_0)(A-1)}{A^{d_0+1}} +1  \bigg) \geq  d_0 + \log_A \bigg( \frac{(N-n_0)(A-1)}{A^{d_0+1}}  \bigg) \\ &\geq  \frac{\log(N-n_0)}{\log(K\lambda)} -1 + \frac{\log(\lambda K-1)}{\log(\lambda K)}
\end{align*}
Let $i_N$ be the node the algorithm outputs at step $N$ when the computational resources have been exceeded and note that this is the node in $\cT_N$ with largest depth (i.e. $l(i_N) = d_N$) that has the largest $b_N$ (or $b_N'$) value. Since $i_N \in \cT_N$, there is some step $n \leq N$ when node $i_N$ was expanded.
Then, let $j$ be the maximal depth of nodes on the path $i_1^*, \dots, i_d^*$ in $\mathcal{S}_n$. It then follows that
\[ v'^* - v'(i_N) \leq b'_n(i^*_j) - v'(i_N) \leq b'_n(i_N) - v(i_N) \leq \Psi'(\mathbf{z}(i_N), d-l(i_N)) \leq \Psi'^*(d-d_N). \]
Hence,
\begin{align*}
v^*-v(i_N)  &= v'^* - v'(i_N) \leq \Psi'^*(d-d_N) = \Psi^*(d-d_N) + (d-d_N)\delta 
\\ &\leq (d-d_N)( \max_{j,z} \tilde f_j(z) - \min_{j,z}\tilde f_j(z)) \leq \bigg( d- \frac{\log(N-n_0)}{\log(K \lambda)} - \frac{\log(\lambda K-1)}{\log(\lambda K)} +1 \bigg) \Delta
\end{align*}
which gives the result.

\end{proof}

\section{Regret Bounds for Non-Parametric Approach} \label{app:nonparam}
We use an algorithm which has no information about the recovery structure as a baseline. For this, we model each (arm, $z$) pair as an arm. This reduces the problem to a standard multi-armed bandit problem with $K|\cZ|$ arms, where only some arms are available each round. 

Let $\mu_{j,z}$ denote the expected reward of arm $j$ when $z_j=z$. We can then create estimates $\bar{Y}_{j,z,t}$ of the reward of each arm from the $N_{j,z}(t)$ samples of arm $j$ with $Z_j=z$ we receive up to time $t$. These estimates can be used to define an upper confidence bound style algorithm over the `arms' $\{(j,z)\}_{j=1, z=0}^{K,Z_{\max}}$. We define confidence bound based on UCB1 \citep{auer2002finite} and \cite{russo2014learning}
\[  U({j,z,t}) = \bar{Y}_{z,j,t} + \sqrt{\frac{\sigma^2(2+6\log(T))}{N_{j,z}(t)}}. \]
where $\sigma$ is the standard error of the noise. After playing each $j,z$ combinations once, we proceed to play the arm with largest $U(j,Z_{j,t},t)$ at time $t$. We now bound the Bayesian regret of this algorithm to horizon $T$.
\begin{restatable}{theorem}{regrucb}
\label{thm:regrucb}
The instantaneous regret up to time $T$ of the UCB1 algorithm with $K|\cZ|$ arms can be bounded by
\[ \E[ \Reg_T^{(1)}] \leq  O(\sqrt{K|\cZ|T \log(T)} + K|\cZ|^2) \]
\end{restatable}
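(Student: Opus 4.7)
The plan is to split the regret into an initialization phase, during which every $(j,z)$ pair is played at least once, and a subsequent UCB phase, then bound each piece separately.

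\emph{Initialization.} First I would exhibit a schedule visiting all $K|\cZ|$ arm-context pairs in $O(K|\cZ|^2)$ rounds. A direct construction proceeds arm by arm: to observe arm $j$ at every $z \in \{0, \dots, z_{\max}\}$, play $j$, wait one round, play $j$ again, wait two rounds, play $j$, and so on up to $z_{\max}$ wait rounds, using the other arms as fillers. This costs $O(|\cZ|^2)$ per arm and $O(K|\cZ|^2)$ overall. Since $f_j$ is drawn from a zero-mean GP with bounded kernel values, the marginal $f_j(z)$ is sub-Gaussian, so the per-round expected regret is $O(1)$, giving initialization regret $O(K|\cZ|^2)$.

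\emph{UCB decomposition.} For the main phase I would follow the standard Bayesian-UCB decomposition of \cite{russo2014learning}. Writing $U_t(j) = U(j, Z_{j,t}, t)$ and $j^*_t = \arg\max_j \mu_{j, Z_{j,t}}$, the rule $J_t = \arg\max_j U_t(j)$ gives $U_t(J_t) \geq U_t(j^*_t)$ and hence
\[ \mu^*_t - \mu_{J_t, Z_{J_t,t}} \leq \bigl(\mu^*_t - U_t(j^*_t)\bigr)^+ + \bigl(U_t(J_t) - \mu_{J_t, Z_{J_t,t}}\bigr). \]
The factor $2+6\log T$ in the radius is calibrated so that a Gaussian tail bound gives $\PP(\mu_{j, Z_{j,t}} > U_t(j)) \leq O(T^{-3})$ uniformly in $j,t$. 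A union bound followed by Cauchy--Schwarz against the $O(1)$ second moment of the instantaneous gap shows that summing the first bracket over $t$ contributes only $O(1)$ to the total Bayesian regret.

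\emph{Summing confidence widths.} On the same high-probability event, $\bar{Y}_{J_t, Z_{J_t,t}, t-1} - \mu_{J_t, Z_{J_t,t}}$ is at most the confidence radius, so the second bracket is at most twice the radius. The classical accounting
\[ \sum_{t} \frac{1}{\sqrt{N_{J_t, Z_{J_t,t}}(t-1)}} = \sum_{(j,z)} \sum_{n=1}^{N_{j,z}(T)} \frac{1}{\sqrt{n}} \leq 2 \sum_{(j,z)} \sqrt{N_{j,z}(T)} \leq 2\sqrt{K|\cZ|\, T}, \]
where the last step is Cauchy--Schwarz combined with $\sum_{(j,z)} N_{j,z}(T) \leq T$, then yields the $O(\sqrt{K|\cZ|\,T\log T})$ contribution. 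Adding the three pieces produces the claimed bound.

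The main obstacle is the initialization accounting: one might hope that only $K|\cZ|$ rounds suffice, but because $Z_{j,t}$ in \eqref{eqn:zdef} grows only through idle time, reaching every context $z$ for arm $j$ forces a sequence of filler plays, producing the quadratic $K|\cZ|^2$ term. Beyond this scheduling bookkeeping, the argument is a textbook UCB analysis adapted to an $(j,z)$-indexed meta-bandit with time-varying arm availability.
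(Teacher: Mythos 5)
Your proposal follows essentially the same route as the paper: an $O(K|\cZ|^2)$-round initialization schedule whose regret is absorbed into the additive term, followed by the Bayesian confidence-bound decomposition of Russo and Van Roy and the standard $\sum_{(j,z)}\sqrt{N_{j,z}(T)} \leq \sqrt{K|\cZ|T}$ accounting via Cauchy--Schwarz. The only quibble is your claim that the per-round initialization regret is $O(1)$: since the benchmark plays $\max_j f_j(Z_{j,t})$, the per-round gap is really $O(\sqrt{\log(K|\cZ|)})$ (the paper bounds it by $2\sqrt{2\log t_0}$ via the expected maximum of Gaussians), but this log factor is also suppressed in the theorem statement itself.
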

\begin{proof}
We first consider the initialization phase. For this, note that in order to play arm $j$ at $Z_j=z$, we need to wait $z$ rounds from when it was last played. This means that the total number of plays required to play each arm at each $z$ value can be bounded by
 $t_0 = K|\cZ|(|\cZ|+1)$ (since in the worst case, for arm $j$, we need to wait, 1 round, then 2 rounds, up to $|\cZ|$ rounds).
 We can bound the per-step regret from this initialization period using Lemma~\ref{lem:maxgp}.
For any $1\leq t \leq t_0$, 
\[ \E[ f_{J_t^*}(Z_{J_t^*,t}) - f_{J_t}(Z_{J_t,t})] \leq \E[ \max_{1 \leq t \leq t_0} \{ f_{J_t^*}(Z_{J_t^*,t}) - f_{J_t}(Z_{J_t,t})\} ]\leq 2 \sqrt{2 \log(t_0)} \]
since the distribution of the difference of two zero mean Gaussian random variables with variances $\sigma_1^2, \sigma_2^2 \leq 1$ is also a Gaussian random variable with mean 0 and variance $\sigma^2_1+\sigma^2_2 \leq 2$ here. Then, we can use a similar technique to \cite{russo2014learning} to bound the cumulative regret in the remaining $t_0 +1 \leq t \leq T$ steps but using  Lemma~\ref{lem:maxgp} again to bound the maximal difference in $f_j$'s.
\begin{align*}
\E[\Reg_T] &= \sum_{t=t_0}^T \E[f_{J_t^*}(Z_{J_t^*,t}) - f_{J_t}(Z_{J_t,t}) \one \{\forall j,z; f_j(z) \in [L(j,z,t), U(j,z,t)] \} ] 
	\\ & \hspace{50pt} +  \sum_{t=t_0}^T \E[f_{J_t^*}(Z_{J_t^*,t}) - f_{J_t}(Z_{J_t,t}) \one \{\exists j,z; f_j(z) \notin [L(j,z,t), U(j,z,t)] \} ]
\\& \leq \sum_{t=t_0}^T \E[ U(J_t^*,Z_{J_t^*,t},t) - L(J_t,Z_{J_t,t},t)] + 2\sqrt{2\log(T)} T \PP(\exists j,z;  f_j(z) \notin [L(j,z,t), U(j,z,t)] )
\\ & \leq \sum_{t=t_0}^T  \E[ U(J_t,Z_{J_t,t},t) - L(J_t,Z_{J_t,t},t)] + 2\sqrt{2\log(T)} T\sum_{j=1}^K \sum_{z \in \cZ}  \PP( f_j(z) \notin [L(j,z,t), U(j,z,t)] )
\end{align*}
Since $\epsilon_t \sim \cN(0,\sigma^2)$, by Lemma 1 in \cite{russo2014learning},
\[ 2\sqrt{2\log(T)} T\sum_{j=1}^K \sum_{z \in \cZ} \PP( f_j(z) \notin [L(j,z,t), U(j,z,t)] ) \leq \frac{ 2\sqrt{2\log(T)} T |\cZ| K}{T} \leq 2 K|\cZ| \sqrt{2\log(T)}. \]
Then, for the first term, by the same argument as \cite{russo2014learning},
\begin{align*}
\sum_{t=t_0}^T  \E[ U(J_t,Z_{J_t,t},t) - L(J_t,Z_{J_t,t},t)] &\leq \sum_{t=t_0}^T \sum_{j=1}^K \sum_{z \in \cZ} \E[ U(j,z,t) - L(j,z,t)\one\{J_t=j,Z_{J_t,t}=z\} ] 
\\ & \leq 2\sqrt{\sigma^2(2 + 6 \log(T))}  \sum_{t=t_0}^T \sum_{j=1}^K \sum_{z \in \cZ} \E\bigg[ \frac{1}{\sqrt{2N_{j,z}(t)}} \one\{J_t=j,Z_{J_t,t}=z\} \bigg]  
\\ &\leq 2\sqrt{\sigma^2(2 + 6 \log(T))}  \sum_{j=1}^K \sum_{z \in \cZ}  \E\bigg[ \sum_{l=0}^{N_{j,z}(T)-1}\frac{1}{\sqrt{l+1}} \bigg] 
\\ &\leq 2\sqrt{\sigma^2(2 + 6 \log(T))}  \sum_{j=1}^K \sum_{z \in \cZ}  \E\bigg[\sqrt{N_{j,z}(T)} \bigg] 
\\ &\leq 2\sqrt{\sigma^2(2 + 6 \log(T))}\sqrt{K |\cZ|T}
\end{align*}
where the last line follows by Cauchy-Schwartz. This concludes the proof.
\end{proof}

\if0
\begin{align*}
\E[\Reg_T] &= \sum_{t=1}^T\E[ \E[f_{J_t^*}(Z_{J_t^*,t}) - f_{J_t}(Z_{J_t,t}) |\cF_{t-1}]] 
\\ & = \sum_{t=1}^{t_0} \E[ \E[f_{J_t^*}(Z_{J_t^*,t}) - f_{J_t}(Z_{J_t,t}) |\cF_{t-1}]]  + \sum_{t=t_0+1}^T\E[ \E[f_{J_t^*}(Z_{J_t^*,t}) - f_{J_t}(Z_{J_t,t}) |\cF_{t-1}]] 
\\ &\leq t_0 2 \sqrt{2 \log(t_0)} +   \sum_{t=1}^T\E[ \E[f_{J_t^*}(Z_{J_t^*,t}) - U(J_t^*,Z_{J_t^*},t) + U(J_t^*,Z_{J_t^*,t},t) -  f_{J_t}(Z_{J_t,t}) |\cF_{t-1}]] 
\\ & \leq t_0  2 \sqrt{2 \log(t_0)} +  \sum_{t=1}^T\E[ \E[f_{J_t^*}(Z_{J_t^*,t}) -U(J_t^*,Z_{J_t^*},t) |\cF_{t-1}]] +\sum_{t=1}^T \E[\E [U(J_t^*,Z_{J_t^*,t},t) -  f_{J_t}(Z_{J_t,t}) |\cF_{t-1}]] 
\end{align*}
Then note that by a similar argument to Theorem~\ref{thm:rgpucb} (and using the fact that $\sigma_t^2(J_t) \leq \sigma^2 \leq 1$), 
\begin{align*}
\E[f_{J_t^*}(Z_{J_t^*,t}) -U(J_t^*,Z_{J_t^*},t)] = \sum_{j=1}^K \sum_{z \in \cZ} \E[f_{j}(z) -U(j,z,t)] =   \sum_{j=1}^K \sum_{z \in \cZ} \E[\E[f_{j}(z) -U(j,z,t)| f_j(z)]]
\end{align*}
and 
\begin{align*}
\E[f_{j}(z) -U(j,z,t)| f_j(z)]] = \int_0^\infty \PP(f_{j}(z) -U(j,z,t) \geq 0|f_j(z)) = \int_0^\infty \PP(f_{j}(z) -\bar{X}_{j,z,t} \geq \sqrt{\frac{\log(T)}{2N_{j,z}(t)}}) 
\end{align*}

We use an adaptation of the MOSS algorithm \citep{audibert2009minimax} to get confidence bounds of the form,
\[ U(j,z,t) =  \bar{X}_{z,j,t} +\sqrt{\frac{\sigma \log(\frac{T}{K|\cZ| N_{j,z}(t)})}{2N_{j,z}(t)}}. \]
where $\sigma$ is the sub-Gaussian noise parameter. After playing each $j,z$ combinations once, we proceed to play the arm with largest $U(j,Z_{j,t},t)$ at time $t$. We now bound the regret of this algorithm, R-MOSS.

\regrmoss*
\begin{proof}
The proof of this result is similar to the proof Theorem 1 in  \cite{bubeck2013prior}. We begin with the following decomposition,
\begin{align*}
\E[\Reg_T] &= \sum_{t=1}^T\E[ \E[f_{J_t^*}(Z_{J_t^*,t}) - f_{J_t}(Z_{J_t,t}) |\cF_{t-1}]] 
\\ &=  \sum_{t=1}^T\E[ \E[f_{J_t^*}(Z_{J_t^*,t}) - U(J_t^*,Z_{J_t^*},t) + U(J_t^*,Z_{J_t^*,t},t) -  f_{J_t}(Z_{J_t,t}) |\cF_{t-1}]] 
\\ & \leq  \sum_{t=1}^T\E[ \E[f_{J_t^*}(Z_{J_t^*,t}) -U(J_t^*,Z_{J_t^*},t) |\cF_{t-1}]] +\sum_{t=1}^T \E[\E [U(J_t^*,Z_{J_t^*,t},t) -  f_{J_t}(Z_{J_t,t}) |\cF_{t-1}]] 
\end{align*}
\end{proof}

The proof is similar to that of Theorem~\ref{thm:regrgpucb}. First note that
\begin{align*}
\sum_{t=1}^T \sum_{j=1}^K \sum_{z \in \cZ} \sigma_t^2(J_t, Z_t) \one\{J_t=j, Z_t=z\} &= \sum_{j=1}^K \sum_{z \in \cZ} \sum_{n=1}^{N_{z,j}(T)} \sigma^2(z,j; n-1)
\\ & \hspace{-40pt}  \leq  T\sum_{j=1}^K \sum_{z \in \cZ} C_1 I(\mathbf{y}_{(z,j),N_{z,j}(T)}; \mathbf{f}_{(z,j),N_{z,j}(T)})  \leq C_1 \sum_{j=1}^K \sum_{z \in \cZ} \gamma_{N_{z,j}(T)} \leq C_1K|\cZ| \gamma_T. 
\end{align*}
Then,
\begin{align*}
\E[\Reg_T] &= \sum_{t=1}^T\E[ \E[f_{J_t^*}(Z_{J_t^*,t}) - f_{J_t}(Z_{J_t,t}) |\cF_{t-1}]] 
\\ &=  \sum_{t=1}^T\E[ \E[f_{J_t^*}(Z_{J_t^*,t}) - (\mu_t(J_t^*, Z_{J_t^*,t}) + \sqrt{2 \log(K|\cZ| t^2)} \sigma_t(J_t^*,Z_{J_t^*,t})) 
	\\ & \hspace{40pt} + (\mu_t(J_t^*, Z_{J_t^*,t}) + \sqrt{2 \log(K|\cZ| t^2)} \sigma_t(J_t^*,Z_{J_t^*,t})) -  f_{J_t}(Z_{J_t,t}) |\cF_{t-1}]] 
\\ & \leq  \sum_{t=1}^T\E[ \E[f_{J_t^*}(Z_{J_t^*,t}) - (\mu_t(J_t^*, Z_{J_t^*,t}) + \sqrt{2 \log(K|\cZ| t^2)} \sigma_t(J_t^*,Z_{J_t^*,t})) |\cF_{t-1}]]
	\\ & \hspace{40pt} +\sum_{t=1}^T \E[\E [(\mu_t(J_t, Z_{J_t,t}) + \sqrt{2 \log(K|\cZ| t^2)} \sigma_t(J_t,Z_{J_t,t})) -  f_{J_t}(Z_{J_t,t}) |\cF_{t-1}]] 
\end{align*}
since we play `arm' $(J_t,Z_{J_t,t})$ at time $t$. First note that the posterior distribution of $f_j(z)$ for any arm $j$ and $z \in \cZ$ at time $t$ is $\cN(\mu_t(z,j), \sigma_t^2(z,j))$ so we can use the same argument as in Lemma~\ref{lem:dreg} to get,
\[ \sum_{t=1}^T\E[ \E[f_{J_t^*}(Z_{J_t^*,t}) - (\mu_t(J_t^*, Z_{J_t^*,t}) + \sqrt{2 \log(K|\cZ| t^2)} \sigma_t(J_t^*,Z_{J_t^*,t})) |\cF_{t-1}]] \leq \sum_{t=1}^T \frac{\sqrt{2\pi}}{t^2} \leq \frac{\pi^{5/2}}{3\sqrt{2}}. \]
For the second term, we use an argument similar to Theorem~\ref{thm:regrgpucb} to get,
\begin{align*}
\sum_{t=1}^T \E[\E [(\mu_t(J_t, Z_{J_t,t}) + \sqrt{2 \log(K|\cZ| t^2)} \sigma_t(J_t,Z_{J_t,t})) -  f_{J_t}(Z_{J_t,t}) |\cF_{t-1}]]  \leq\sum_{t=1}^T \E[\E [\sqrt{2 \log(K|\cZ| t^2)} \sigma_t(J_t,Z_{J_t,t})|\cF_{t-1}]] \leq 2 \sqrt{\log(K|\cZ|T)} \E \bigg[ \sum_{t=1}^T \sigma_t(J_t,Z_{J_t,t}) \bigg]
\\ & \leq  2 \sqrt{T\log(K|\cZ|T)}  \E \bigg[\sqrt{ \sum_{t=1}^T \sigma_t^2(J_t,Z_{J_t,t})} \bigg]
\\ & =  2 \sqrt{T\log(K|\cZ|T)}  \E \bigg[\sqrt{ \sum_{t=1}^T \sigma_t^2(J_t,Z_{J_t,t})} \bigg]
\end{align*}
\end{proof}
\fi

\section{Further Experimental Results}

\subsection{Posterior Distributions and Covariates} \label{app:tsplot}
\subsubsection{$d$RGP-UCB}
In this section, we plot the posterior (blue) of $d$RGP-UCB. with density given by the blue region in the instantaneous case. The red curve is the true recovery curve and the crosses are our observed samples for various values of $d$ and different kernels. Note that as the kernel gets smoother, the algorithm places more samples in the good regions. This is to be expected as for smoother kernels, there is less need to explore as many sub-optimal regions. Also, as $d$ increases more samples are at the peaks of the recovery curves.
\begin{figure}[h]
	\centering
	\begin{subfigure}{0.8\textwidth}
		  \centering
		 \includegraphics[width=0.9\textwidth]{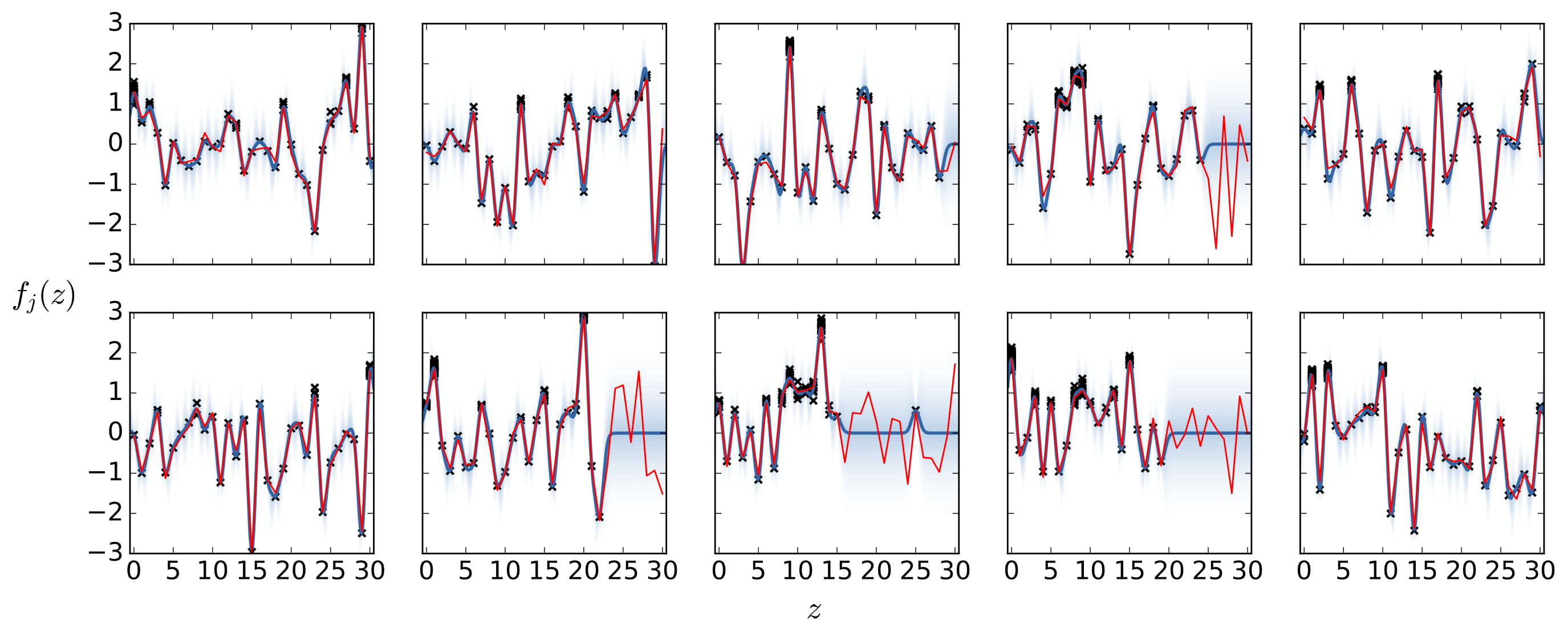}
		 \caption{$d=1$}
	\end{subfigure}
	\par\bigskip
	\begin{subfigure}{0.8\textwidth}
		  \centering
		 \includegraphics[width=0.9\textwidth]{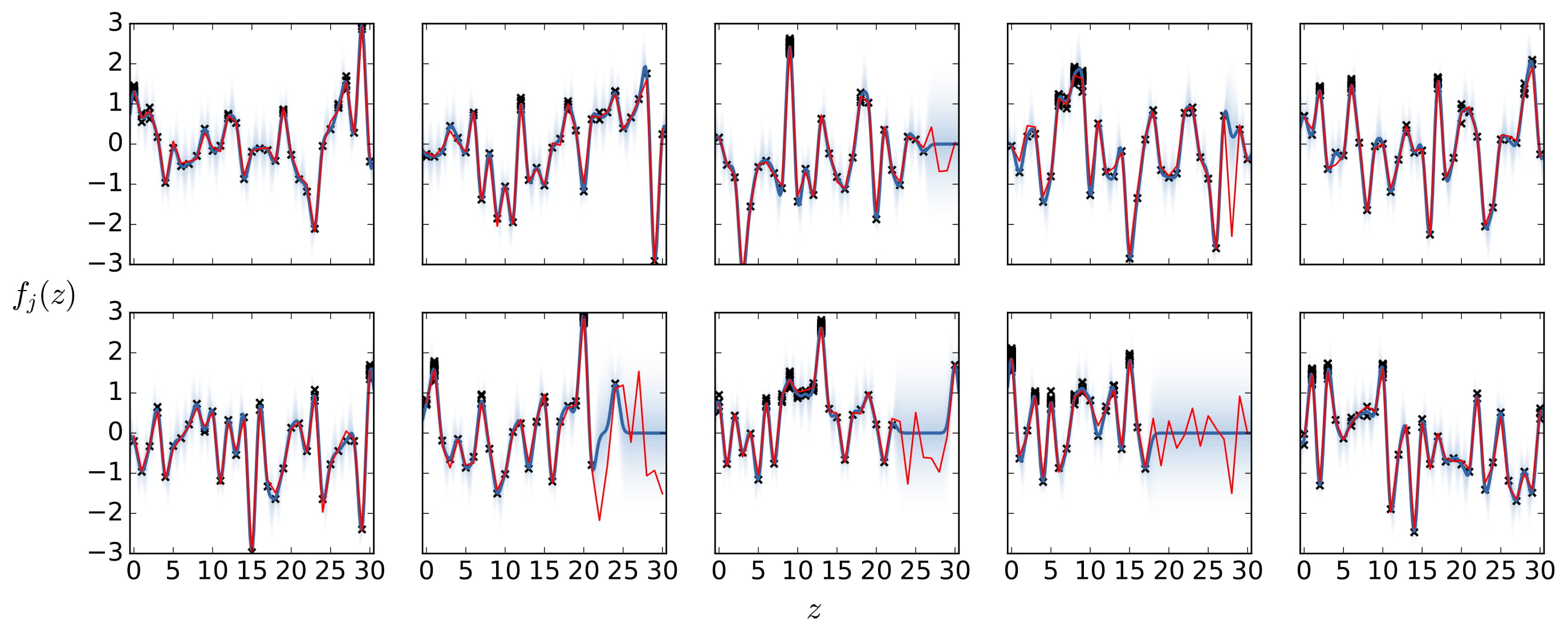}
		 \caption{$d=2$}
	\end{subfigure}
	\par\bigskip
	\begin{subfigure}{0.8\textwidth}
		  \centering
		 \includegraphics[width=0.9\textwidth]{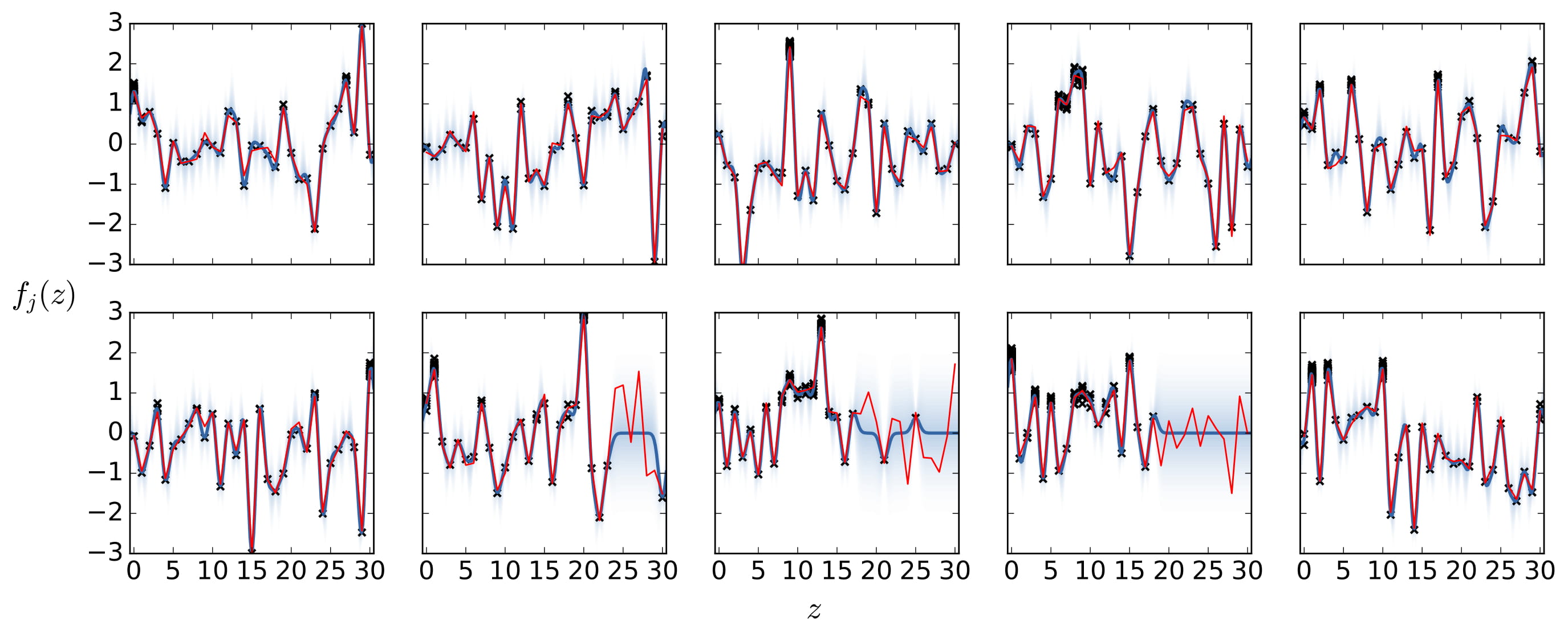} 
		 \caption{$d=3$}
	\end{subfigure}
      \caption{$d$RGP-UCB with squared exponential kernel with $l=0.5$}
      \label{fig:gpucbpost_l05}
\end{figure}

\begin{figure}[h]
	\centering
	\begin{subfigure}{0.8\textwidth}
		  \centering
		 \includegraphics[width=0.9\textwidth]{figures/posteriors/posterior_gp_ucb_l2_oct19}
		 \caption{$d=1$}
	\end{subfigure}
	\par\bigskip
	\begin{subfigure}{0.8\textwidth}
		  \centering
		 \includegraphics[width=0.9\textwidth]{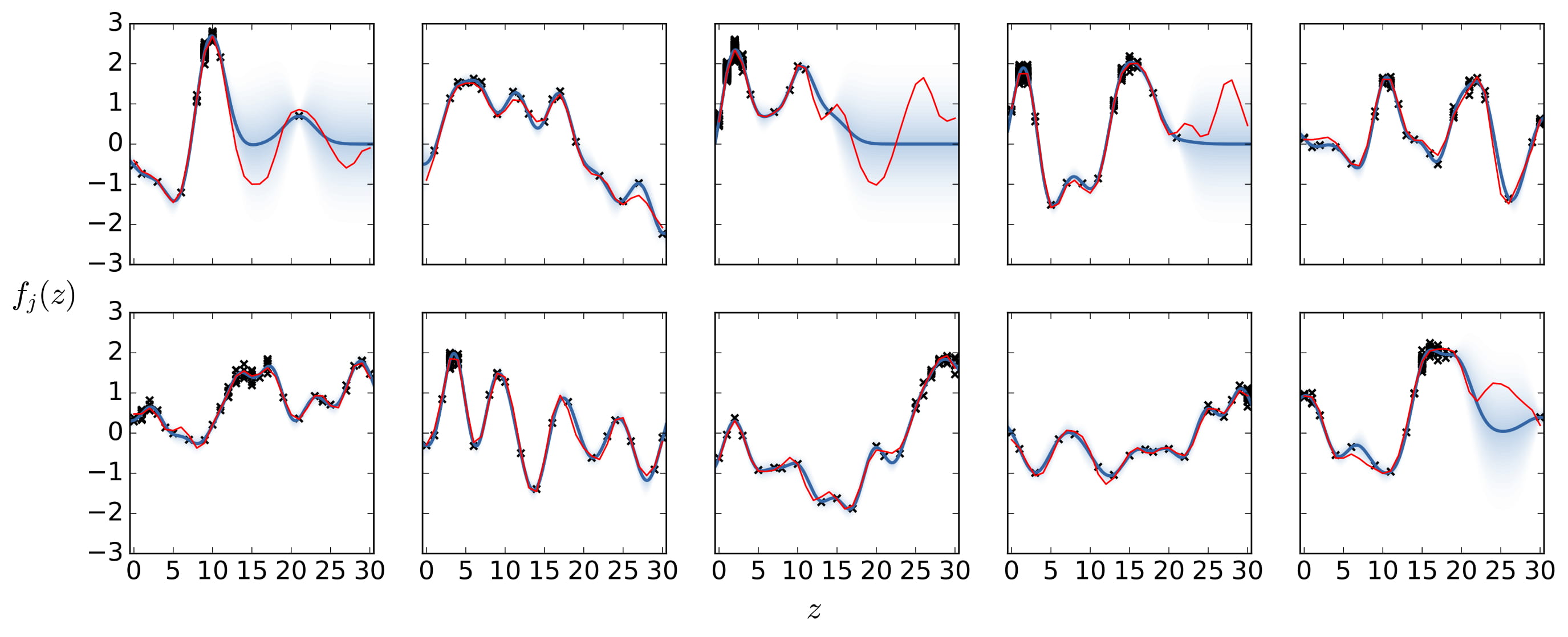}
		 \caption{$d=2$}
	\end{subfigure}
	\par\bigskip
	\begin{subfigure}{0.8\textwidth}
		  \centering
		 \includegraphics[width=0.9\textwidth]{figures/posteriors/posterior_gp_ucb_l2_d3_oct19}
		 \caption{$d=3$}
	\end{subfigure}
      \caption{$d$RGP-UCB with squared exponential kernel with $l=2$}
      \label{fig:gpucbpost_l2}
\end{figure}

\begin{figure}[h]
	\centering
	\begin{subfigure}{0.8\textwidth}
		  \centering
		 \includegraphics[width=0.9\textwidth]{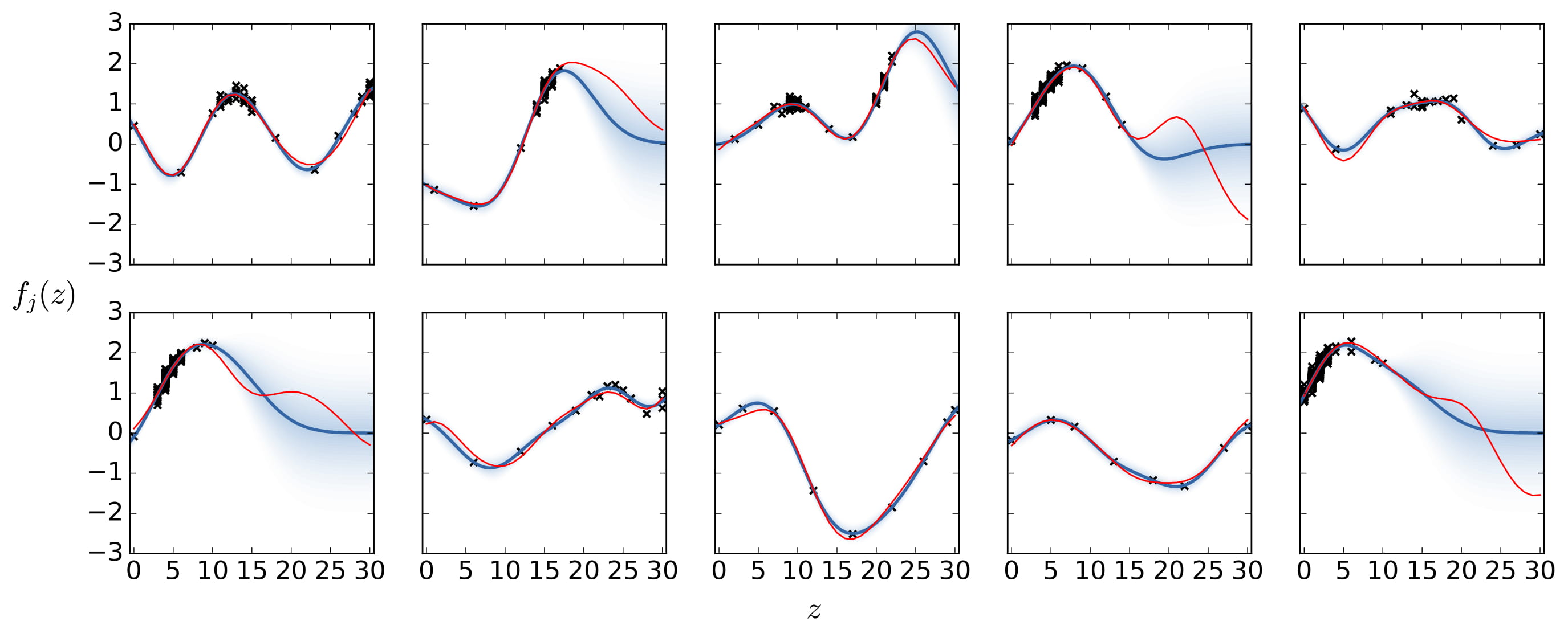}
		 \caption{$d=1$}
	\end{subfigure}
	\par\bigskip
	\begin{subfigure}{0.8\textwidth}
		  \centering
		 \includegraphics[width=0.9\textwidth]{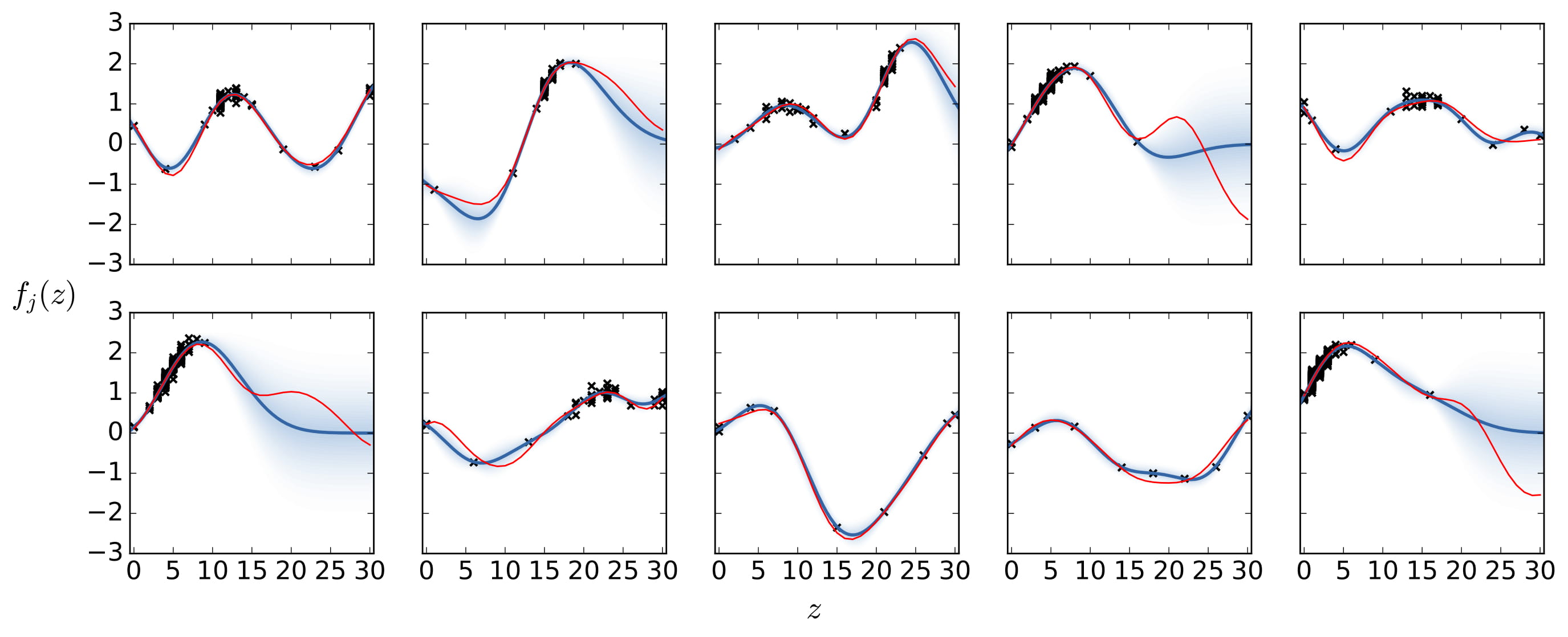}
		 \caption{$d=2$}
	\end{subfigure}
	\par\bigskip
	\begin{subfigure}{0.8\textwidth}
		  \centering
		 \includegraphics[width=0.9\textwidth]{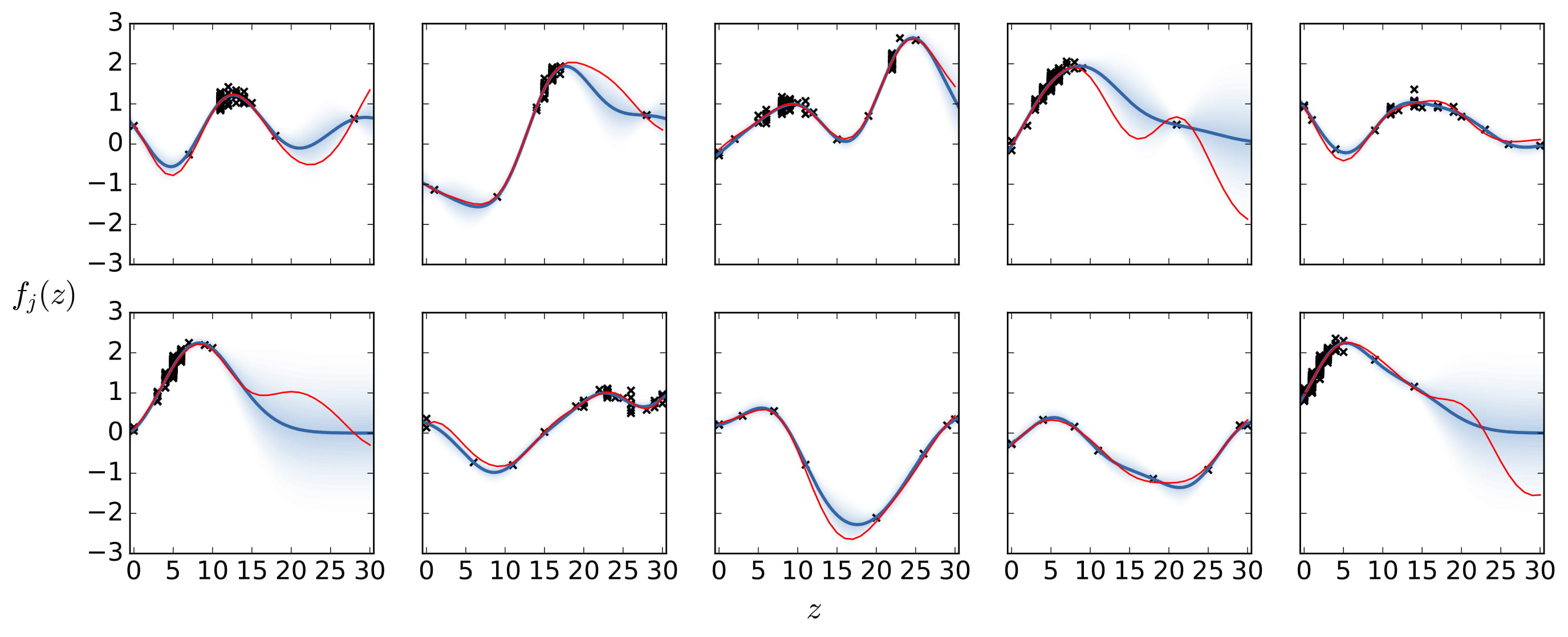}
		 \caption{$d=3$}
	\end{subfigure}
      \caption{$d$RGP-UCB with squared exponential kernel with $l=5$}
      \label{fig:gpucbpost_l5}
\end{figure}
\FloatBarrier

\subsubsection{$d$RGP-TS}
In this section, we plot the posterior (blue) of $d$RGP-TS. with density given by the blue region with different $l$'s and $d$'s. We see much the same pattern as for $d$RGP-UCB, although it does seem to demonstrate poorer estimation of the recovery curve in the single step case. This suggests that the Thompson sampling approach is focusing on exploitation rather than exploration, as has been observed in other settings (eg. in linear bandits \citep{abeille2017linear} show that the variance of the posterior needs to be inflated to encourage more exploration in Thompson Sampling). However, it is worth noting that the algorithms have only been run once for these plots.
\begin{figure}[hb]
	\centering
	\begin{subfigure}{0.8\textwidth}
		  \centering
		 \includegraphics[width=0.9\textwidth]{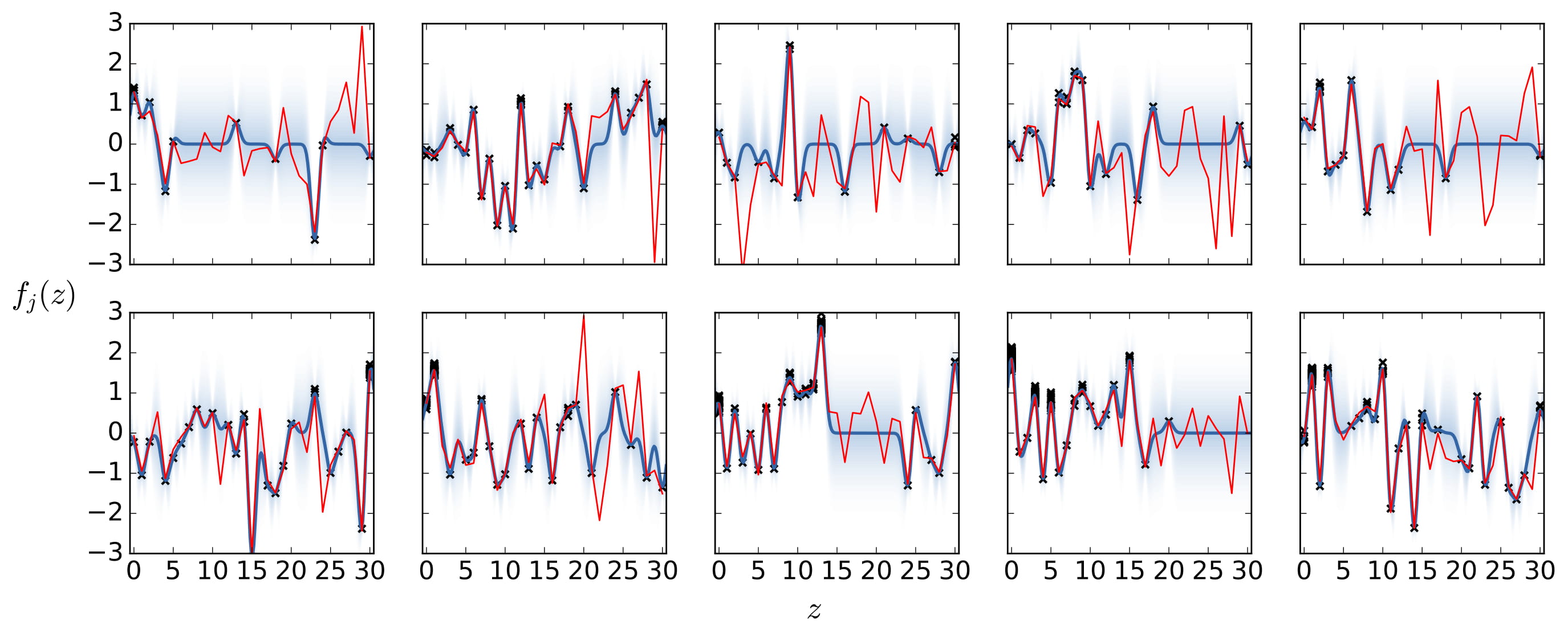}
		 \caption{$d=1$}
	\end{subfigure}
	\par\bigskip
	\begin{subfigure}{0.8\textwidth}
		  \centering
		 \includegraphics[width=0.9\textwidth]{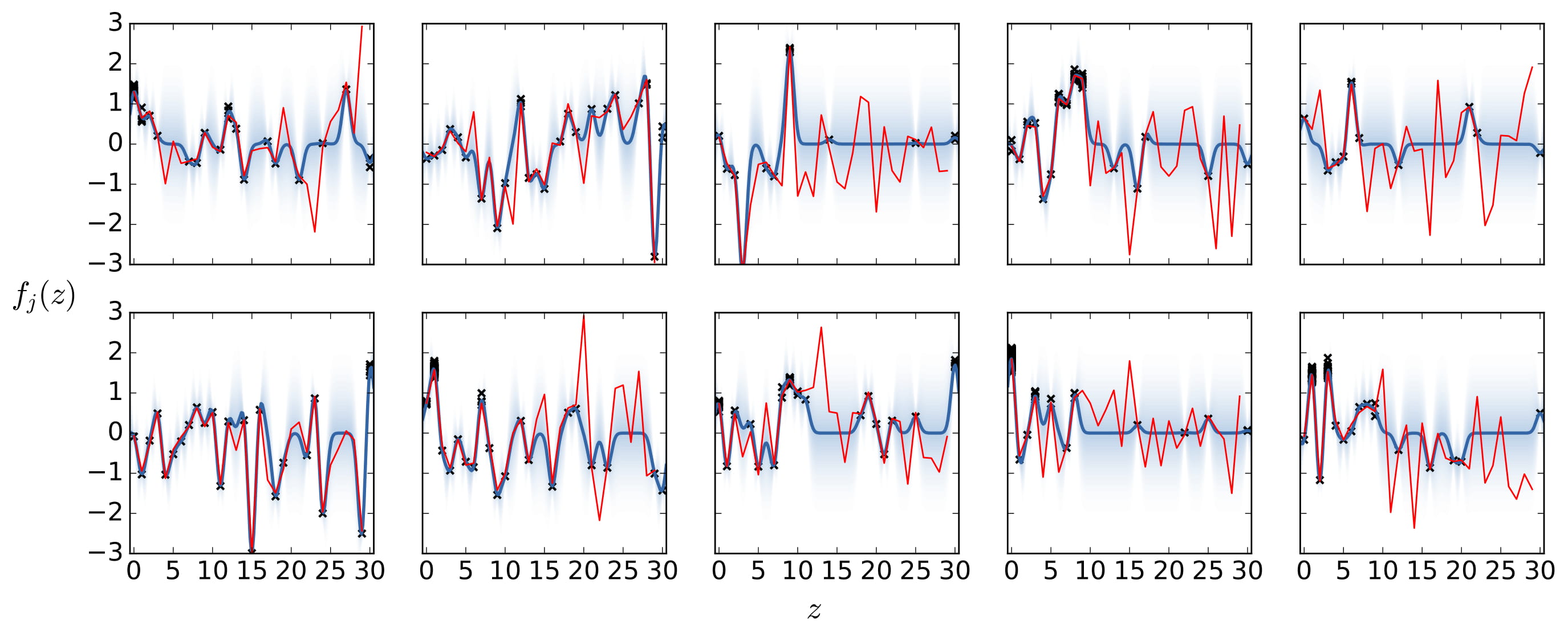}
		 \caption{$d=2$}
	\end{subfigure}
	\par\bigskip
	\begin{subfigure}{0.8\textwidth}
		  \centering
		 \includegraphics[width=0.9\textwidth]{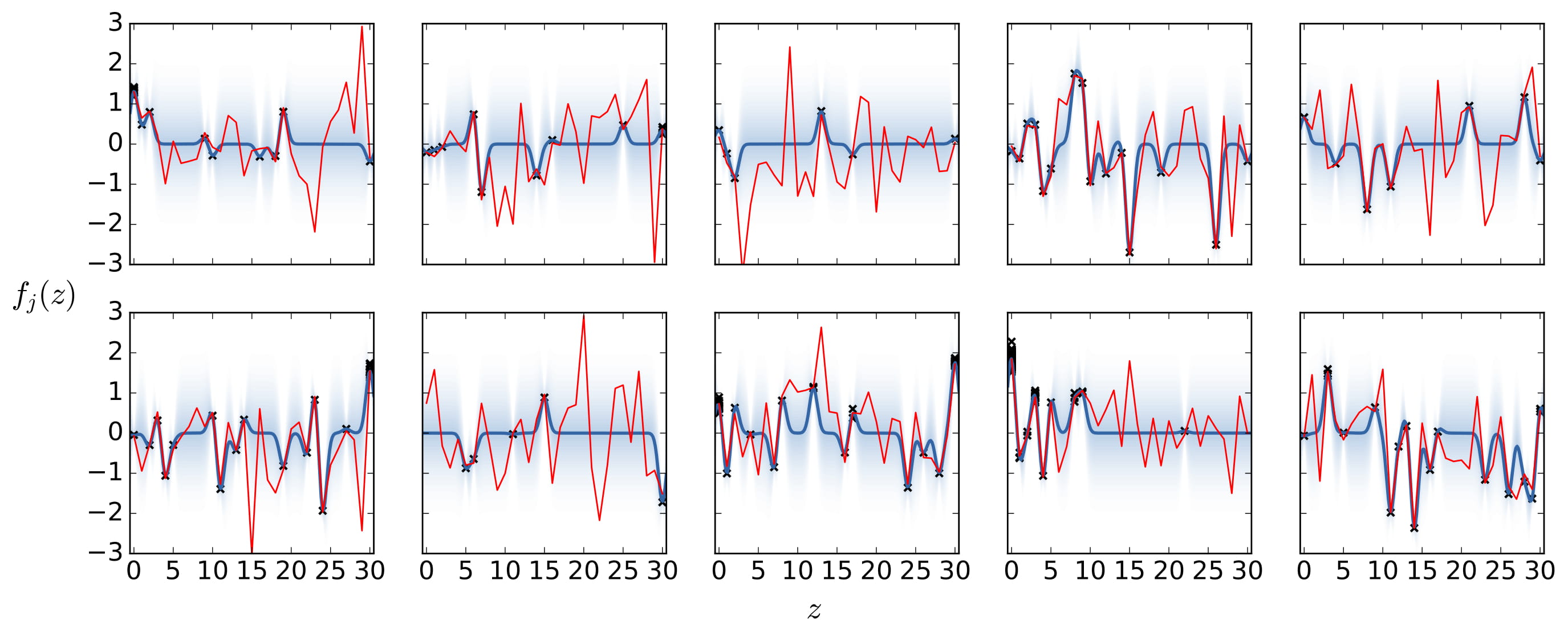}
		 \caption{$d=3$}
	\end{subfigure}
      \caption{$d$RGP-TS for squared exponential kernel with $l=0.5$}
      \label{fig:gptspost_l05}
\end{figure}

\begin{figure}[h]
	\centering
	\begin{subfigure}{0.8\textwidth}
		  \centering
		 \includegraphics[width=0.9\textwidth]{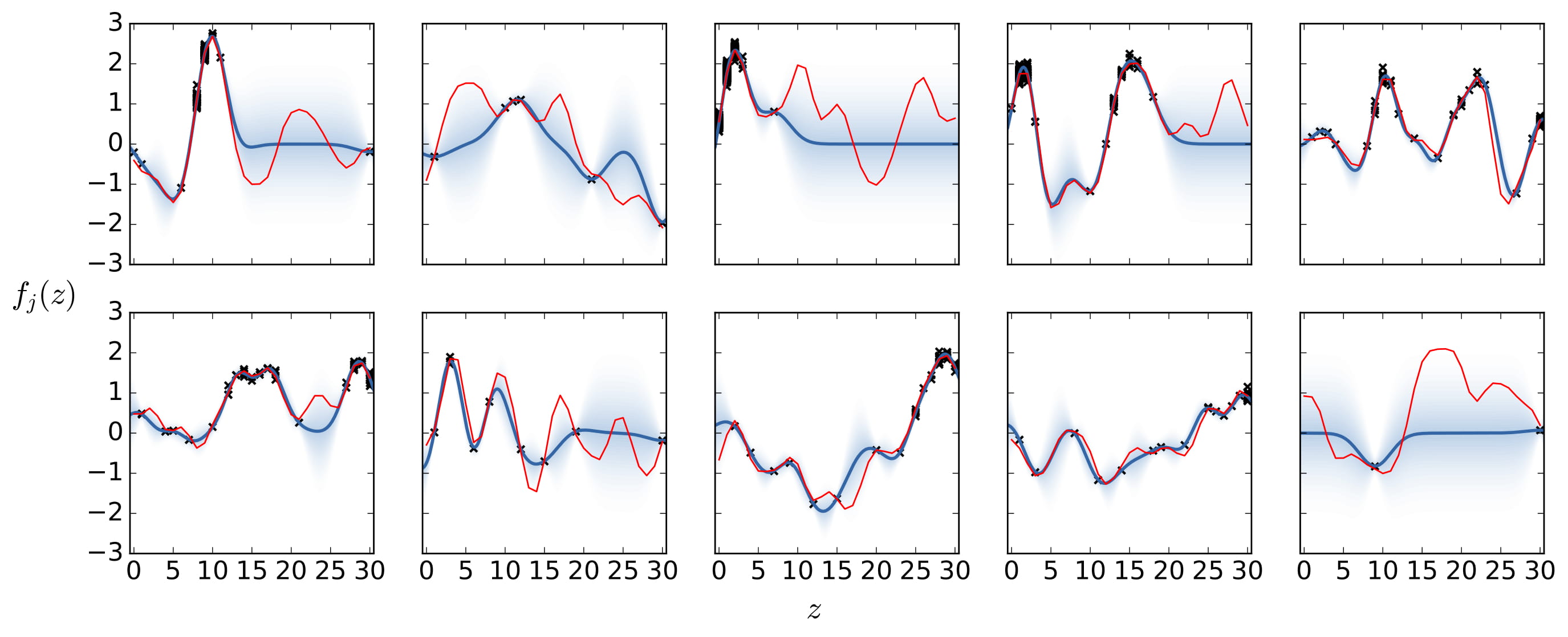}
		 \caption{$d=1$}
	\end{subfigure}
	\par\bigskip
	\begin{subfigure}{0.8\textwidth}
		  \centering
		 \includegraphics[width=0.9\textwidth]{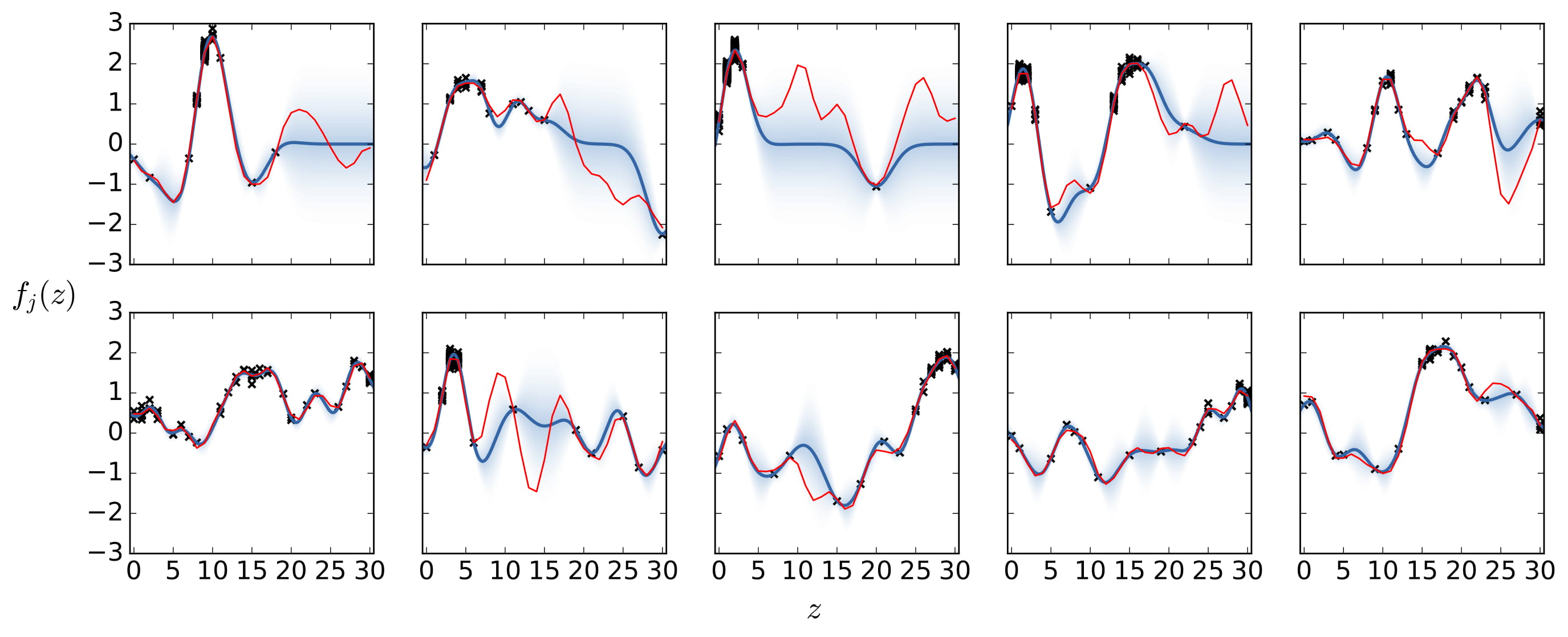}
		 \caption{$d=2$}
	\end{subfigure}
	\par\bigskip
	\begin{subfigure}{0.8\textwidth}
		  \centering
		 \includegraphics[width=0.9\textwidth]{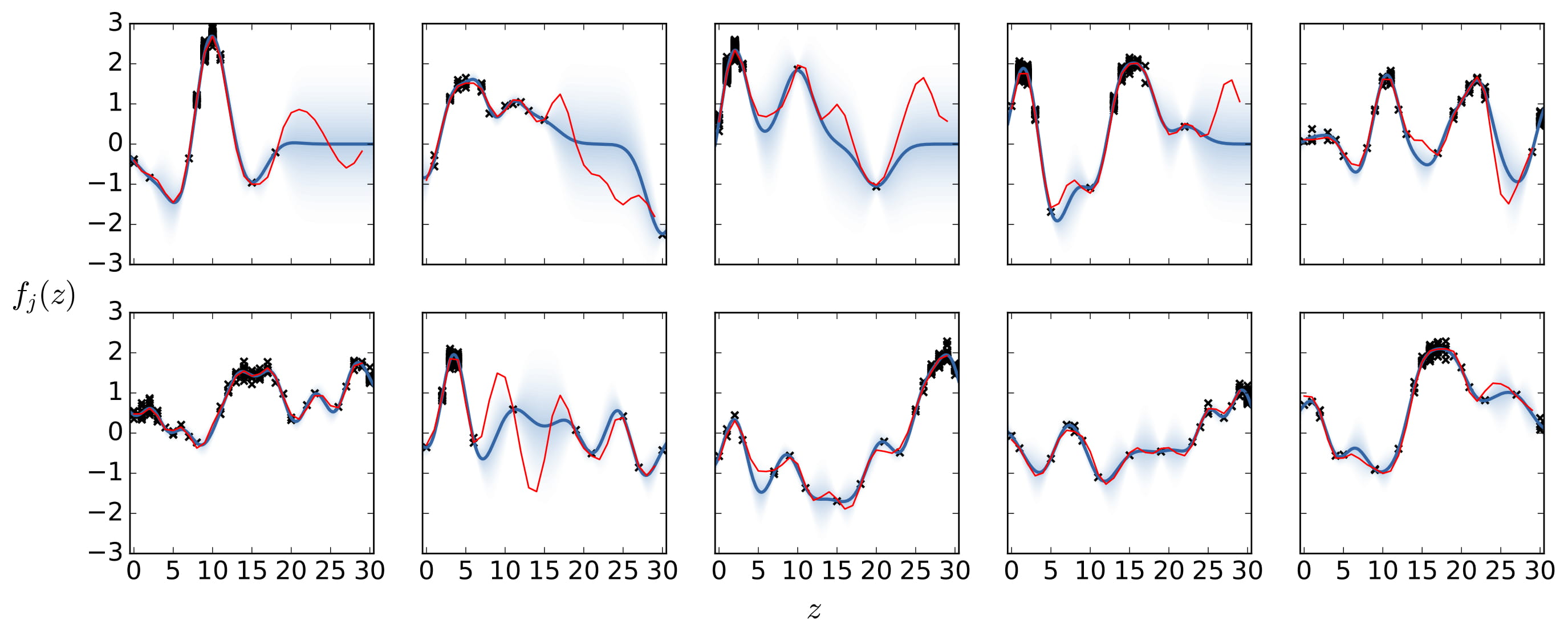}
		 \caption{$d=3$}
	\end{subfigure}
      \caption{$d$RGP-TS for squared exponential kernel with $l=2$}
      \label{fig:gptspost_l2}
\end{figure}

\begin{figure}[h]
	\centering
	\begin{subfigure}{0.8\textwidth}
		  \centering
		 \includegraphics[width=0.9\textwidth]{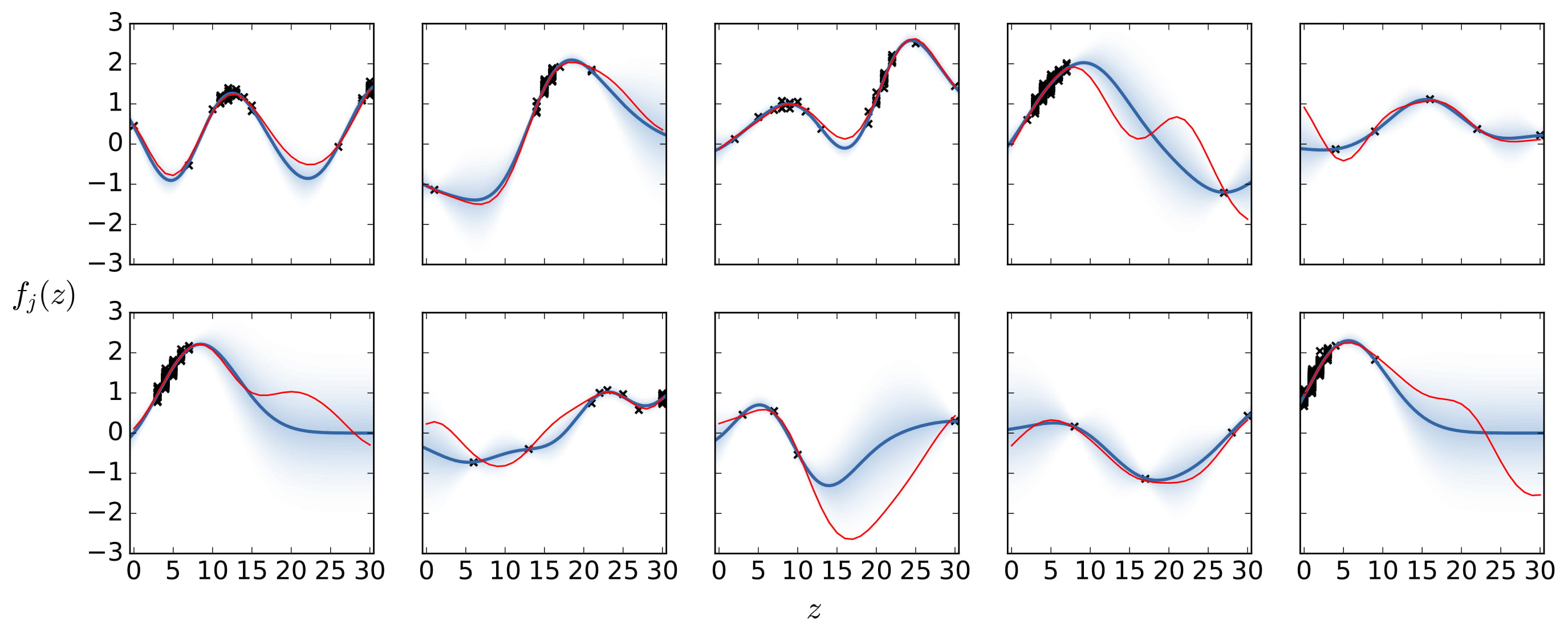}
		 \caption{$d=1$}
	\end{subfigure}
	\par\bigskip
	\begin{subfigure}{0.8\textwidth}
		  \centering
		 \includegraphics[width=0.9\textwidth]{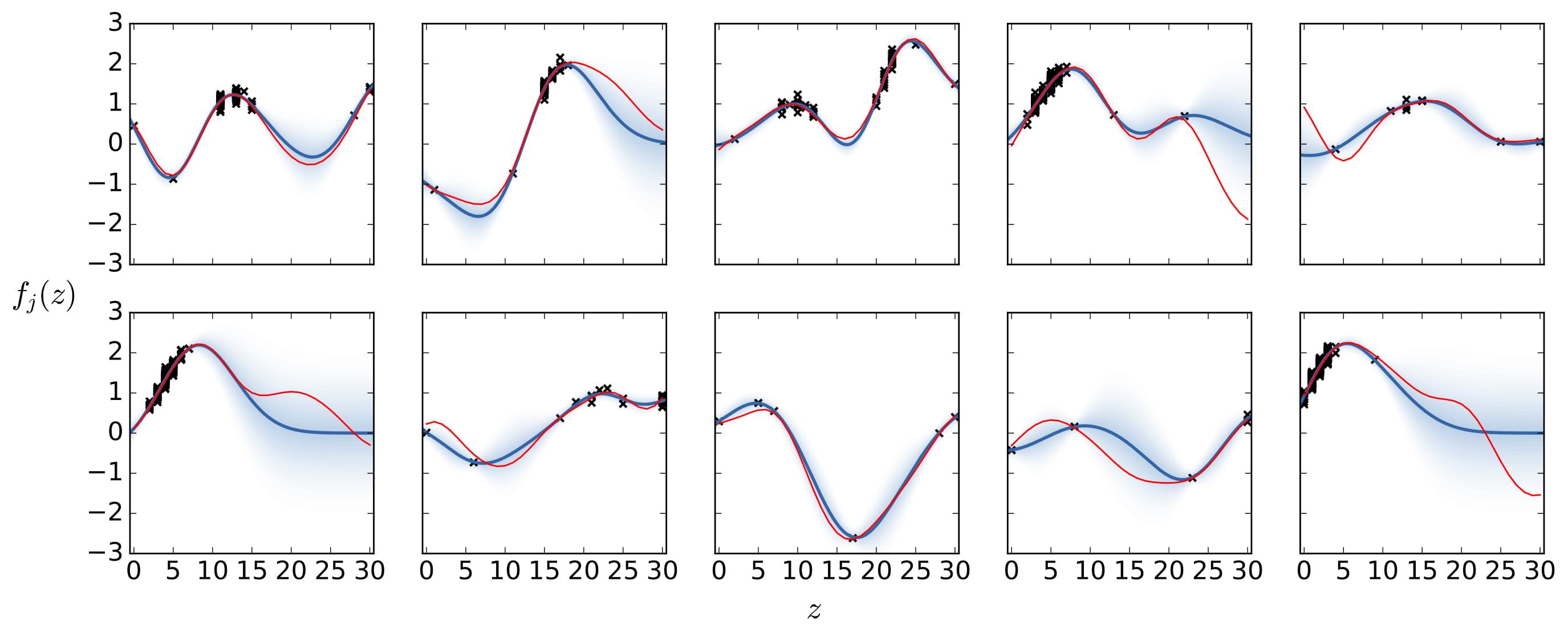}
		 \caption{$d=2$}
	\end{subfigure}
	\par\bigskip
	\begin{subfigure}{0.8\textwidth}
		  \centering
		 \includegraphics[width=0.9\textwidth]{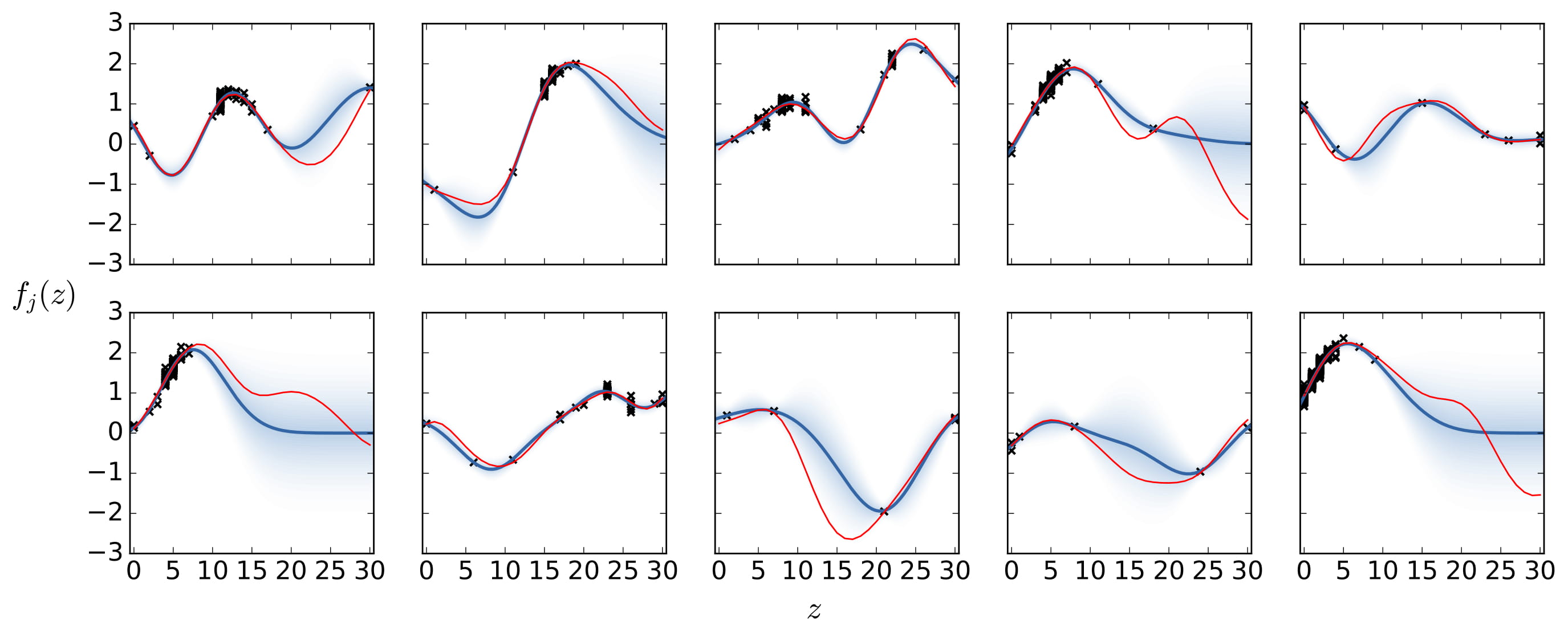}
		 \caption{$d=3$}
	\end{subfigure}
      \caption{$d$RGP-TS wit squared exponential kernel with $l=5$}
      \label{fig:gptspost_l5}
\end{figure}
\FloatBarrier

\if0
\subsection{Sub-sampling the Lookahead Policies} \label{app:sampleplots}
In this section we plot the total reward as a function of the number of policies considered in each lookahead period. For each experiment, we used the same setup as in Section~\ref{sec:exp} and a squared exponential kernel with different lengthscale. For $d=2,3$, we averaged the results over 100 replications and for $d=4$ over 20.
\begin{figure}[!h]
  \centering
    \begin{subfigure}{0.32\textwidth}
    \centering
    \includegraphics[width=0.75\textwidth]{figures/sample_rew_l3_d2_new}
    \caption{Lookahead: $d=2$}
    \end{subfigure}
        \begin{subfigure}{0.32\textwidth}
        \centering
    \includegraphics[width=0.75\textwidth]{figures/sample_rew_l3_d3_new}
        \caption{Lookahead: $d=3$}
    \end{subfigure}
       \begin{subfigure}{0.32\textwidth}
       \centering
    \includegraphics[width=0.75\textwidth]{figures/sample_rew20_l3_d4_new}
        \caption{Lookahead: $d=4$}
    \end{subfigure}
    \caption{The cumulative reward (and confidence regions) from sampling an increasing number of policies for a squared exponential kernel with $l=3$.}
      \label{fig:samplesl3}
\end{figure}

\FloatBarrier

These results show the same general pattern seen in Section~\ref{sec:exp} that increasing the lookahead increases the total reward, and that increasing the number of policies considered per lookahead also increases the total reward, although this increase is plateaus after a point. 
\fi

\subsection{Implementation of RogueUCB-Tuned} \label{app:rogueucb}
We briefly discuss the steps that were taken to map the recovering bandits problem into the setup of \cite{mintz2017nonstationary}. For this, we need to encode the recovery dynamics into a state dynamics function used by \cite{mintz2017nonstationary}. This can trivially be done by defining the functions $h: (\cZ, \cA) \to \cZ$ as $h(\bz, j) = \max\{\bz + {\bf 1}, z_{\max}\} - \max \{z_j+1,z_{\max}\}{\bf e}_j$, where ${\bf 1}$ is the vector of ones, ${\bf e}_j$ is the standard basis vector consisting of all zeros and a 1 in position $j$, and the maximum is taken component wise. As in \cite{mintz2017nonstationary}, we did not implement the RogueUCB algorithm, but rather the empirical version, RogueUCB-Tuned, for which there are no theoretical guarantees. When implementing this, we set the parameter $\eta$ to be the maximal value of the KL-divergence, as in \cite{mintz2017nonstationary}.

\subsection{Values of Theta used in Parametric Experiments} \label{app:thetas}
Here we give the values of $\theta$ (to 3dp) which were used in the logistic and gamma experiments in Section~\ref{sec:exp}. These were sampled uniformly. Note that this sampling had no influence over our choice of kernel.

\subsubsection{Logistic}
\begin{table}[h]
\caption{$\theta$ values used in experiments with logistic recovery functions}
\label{tab:thetalog}
\begin{center}
\begin{tabular}{c | c c c}
\toprule
 & \multicolumn{3}{c}{$\theta$} \\
\midrule
Arm 1 & 0.584 &  0.521 &  12.239 \\
Arm 2 & 0.971 &  0.357 & 10.460 \\
Arm 3 & 0.121 & 0.622 &  25.631 \\
Arm 4 & 0.240 & 0.943 & 18.870 \\
Arm 5 & 0.613 & 0.925 & 20.310 \\
Arm 6 & 0.480 & 0.914 & 1.452 \\
Arm 7 & 0.974 & 0.484 & 10.128 \\
Arm 8 & 0.780 & 0.422 & 0.396 \\
Arm 9 & 0.658 & 0.591 & 23.264 \\
Arm 10 & 0.687 & 0.753 & 7.908 \\
 \bottomrule
\end{tabular}
\end{center}
\end{table}
\FloatBarrier

\subsubsection{Gamma}
\begin{table}[h]
\caption{$\theta$ values used in experiments with gamma recovery functions}
\label{tab:thetagam}
\begin{center}
\begin{tabular}{c | c c c}
\toprule
 & \multicolumn{3}{c}{$\theta$} \\
\midrule
Arm 1 & 2.068 & 0.249 & 0.508 \\
Arm 2 & 5.023 & 0.375 & 0.551 \\
Arm 3 & 3.657 & 0.470 & 0.772 \\
Arm 4 & 0.560 & 0.176 & 0.569 \\
Arm 5 & 3.901 & 0.747 & 0.500 \\
Arm 6 & 0.600 & 0.145 & 0.266 \\
Arm 7 & 6.482 & 0.522 & 0.554 \\
Arm 8 & 13.645 & 0.748 & 0.678 \\
Arm 9 & 7.365 & 0.562 & 0.288 \\
Arm 10 & 2.705 & 0.593 & 0.381 \\
 \bottomrule
\end{tabular}
\end{center}
\end{table}
\FloatBarrier

\subsection{Results for Different Lengthscales}
In this section, we present results for the parametric setting where we have used different lenghtscales for the kernel of the Gaussian process in our methods. The parametric functions that we are considering are quite smooth so we choose a squared exponential kernel and used $l=5$ in the main text, and present results here for $l=2.5$ and $l=7.5$. Note that in this setting looking at the smoothness of the recovery functions to inform a decision about the lengthscale is reasonable since we are comparing our algorithms to RogueUCB-Tuned of \cite{mintz2017nonstationary} which requires knowledge of the parametric family and Lipschitz constant of the recovery function.

The results for $l=2.5$ are shown in Table~\ref{tab:rew_loggammal25} and Figure~\ref{fig:regloggam25}. The results for $l=7.5$ are in Table~\ref{tab:rew_loggammal75} and Figure~\ref{fig:regloggam75}. From these results, we can see that in the Gamma case, our algorithms are almost invariant to the choice of $l$, obtaining similar results for all choices of $l$. In particular, for all three choices of $l$ considered, our algorithms considerably outperform RogueUCB-Tuned of \cite{mintz2017nonstationary}. In the logistic setting, there is slightly more variation in the performance of our algorithms when the lengthscale changes, although the results are still fairly similar. In this case, we see that choosing $l=7.5$ leads to the best results for both of our algorithms. This is most likely due to the fact that logistic functions are quite smooth and $l=7.5$ represents the smoothest GPs we have considered.

\begin{table*}
\caption{Total reward at $T=1000$ for $l=2.5$}
\label{tab:rew_loggammal25}
\begin{center}
\begin{tabular}{c c c c c}
\toprule
Setting & 1RGP-UCB ($l=2.5$) & 1RGP-TS ($l=2.5$) & RogueUCB-Tuned & UCB-Z \\
\midrule
Logistic & 448.6  (441.1,456.6) &  452.5  (443.7,460.3)  & 446.2  (438.2,453.5) & 242.6 (229.6,256.0) \\ 
\rule{0pt}{2ex}    Gamma & 145.1  (138.5, 151.5)  & 155.8  (148.8,162.5) & 132.7 (111.0,144.5) & 116.8 (108.4,125.5)   \\ 
 \bottomrule
\end{tabular}
\end{center}
\end{table*}

\begin{table*}
\caption{Total reward at $T=1000$ for $l=7.5$}
\label{tab:rew_loggammal75}
\begin{center}
\begin{tabular}{c c c c c}
\toprule
Setting & 1RGP-UCB ($l=7.5$) & 1RGP-TS ($l=7.5$) & RogueUCB-Tuned & UCB-Z \\
\midrule
Logistic & 465.1  (457.3,472.9) &  465.1 (457.4,472.7)  & 446.2  (438.2,453.5) & 242.6 (229.6,256.0) \\ 
\rule{0pt}{2ex}    Gamma & 145.2  (139.8, 151.0)  & 155.8  (149.0,162.5) & 132.7 (111.0,144.5) & 116.8 (108.4,125.5)   \\ 
 \bottomrule
\end{tabular}
\end{center}
\end{table*}

\begin{figure}[H]
    \centering
    \begin{subfigure}{0.45\columnwidth}
    	\centering
        	\captionsetup{width=.9\linewidth}%
        \includegraphics[width=0.8\textwidth]{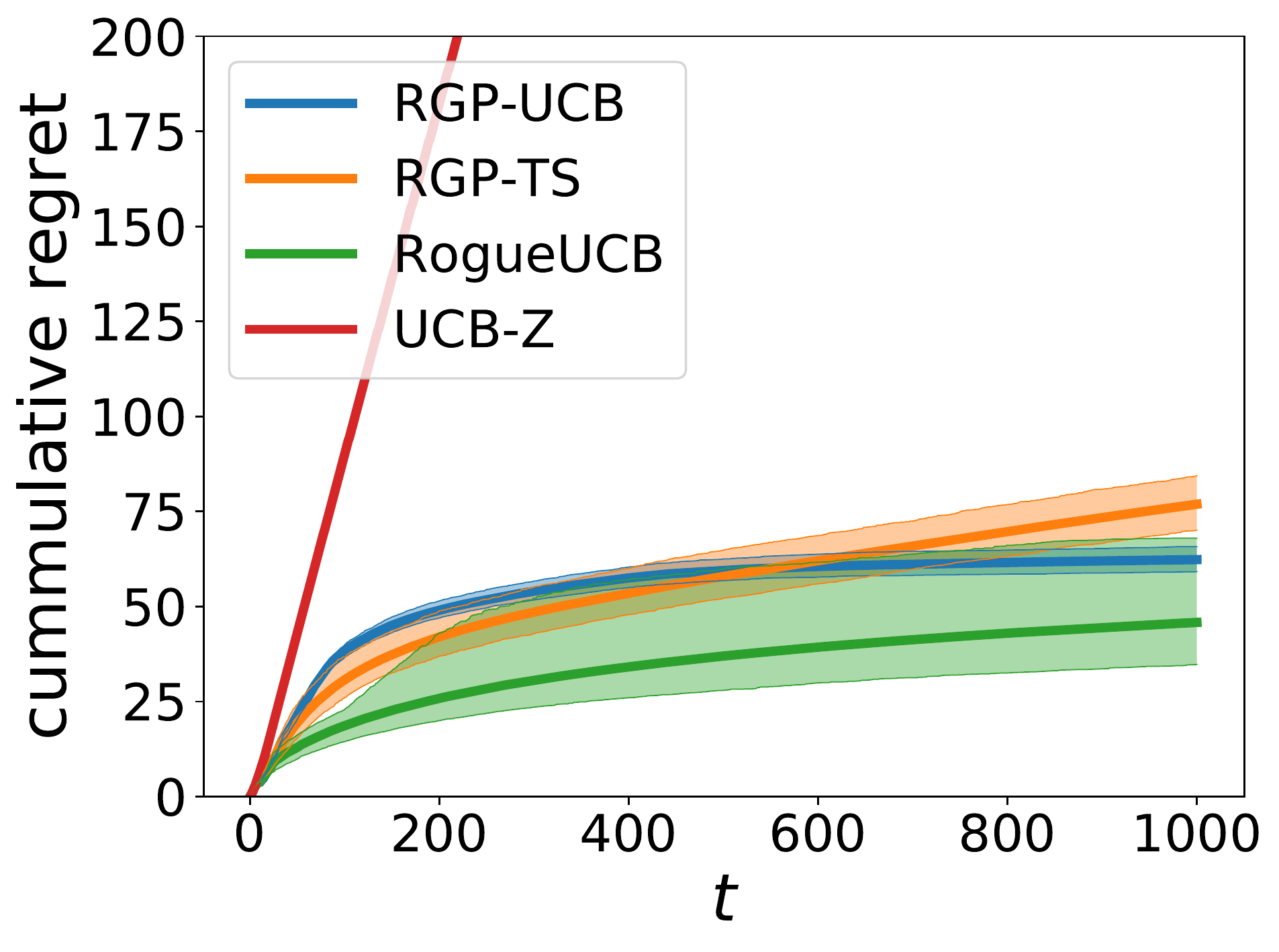}
        \caption{Logistic setup, $l=2.5$}
        \label{fig:reg_logistic}
    \end{subfigure}
    \begin{subfigure}{0.45\columnwidth}
    	\centering
        	\captionsetup{width=.9\linewidth}%
       	\includegraphics[width=0.8\textwidth]{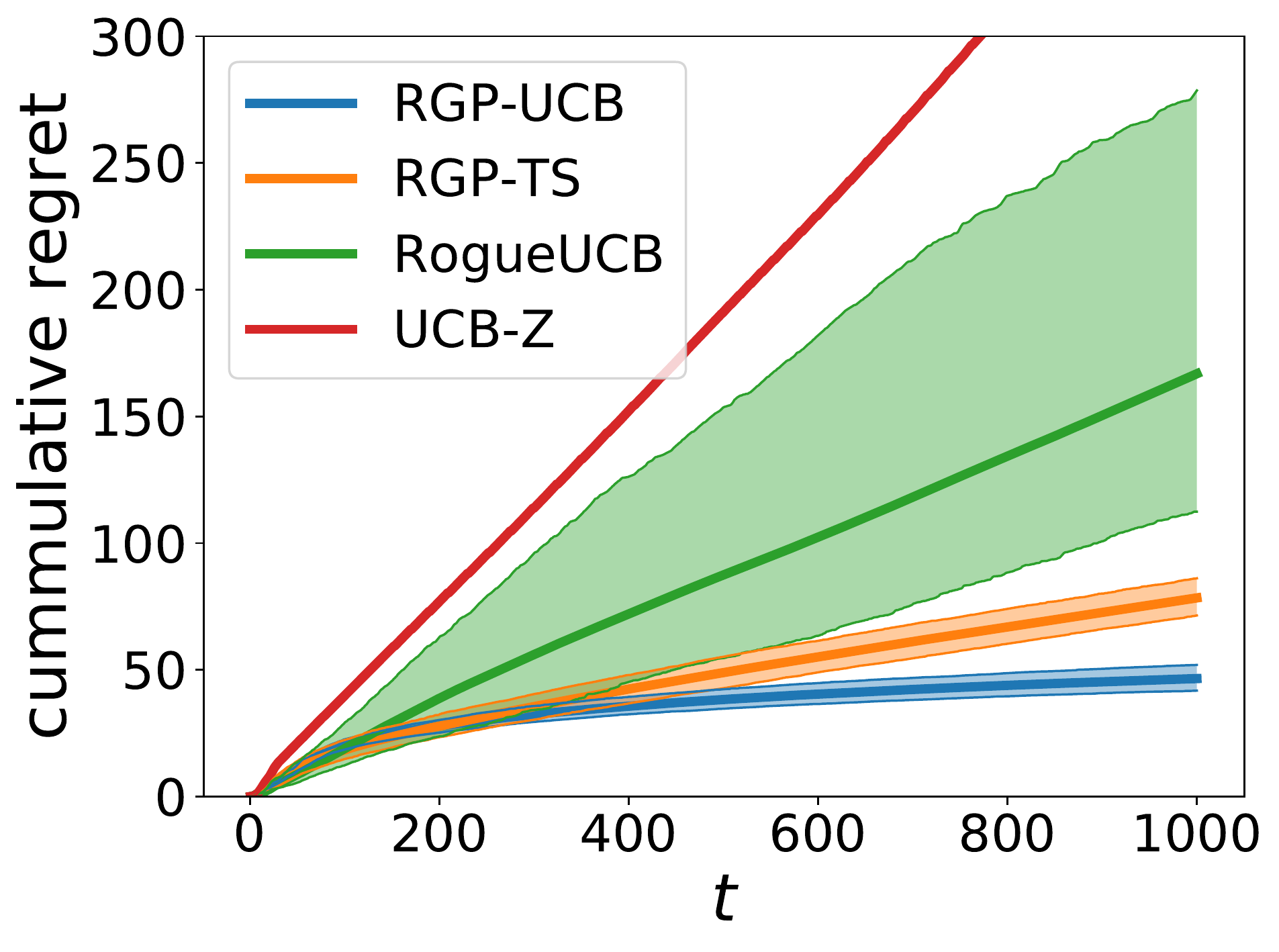}
        \caption{Gamma setup, $l=2.5$}
        \label{fig:reg_gamma}
    \end{subfigure}
    \caption{Cumulative instantaneous regret $l=2.5$} \label{fig:regloggam25}
\end{figure}

\begin{figure}[H]
    \centering
    \begin{subfigure}{0.45\columnwidth}
    	\centering
        	\captionsetup{width=.9\linewidth}%
        \includegraphics[width=0.8\textwidth]{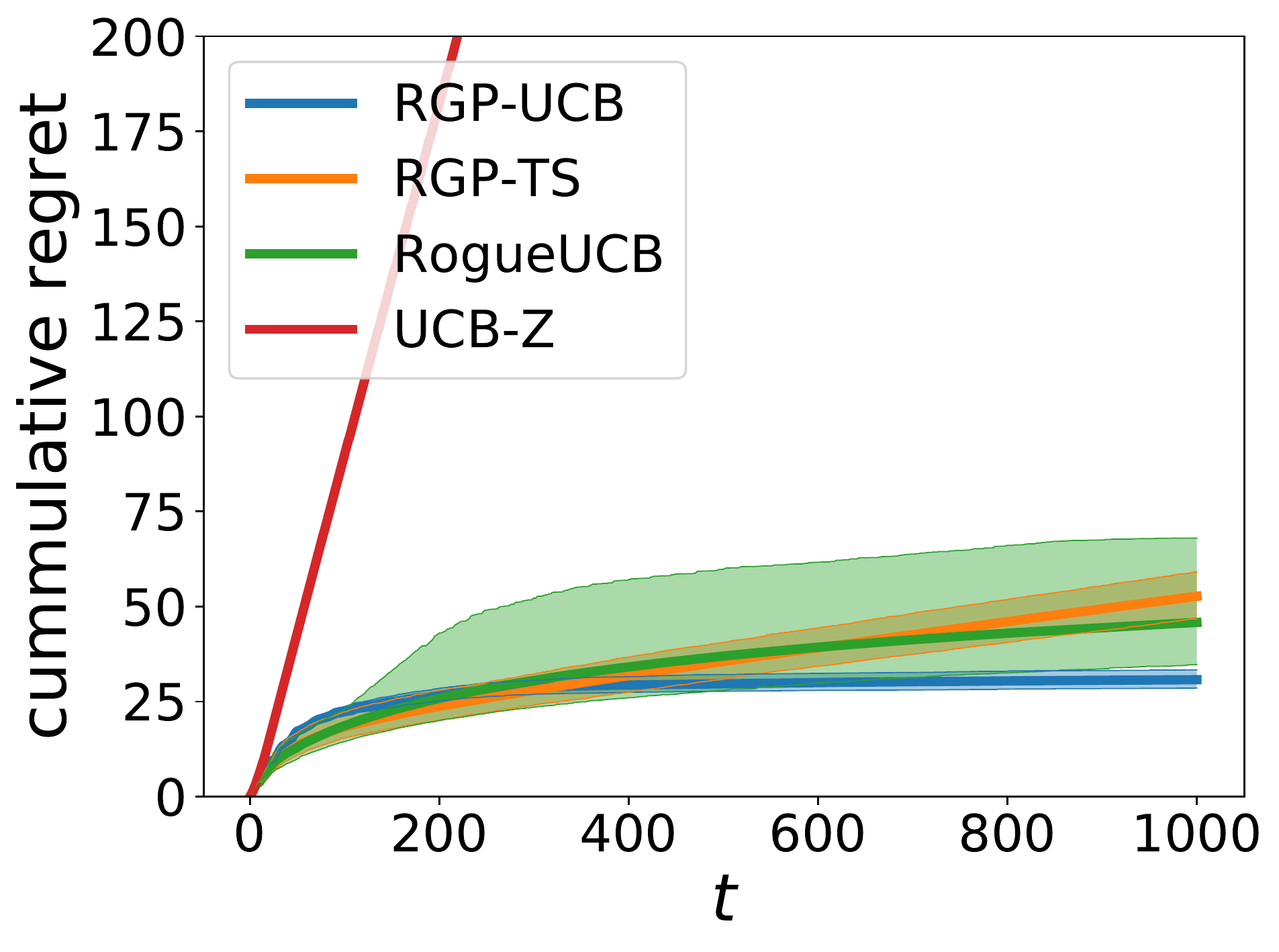}
        \caption{Logistic setup, $l=7.5$}
        \label{fig:reg_logistic}
    \end{subfigure}
    \begin{subfigure}{0.45\columnwidth}
    	\centering
        	\captionsetup{width=.9\linewidth}%
       	\includegraphics[width=0.8\textwidth]{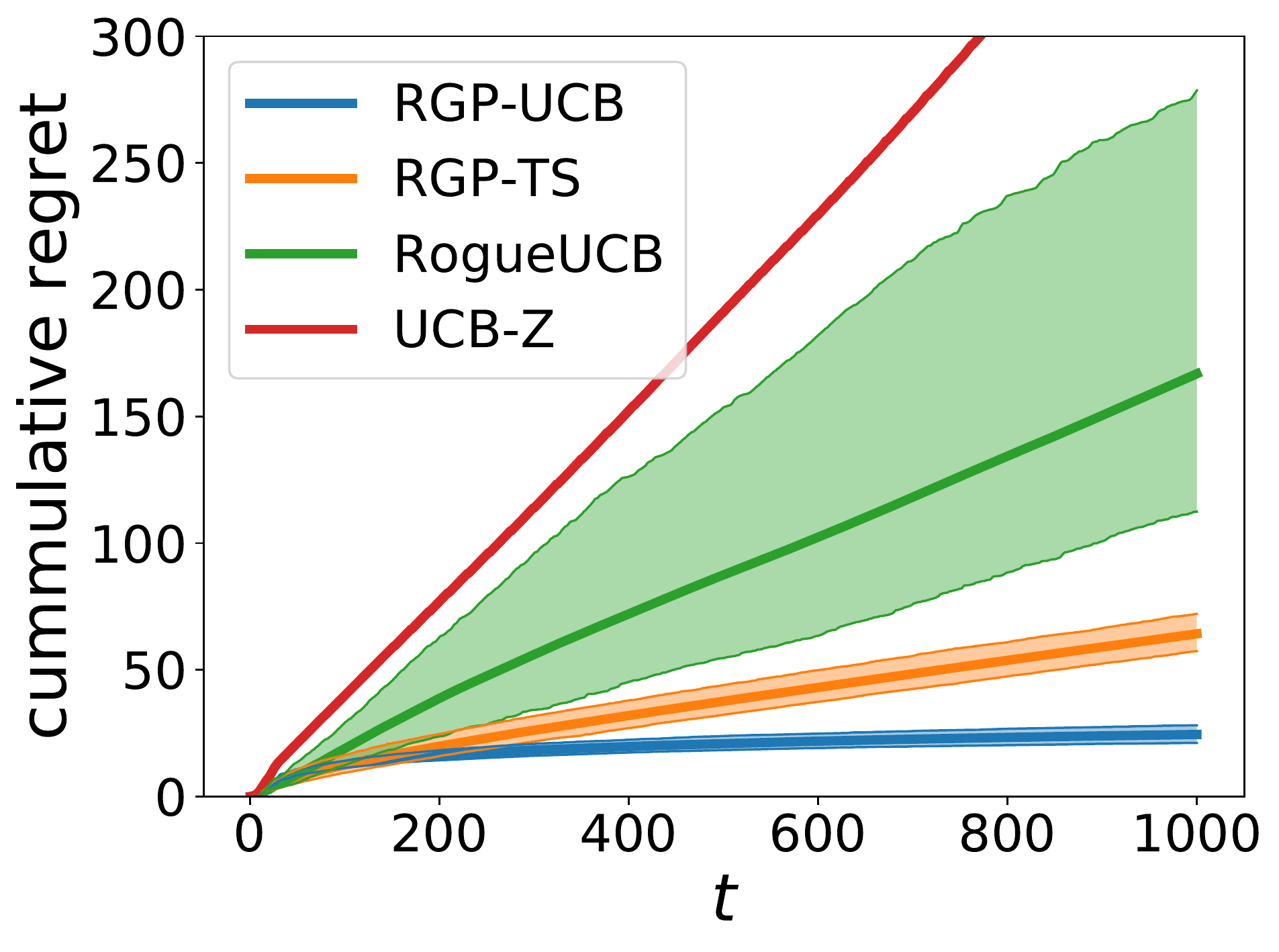}
        \caption{Gamma setup, $l=7.5$}
        \label{fig:reg_gamma}
    \end{subfigure}
    \caption{Cumulative instantaneous regret $l=7.5$} \label{fig:regloggam75}
\end{figure}
\FloatBarrier

\end{document}